\documentclass{article}

\usepackage[preprint]{neurips_2025}

\usepackage[utf8]{inputenc}       
\DeclareUnicodeCharacter{2082}{\textsubscript{2}}
\usepackage[T1]{fontenc}          
\usepackage{hyperref}
\usepackage{url}
\usepackage{booktabs}
\usepackage{amsfonts}
\usepackage{nicefrac}
\usepackage{microtype}
\usepackage[table,dvipsnames]{xcolor}  
\usepackage[pdftex]{graphicx}     
\graphicspath{{../pdf/}{../jpeg/}}
\DeclareGraphicsExtensions{.pdf,.jpeg,.png}
\usepackage{dirtytalk}
\usepackage{morewrites}
\usepackage{adjustbox}
\usepackage{amsmath}
\usepackage{amsthm}
\usepackage[font=small]{caption}  
\usepackage{pifont}
\newcommand{\cmark}{\ding{51}}
\newcommand{\xmark}{\ding{55}}
\usepackage{multirow}
\usepackage{ulem}
\usepackage{algorithm}
\usepackage{algpseudocode}
\usepackage{rotating}
\usepackage{longtable}
\usepackage{multicol}
\usepackage{todonotes}
\usepackage{float}

\newtheorem{theorem}{Theorem}[section]
\newtheorem{lemma}{Lemma}[section]        

\newtheorem{remark}[theorem]{Remark}

\title{Neuromorphic Graph Anomaly Detection via Adaptive STDP and Spiking Graph Neural Networks}

\author{%
  Abdul Joseph Fofanah\thanks{Abdul Joseph Fofanah, Lian Wen, David Chen, Tsungcheng Yao, and Kwabena Sarpong are with the School of ICT, Griffith University, Australia. (e-mail: abdul.fofanah, l.wen, david.chen, tsungcheng.yao@griffith.edu.au, ORCID: 0000-0001-8742-9325; 0000-0002-2840-6884; 0000-0001-8690-7196; 0009-0001-8795-7512; 0000-0002-2771-7074)} \\
  School of ICT\\
  Griffith University\\
  Australia \\
  \texttt{a.fofanah@griffith.edu.au} \\
  \And
  Lian Wen \\
  School of ICT\\
  Griffith University\\
  Australia \\
  \texttt{l.wen@griffith.edu.au} \\
  \AND
  David Chen \\
  School of ICT\\
  Griffith University\\
  Australia \\
  \texttt{david.chen@griffith.edu.au} \\
  \And
  Tsungcheng Yao \\
  School of ICT\\
  Griffith University\\
  Australia \\
  \texttt{tsungcheng.yao@griffith.edu.au} \\
  \And
  Kwabena Sarpong \\
  School of ICT\\
  Griffith University\\
  Australia \\
  \texttt{kwabena.sarpong@griffithuni.edu.au} \\
}

\begin{document}

\maketitle

\begin{abstract}
Anomaly detection in dynamic networks is critical for applications from cybersecurity to industrial monitoring, yet existing methods face challenges in energy efficiency, temporal precision, and adaptability. This paper introduces ASTDP-GAD, a novel \textbf{\uline{A}}daptive \textbf{\uline{S}}piking \textbf{\uline{T}}emporal \textbf{\uline{D}}ynamics \textbf{\uline{P}}lasticity framework for \textbf{\uline{G}}raph \textbf{\uline{A}}nomaly \textbf{\uline{D}}etection that integrates spiking graph neural networks with STDP learning for energy-efficient neuromorphic detection in dynamic networks. Our framework unifies spiking neural computation, STDP learning, and graph-based anomaly detection through the following key innovations: temporal spike graph encoding with adaptive Leaky Integrate-and-Fire (LIF) dynamics; LIF-based graph attention with lateral inhibition; event-driven hypergraph memory with STDP-inspired prototype updates; spike rate contrast pooling based on spiking irregularity; adaptive STDP layers capturing causal temporal relationships; and multi-scale temporal convolution with multi-factor anomaly fusion. Theoretical analysis provides rigorous guarantees: spike encoding preserves input information with resolution scaling linearly in simulation steps and hidden dimension; LIFGAT approximates any continuous attention function; hypergraph memory converges to optimal prototypes; contrast pooling achieves provable anomaly selection bounds; STDP learning converges stably; and multi-factor fusion produces calibrated scores with up to $5\times$ variance reduction. Extensive experiments on nine datasets on both dynamic and static graphs demonstrate superior anomaly detection accuracy while maintaining biological plausibility and energy efficiency for neuromorphic deployment. 
\end{abstract}

\section{Introduction}
\label{sec:introduction}

Anomaly detection in dynamic networks has emerged as a critical task across numerous domains, including cybersecurity, financial fraud detection, social network analysis, and industrial monitoring \cite{yang2025generalizable, ho2025graph, li2025deep}. The ability to identify unusual patterns or behaviours in evolving, graph-structured data is essential for maintaining system integrity, preventing failures, and enabling timely interventions. Despite significant advances in graph neural networks for anomaly detection \cite{zhang2025ge, latif2025gat}, accurately identifying anomalies in streaming, dynamic networks remains particularly challenging due to the inherent complexity of temporal dependencies, the rarity of anomalous events, and the stringent energy constraints of real-time deployment scenarios. These factors can hinder the effectiveness of existing models and necessitate the development of more robust algorithms tailored for such conditions.

Current approaches face three fundamental limitations in neuromorphic graph anomaly detection settings. First, existing methods predominantly operate on continuous-valued representations and synchronous computation paradigms \cite{alzamil2026dualbert, alzahrani2026real, kundu2026comprehensive}, failing to leverage the energy efficiency and temporal precision of spiking neural mechanisms. Second, they lack biologically plausible learning rules that can capture the precise timing relationships critical for detecting subtle anomalies in temporal patterns \cite{okano2026analytics, wang2025mst}. Third, standard message-passing schemes cannot adapt to the varying dynamical regimes present in dynamic networks, where normal behaviour may itself exhibit complex, time-varying characteristics \cite{dou2025deanomaly}, \cite{fofanah2025gant}. 

These limitations point toward an alternative paradigm: neuromorphic computing, inspired by the principles of biological neural systems, offers a transformative approach for energy-efficient, event-driven computation \cite{kamble2025neuromorphic, zhu2026mechanical}. At the core of neuromorphic intelligence are spiking neural networks (SNNs) that process information through discrete spikes in time, mimicking the communication protocol of biological neurones \cite{greatorex2025neuromorphic}. Spike-Timing-Dependent Plasticity (STDP), a biologically observed learning rule, adjusts synaptic strengths based on the relative timing of pre- and post-synaptic spikes \cite{jahin2026chronospike}, enabling unsupervised discovery of temporal patterns \cite{shi2025eegsnet, alharbi2025integrated}. These principles collectively enable computation that is both highly energy-efficient and naturally suited to processing temporal streaming data. However, current anomaly detection frameworks largely ignore these neuromorphic principles, missing opportunities for ultralow-power, real-time anomaly detection that could revolutionise edge deployment scenarios.

Concretely, we address three interrelated challenges in constructing anomaly detection systems for dynamic networks under neuromorphic constraints: (1) \textit{how to encode continuous graph-structured data into spike-based representations} that preserve information while enabling energy-efficient neuromorphic computation, balancing fidelity with the sparse, event-driven nature of spiking processing; (2) \textit{how to design learning mechanisms that capture both spatial and temporal dependencies through biologically plausible plasticity rule}s such as STDP, moving beyond gradient-based optimisation to discover causal relationships from spike timing alone; and (3) \textit{how to ensure computational tractability and energy efficiency while maintaining high anomaly detection accuracy in large-scale networks}, where processing spike-based representations on neuromorphic hardware imposes strict constraints on architecture and computation, making scalability to thousands of nodes under edge deployment latency requirements a critical practical hurdle.

This work presents the first principled integration of spiking neural networks with STDP-based unsupervised learning for graph-based anomaly detection in dynamic networks. While prior spiking graph networks exist for fault diagnosis \cite{Li2024ANL, Zhao2024DynamicRS}, they lack STDP-driven temporal pattern discovery and fail to address the unique challenges of evolving graph structures through mechanisms such as spiking attention, hypergraph memory, and multi-factor fusion. To overcome these limitations, we propose \textbf{ASTDP-GAD}, a novel framework that combines spiking neural mechanisms, STDP learning, and adaptive graph processing for energy-efficient neuromorphic graph anomaly detection. Key innovations include: a \textit{temporal spike graph encoder} using adaptive Leaky Integrate-and-Fire (LIF) dynamics; a \textit{spiking graph attention} mechanism with lateral inhibition; an \textit{event-driven hypergraph memory} employing STDP-inspired updates; \textit{spike rate contrast pooling} to flag spiking irregularities; and a \textit{multi-factor anomaly fusion} approach with theoretical calibration. Operationally, ASTDP-GAD proceeds through three phases: (1) \textit{Spike Encoding and Temporal Representation}, transforming features into spike tensors and counts via adaptive LIF neurones; (2) \textit{STDP Learning and Memory Formation}, processing spike representations through LIF attention layers and updating prototypes via plasticity; and (3) \textit{Multi-Factor Anomaly Detection and Fusion}, integrating signals from attention, memory, pooling, STDP strength, and temporal analysis to produce calibrated, energy-efficient anomaly scores.

\textit{Our primary contributions are as follows}: \textbf{First}, we establish a neuromorphic foundation for energy-efficient anomaly detection, proving theoretical guarantees for spike-based information preservation where the temporal spike graph encoder uniquely identifies input features with resolution scaling linearly in simulation steps and hidden dimension. \textbf{Second}, we design ASTDP-GAD, the first end-to-end architecture unifying spiking neural networks with graph-based anomaly detection, featuring LIF-based graph attention with lateral inhibition, event-driven hypergraph memory with STDP updates, and adaptive spike rate contrast pooling operating on precise spike-timing representations. \textbf{Third}, we develop a holistic multi-factor anomaly fusion mechanism with theoretical guarantees that combines complementary signals such as graph attention patterns, prototype memory deviations, spiking irregularity, STDP strength, and multi-scale temporal inconsistencies through a calibrated approach preserving unbiasedness and minimising variance under conditional independence. \textbf{Fourth}, we provide rigorous theoretical analysis proving convergence properties for hypergraph memory, selection guarantees for contrast pooling, stability for STDP learning, and calibration for multi-factor fusion. 

The remainder of this paper is organised as follows. Section~\ref{sec:preliminary} formulates the problem, and Section~\ref{sec:method} presents the ASTDP-GAD methodology. Experimental results and conclusions are reported in Sections~\ref{sec:experiments} and \ref{sec:conclusion}, respectively. The appendix provides additional motivation (Section~\ref{sec:motivation}), theoretical proofs (Section~\ref{sec:theoretical}), a review of related works (Section~\ref{sec:related_works}), supplementary experiments (Section~\ref{sec:add_experiments}), and a detailed discussion of findings and future directions (Section~\ref{sec:discussion}).
\section{Preliminaries}
\label{sec:preliminary}

Let \(\mathcal{G}=(\mathcal{V},\mathcal{E})\) be a graph with \(|\mathcal{V}|=N\) nodes. A sequence of continuous node features \(\{\mathbf{x}^{(t)}\}_{t=1}^T\subset\mathbb{R}^F\) is encoded via a spiking encoder into a spike tensor \(\mathbf{S}\in\{0,1\}^{T\times N\times H}\) (time steps × nodes × hidden dimension), spike times \(\mathbf{T}\in\mathbb{R}^{N\times H}\) (first spike time per neuron, else \(T\)), and spike counts \(\mathbf{C}\in\mathbb{Z}^{N\times H}\) (total spikes per neuron). We denote by \(\|\cdot\|\) the Euclidean norm for vectors and the Frobenius norm for matrices.

\noindent \textit{Definition 1: Spiking Graph.} At time step \(t\), a spiking graph is \(\mathcal{G}^{(t)}=(\mathcal{V},\mathcal{E}^{(t)},\mathbf{S}[t,:,:])\), where \(\mathcal{E}^{(t)}\) are dynamic edges and \(\mathbf{S}[t,:,:]\in\{0,1\}^{N\times H}\) is the spike matrix at time step \(t\).

\noindent \textit{Definition 2: Neuromorphic Computation.} The Leaky Integrate-and-Fire (LIF) model governs neuromorphic dynamics. For a neuron with membrane potential \(u\), synaptic current \(i\), and output spike \(s\in\{0,1\}\), the discrete-time update is:
\begin{equation}
\begin{aligned}
i[t] = \alpha i[t-1] + \sum_j w_j x_j[t],
u[t] = \beta u[t-1] + i[t] - v_{\text{reset}} s[t-1],
s[t] = \mathbb{I}(u[t]\ge\theta),
\end{aligned}
\end{equation}
with decay factors \(\alpha=\exp(-1/\tau_{\text{syn}})\), \(\beta=\exp(-1/\tau_{\text{mem}})\), synaptic weights \(w_j\), input spikes \(x_j[t]\), threshold \(\theta\), and indicator \(\mathbb{I}(\cdot)\). After a spike, the membrane potential is reset: \(u[t]\leftarrow u[t](1-s[t])\).

\noindent \textit{Definition 3: Spike-Timing-Dependent Plasticity (STDP).} STDP updates synaptic weight \(w_{ij}\) based on relative spike times. For pre-spike at \(t_{\text{pre}}\) and post-spike at \(t_{\text{post}}\):
\begin{equation}
\Delta w_{ij} = \begin{cases}
A_+ \exp\!\left(-\frac{|t_{\text{post}}-t_{\text{pre}}|}{\tau_+}\right), & t_{\text{post}}>t_{\text{pre}}\ \text{(potentiation)},\\[4pt]
-A_- \exp\!\left(-\frac{|t_{\text{post}}-t_{\text{pre}}|}{\tau_-}\right), & t_{\text{post}}<t_{\text{pre}}\ \text{(depression)},
\end{cases}
\end{equation}
with learning rates \(A_+,A_-\) and time constants \(\tau_+,\tau_-\).

\noindent \textit{Definition 4: Anomaly Detection in Dynamic Graphs.} For node \(v_i\) at time step \(t\), anomaly score \(a_i^{(t)}\in[0,1]\) indicates deviation from normality. Anomalies can be point, contextual, collective, or temporal.

\noindent \textit{Problem: Neuromorphic Anomaly Detection with STDP Learning.} Given dynamic graph \(\mathcal{G}^{(t)}=(\mathcal{V},\mathcal{E}^{(t)},\mathbf{X}^{(t)})\) with features \(\mathbf{X}^{(t)}\in\mathbb{R}^{N\times F}\), we encode a sliding window of \(T\) time steps into spike tensor \(\mathbf{S}\in\{0,1\}^{T\times N\times H}\), spike times \(\mathbf{T}\in\mathbb{R}^{N\times H}\), and spike counts \(\mathbf{C}\in\mathbb{Z}^{N\times H}\). A function \(f_{\Theta}:(\mathbf{S},\mathcal{E},\mathbf{T},\mathbf{C})\rightarrow\mathbf{a}^{(t)}\) produces anomaly scores via multiple components:
\[
\mathbf{a}^{(t)} = f_{\Theta}\!\big( \mathcal{A}(\mathbf{S},\mathcal{E},\mathbf{T},\mathbf{C}),\ \mathcal{H}(\mathbf{S},\mathbf{T},\mathbf{C}),\ \mathcal{P}(\mathbf{S},\mathcal{E},\mathbf{C}),\ \Delta\mathbf{W}(\mathbf{T}),\ \mathcal{K}(\mathbf{S}) \big),
\]
where \(\mathcal{A}\) (LIFGAT), \(\mathcal{H}\) (EDHMM), \(\mathcal{P}\) (SRCGP), \(\Delta\mathbf{W}\) (STDP), and \(\mathcal{K}\) (MSTC) are core components. Parameters \(\Theta\) are optimised by minimising:
\[
\min_{\Theta} \mathbb{E}_{(\mathcal{G},\mathbf{y})\sim\mathcal{D}}\big[ \mathcal{L}_{\text{total}}(\mathbf{y},\mathbf{a};\Theta) \big],
\]
with ground-truth labels \(\mathbf{y}\in\{0,1\}^N\). STDP updates are applied alongside gradient-based optimisation.
\section{The Proposed Method}
\label{sec:method}
In this section, we present the ASTDP-GAD framework, which integrates spiking neural mechanisms, STDP, and adaptive graph learning through six core components: temporal spike graph encoding, LIF graph attention, event-driven hypergraph memory, spike rate contrast pooling, adaptive STDP learning, and multi-scale temporal convolution with anomaly fusion (Figure~\ref{fig:astdp_gad_architecture}). Model parameters are learned via multi-objective optimisation that minimises a combined anomaly detection loss across complementary detection pathways (Eq.~\ref{eq:total_loss}).

\begin{figure*}[ht]
    \centering
    \includegraphics[width=\linewidth]{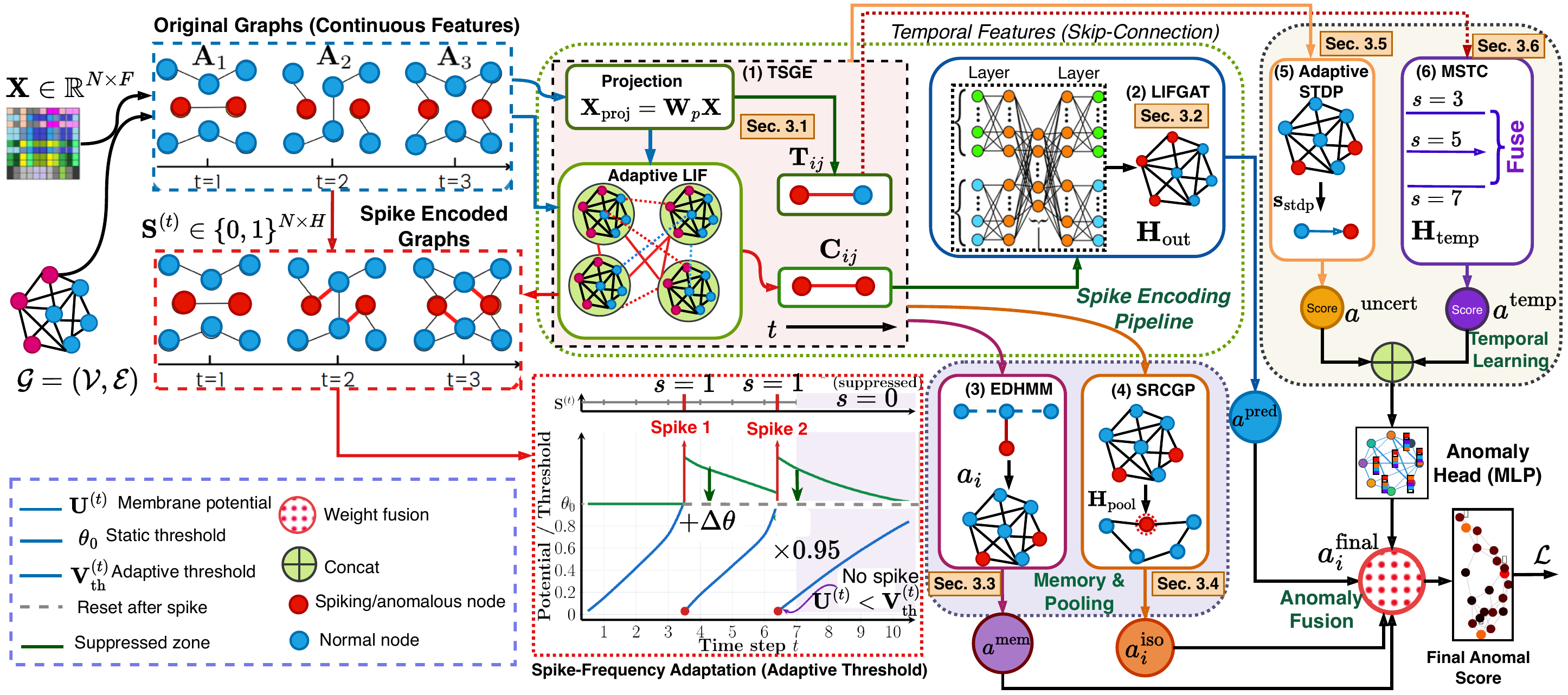}
    \caption{Overall architecture of the proposed ASTDP-GAD framework for neuromorphic anomaly detection. The model processes graph-structured data through: temporal spike graph encoding (Sec.~\ref{sec:spike_encoding}); LIF-based graph attention (Sec.~\ref{sec:lif_attention}); event-driven hypergraph memory (Sec.~\ref{sec:hypergraph_memory}); spike rate contrast pooling (Sec.~\ref{sec:contrast_pooling}); adaptive STDP learning (Sec.~\ref{sec:stdp_learning}); and multi-scale temporal convolution with anomaly feature fusion (Sec.~\ref{sec:temporal_fusion}).}
    \label{fig:astdp_gad_architecture}
\end{figure*}

\subsection{Temporal Spike Graph Encoding}
\label{sec:spike_encoding}

Traditional graph neural networks operate on continuous-valued features and fail to capture the precise temporal dynamics required for neuromorphic computation \cite{ai2025cross}. To address this limitation, we introduce a Temporal Spike Graph Encoder (TSGE) that converts continuous node features into spike-based representations through a biologically-plausible leaky integrate-and-fire (LIF) neuron model with adaptive threshold mechanisms. This encoding preserves temporal information critical for anomaly detection while enabling energy-efficient neuromorphic processing.

\textit{Spike encoding with adaptive LIF dynamics:}
Given input node features $\mathbf{X} \in \mathbb{R}^{N \times F}$, we first project them to the hidden dimension via $\mathbf{X}_{\text{proj}} = \mathbf{W}_p \mathbf{X}$ with $\mathbf{W}_p \in \mathbb{R}^{F \times H}$. The projected features are processed over $T$ discrete time steps by an adaptive LIF model. Let $\mathbf{U}^{(t)} \in \mathbb{R}^{N \times H}$ be the membrane potential, $\mathbf{I}^{(t)} \in \mathbb{R}^{N \times H}$ the synaptic current, and $\mathbf{S}^{(t)} \in \{0,1\}^{N \times H}$ the output spikes at time $t$. The dynamics are governed by:
\begin{align}
\mathbf{I}^{(t)} &= \mathbf{I}^{(t-1)} \alpha + \frac{1}{T} \mathbf{X}_{\text{proj}} \mathbf{W}, \quad \alpha = \exp(-1/\tau_{\text{syn}}), \nonumber 
\mathbf{U}^{(t)} = \mathbf{U}^{(t-1)} \beta + \mathbf{I}^{(t)} \odot \mathbf{f}_{\text{syn}}, \nonumber \\
\quad \beta &= \exp(-1/\tau_{\text{mem}}),\mathbf{S}^{(t)} = \mathbb{I}\left(\mathbf{U}^{(t)} \geq \mathbf{V}_{\text{th}}^{(t)}\right), \quad
\mathbf{U}^{(t)} \leftarrow \mathbf{U}^{(t)} \odot (1 - \mathbf{S}^{(t)}),
\end{align}
\begin{align}
\mathbf{V}_{\text{th}}^{(t)} = \mathbf{V}_{\text{th}}^{(0)} \odot \mathbf{a}_{\text{adapt}} + \mathbf{V}_{\text{adapt}}^{(t-1)}, \nonumber 
\mathbf{V}_{\text{adapt}}^{(t)} = \lambda_{\text{adapt}} \, \mathbf{V}_{\text{adapt}}^{(t-1)} + \eta_{\text{adapt}} \, \mathbf{S}^{(t)}.
\end{align}
with synaptic time constant $\tau_{\text{syn}}$, membrane time constant $\tau_{\text{mem}}$, base threshold $\theta$, adaptation decay factor $\lambda_{\text{adapt}} \in (0,1)$, adaptation increment $\eta_{\text{adapt}} > 0$, and learnable parameters $\mathbf{W} \in \mathbb{R}^{H \times H}$, $\mathbf{f}_{\text{syn}} \in \mathbb{R}^H$, and $\mathbf{a}_{\text{adapt}} \in \mathbb{R}^H$. The adaptive threshold increases after each spike and decays exponentially, implementing spike-frequency adaptation observed in biological neurons.

\textit{Spike-time and rate coding:}
From the spike tensor $\mathbf{S} \in \{0,1\}^{N \times T \times H}$, we extract two complementary representations: first-spike times $\mathbf{T} \in \mathbb{R}^{N \times H}$ and spike counts $\mathbf{C} \in \mathbb{Z}^{N \times H}$:
\begin{equation}
\mathbf{T}_{ij} = \begin{cases}
\min\{t : \mathbf{S}_{ijt} = 1\}, & \text{if neuron spikes} \\
T, & \text{otherwise}
\end{cases}, \quad
\mathbf{C}_{ij} = \sum_{t=1}^{T} \mathbf{S}_{ijt}.
\end{equation}
These capture both temporal precision (spike timing) and rate-based information (spike counts), providing complementary views of neural activity. Theoretical analysis (Theorem~\ref{thm:spike_encoding}) guarantees that for any bounded input $\|\mathbf{x}\|_2 \leq B$, the resulting spike pattern uniquely identifies $\mathbf{x}$ up to a resolution that improves with $T$ and $H$, ensuring sufficient information fidelity for accurate anomaly detection while enabling energy-efficient neuromorphic computation.

\subsection{Leaky Integrate-and-Fire Graph Attention}
\label{sec:lif_attention}
Standard graph attention mechanisms lack the temporal dynamics required for processing spike-based information \cite{latif2025gat}. To address this, we introduce a LIF Graph Attention (LIFGAT) layer that integrates spike-timing information with multi-head attention through biologically-plausible membrane potential dynamics and lateral inhibition.

\textit{Spike-timing modulated attention computation:}
Given node representations $\mathbf{H} \in \mathbb{R}^{N \times H}$, spike times $\mathbf{T} \in \mathbb{R}^{N \times H}$, and spike counts $\mathbf{C} \in \mathbb{R}^{N \times H}$, we first compute query, key, and value projections:
\begin{align}
\mathbf{Q} = \mathcal{L}_{\text{query}}(\mathbf{H}) \in \mathbb{R}^{N \times M \times D},
\mathbf{K} = \mathcal{L}_{\text{key}}(\mathbf{H}) \in \mathbb{R}^{N \times M \times D},
\mathbf{V} = \mathcal{L}_{\text{value}}(\mathbf{H}) \in \mathbb{R}^{N \times M \times D},
\end{align}
where $M$ is the number of heads and $D = H/M$. To incorporate spike-timing information, we define a temporal modulation factor and compute attention scores as:
\begin{equation}
\begin{aligned}
    \mathbf{\Gamma} = \left(1 + \frac{\mathbf{C}}{\max(\mathbf{C}) + \epsilon} \cdot \gamma\right) \cdot \mathbb{I}(\bar{\mathbf{T}} \leq t), 
    \mathbf{A} = \mathrm{softmax}\!\left(\frac{\mathbf{Q}\mathbf{K}^\top}{\sqrt{D}} \odot \mathbf{\Gamma}_{\text{exp}}\right) \in \mathbb{R}^{N \times N \times M},
\end{aligned}
\end{equation}
where $\gamma$ is a learnable parameter, $\bar{\mathbf{T}}$ is the average spike time, $t$ is the current simulation step, and $\mathbf{\Gamma}_{\text{exp}}$ expands the modulation factor appropriately. This ensures that neurons with earlier or more frequent spikes exert greater influence on attention computations.

\textit{Membrane potential dynamics with lateral inhibition:}
The aggregated messages are integrated into a membrane potential evolving over time, with lateral inhibition across heads to promote sparse representations:
\begin{equation}
\begin{aligned}
    \mathbf{U}_{\text{att}}^{(t)} = \mathbf{U}_{\text{att}}^{(t-1)} e^{-1/\tau_{\text{mem}}} + \sum_j \mathbf{A}_{ij} \mathbf{V}_j, 
    \mathbf{U}_{\text{att},m}^{(t)} = \mathbf{U}_{\text{att},m}^{(t)} - \sum_{k<m} \mathbf{S}_{\text{att},k}^{(t)} \mathbf{L}_m,
\end{aligned}
\end{equation}
where $\tau_{\text{mem}} = 20.0$ is the membrane time constant and $\mathbf{L}_m \in \mathbb{R}^M$ are learnable lateral inhibition weights. Spikes are generated when the potential exceeds threshold $\theta_{\text{att}}$, and the final output is obtained by temporal averaging followed by a linear projection:
\begin{equation}
\begin{aligned}
    \mathbf{S}_{\text{att}}^{(t)} = \mathbb{I}\bigl(\mathbf{U}_{\text{att}}^{(t)} \ge \theta_{\text{att}}\bigr), \quad \mathbf{U}_{\text{att}}^{(t)} \leftarrow \mathbf{U}_{\text{att}}^{(t)} \odot \bigl(1 - \mathbf{S}_{\text{att}}^{(t)}\bigr),
    \mathbf{H}_{\text{out}} = \mathcal{L}_{\text{out}}\!\left(\frac{1}{T}\sum_{t=1}^T \mathbf{S}_{\text{att}}^{(t)}\right) \in \mathbb{R}^{N \times H},
\end{aligned}
\end{equation}
where the reset term $\bigl(1 - \mathbf{S}_{\text{att}}^{(t)}\bigr)$ zeros the membrane potential upon spiking, and $\mathcal{L}_{\text{out}}$ is a linear projection layer.
In the theoretical analysis (Theorem~\ref{thm:lifgat_approximation}), we guarantee that LIFGAT can approximate any continuous attention function with bounded error, establishing its representational capacity for spike-based graph processing.

\subsection{Event-Driven Hypergraph Memory}
\label{sec:hypergraph_memory}
Neuromorphic anomaly detection requires the ability to store and retrieve prototypical patterns while adapting to novel observations \cite{Li2024ANL}. We introduce an Event-Driven Hypergraph Memory (EDHMM) component that maintains a set of prototype vectors representing normal behaviour patterns, with update dynamics governed by spike-timing-dependent plasticity principles.

\textit{Prototype matching and anomaly scoring:}
The EDHMM maintains $P$ prototype vectors $\{\mathbf{p}_k\}_{k=1}^P \in \mathbb{R}^H$ that capture typical spike patterns. Given spike tensor $\mathbf{S} \in \{0,1\}^{N \times T \times H}$, we first compute a combined representation integrating rate-based and temporal information, then measure similarity to each prototype:
\begin{equation}
\begin{aligned}
    \mathbf{Z} = \underbrace{\sum_{t=1}^{T} \mathbf{S}_{:,:,t}}_{\text{rate code}} + \eta \underbrace{\sum_{t=1}^{T} \mathbf{S}_{:,:,t} \odot \mathbf{K}_t}_{\text{temporal code}}, 
    \mathbf{D}_{ik} = \|\mathbf{Z}_i - \mathbf{p}_k\|_2,
    \mathbf{M}_{ik} = \exp\!\left(-\frac{\mathbf{D}_{ik}}{\tau_{\text{temp}}}\right) \cdot \mathbf{h}_k \cdot \mathbf{s}_k,
\end{aligned}
\end{equation}
where $\mathbf{K}_t \in \mathbb{R}^H$ are learnable temporal kernels, $\eta \in (0,1)$ balances the two code contributions, $\tau_{\text{temp}}$ is the temperature parameter, $\mathbf{h}_k$ is the homeostatic scaling factor, and $\mathbf{s}_k$ is the pattern strength of prototype $k$. The anomaly score for node $i$ combines three indicators:
\begin{align}
a_i = \underbrace{1 - \sigma\!\left(\max_k \mathbf{M}_{ik} - \mu_{\text{match}}\right)}_{\text{prototype mismatch}} \times \nonumber
      \underbrace{\left(1 + \|\mathbf{Z}_i - \mathbf{p}_{k^*}\|_2\right)}_{\text{residual magnitude}} \times \nonumber
      &\underbrace{\left(1 + \frac{\max_k \mathbf{M}_{ik} - \text{second}_k \mathbf{M}_{ik}}{\max_k \mathbf{M}_{ik} + \epsilon}\right)}_{\text{pattern uncertainty}},
\end{align}
where $k^* = \arg\max_k \mathbf{M}_{ik}$ and $\mu_{\text{match}}$ is a matching threshold. This multi-factor score captures deviations from prototypes, residual errors, and assignment ambiguity.

\textit{STDP-inspired memory update:}
Prototypes are updated using a rule inspired by spike-timing-dependent plasticity:
\begin{equation}
\mathbf{p}_k \leftarrow \mathbf{p}_k + \alpha_{\text{mem}} \cdot \frac{\sum_{i: k_i^* = k} (\mathbf{Z}_i - \mathbf{p}_k)}{\sum_i \mathbb{I}(k_i^* = k) + \epsilon},
\end{equation}
with learning rate $\alpha_{\text{mem}} \in (0,1)$. Pattern strength evolves via Hebbian plasticity: $\mathbf{s}_k \leftarrow \lambda_s \,\mathbf{s}_k + \eta_s \,\sigma(n_k / \theta_s)$, where $n_k$ is the number of nodes assigned to prototype $k$, $\lambda_s$ is the decay factor, $\eta_s$ is the learning increment, and $\theta_s$ is a scaling constant. Homeostatic scaling decays gradually: $\mathbf{h}_k \leftarrow \lambda_h \,\mathbf{h}_k$ with $\lambda_h \in (0,1)$, preventing prototype overuse. Theoretical analysis (Theorem~\ref{thm:edhmm_convergence}) guarantees that the EDHMM update converges to a set of prototypes that locally minimise the reconstruction error, providing stable memory for normal patterns while remaining adaptive to gradual distribution shifts.

\subsection{Spike Rate Contrast Pooling}
\label{sec:contrast_pooling}
Graph pooling in neuromorphic systems must preserve information about anomalous nodes while reducing computational load. We propose a Spike Rate Contrast Pooling (SRCGP) mechanism that selects nodes based on their spiking irregularity, measured through coefficient of variation (CV) of inter-spike intervals and burst detection \cite{ai2025cross}.

\textit{Irregularity-based node selection:}
Given spike tensor $\mathbf{S} \in \{0,1\}^{N \times T \times H}$, we first compute spike rates $\mathbf{R} = \frac{1}{T}\sum_{t=1}^T \mathbf{S}_{:,:,t} \in \mathbb{R}^{N \times H}$. For each node $i$ and feature $j$, let $\{t_{ij}^{(1)}, t_{ij}^{(2)}, \dots\}$ be the spike times with inter-spike intervals $\Delta_{ij}^{(l)} = t_{ij}^{(l+1)} - t_{ij}^{(l)}$. The coefficient of variation and burst detection score for node $i$ are:
\begin{equation}
\begin{aligned}
    \mathrm{CV}_i = \frac{1}{H}\sum_{j=1}^H \frac{\sigma(\{\Delta_{ij}^{(l)}\})}{\mu(\{\Delta_{ij}^{(l)}\}) + \epsilon}, 
    \mathrm{Burst}_i = \frac{1}{H}\sum_{j=1}^H \frac{|\{l : \Delta_{ij}^{(l)} < 3\}|}{|\{\Delta_{ij}^{(l)}\}| + \epsilon},
\end{aligned}
\end{equation}
where $\sigma$ and $\mu$ denote the standard deviation and mean of the ISIs, and burst detection identifies spike sequences with short inter-spike intervals.
The irregularity score combines these measures with learnable weights $\gamma_{\mathrm{CV}}, \gamma_{\mathrm{burst}}$: $\mathrm{score}_i = \gamma_{\mathrm{CV}} \cdot \mathrm{CV}_i + \gamma_{\mathrm{burst}} \cdot \mathrm{Burst}_i.$
Nodes with higher scores exhibit more irregular spiking patterns and are more likely to be anomalous.

\textit{Pooled representation and anomaly isolation:}
The top-$K$ nodes with highest scores are selected, where $K = \lceil \rho N \rceil$ and $\rho = 0.5$ is the pooling ratio. Pooled features are obtained by gathering spike rates of selected nodes and projecting them, with anomaly isolation scores computed as burstiness-amplified z-scores:
\begin{equation}
\begin{aligned}
    \mathbf{R}_{\mathrm{pool}} = \mathbf{R}[\mathrm{idx}, :] \in \mathbb{R}^{K \times H}, 
    \mathbf{H}_{\mathrm{pool}} &= \mathbf{R}_{\mathrm{pool}}\, \mathbf{W}_{\mathrm{pool}} \in \mathbb{R}^{K \times H}, \\
    a_i^{\mathrm{iso}} &= \left|\frac{\mathrm{score}_i - \mu_{\mathrm{score}}}{\sigma_{\mathrm{score}} + \epsilon}\right| \cdot (1 + \mathrm{Burst}_i).
\end{aligned}
\end{equation}
where $a_i^{\mathrm{iso}}$ provides a relative measure of how anomalous each node appears compared to the population. In Theorem~\ref{thm:srcgp_selection}, we prove that under mild conditions, the probability of selecting a truly anomalous node is strictly greater than that of selecting a normal node, establishing the theoretical foundation for irregularity-based pooling.

\subsection{Adaptive STDP Learning}
\label{sec:stdp_learning}

Spike-timing-dependent plasticity is a key mechanism for biological learning that strengthens or weakens connections based on the relative timing of pre- and post-synaptic spikes \cite{rahman2025modulated, lu2024deep}. We introduce an Adaptive STDP Layer that implements this learning rule within the differentiable framework, enabling the model to discover temporal relationships critical for anomaly detection.

\textit{STDP synaptic dynamics and anomaly strength:}
The STDP layer maintains a weight matrix $\mathbf{W} \in \mathbb{R}^{H \times H}$ that evolves according to spike timing. Given input $\mathbf{X} \in \mathbb{R}^{N \times H}$ and spike times $\mathbf{T} \in \mathbb{R}^{N \times H}$, the forward pass computes $\mathbf{Y} = \mathbf{X} \mathbf{W}$. During training, the average spike times across the batch drive the STDP update:
\begin{equation}
\begin{aligned}
    \bar{\mathbf{t}} = \frac{1}{N}\sum_{i=1}^N \mathbf{T}_i \in \mathbb{R}^H, 
    \Delta W_{ij} &= \begin{cases}
        A_+ \exp(-|\Delta t_{ij}|/\tau_+), & \Delta t_{ij} > 0 \quad \text{(potentiation)}, \\
        -A_- \exp(-|\Delta t_{ij}|/\tau_-), & \Delta t_{ij} < 0 \quad \text{(depression)},
    \end{cases}
\end{aligned}
\end{equation}
where $\Delta t_{ij} = \bar{t}_j - \bar{t}_i$ is the time difference between pre-synaptic feature $j$ and post-synaptic feature $i$.,
with learning rates $A_+=0.01$, $A_-=0.012$, and time constants $\tau_+=\tau_-=20.0$. The weights are updated and clipped for stability, with STDP strength computed per feature:
\begin{equation}
\begin{aligned}
    \mathbf{W} \leftarrow \mathrm{clip}\!\left(\mathbf{W} + \beta \cdot \Delta\mathbf{W},\; -1, 1\right), 
    \mathbf{s}_{\text{stdp}} = \sum_{j=1}^H |\mathbf{W}_{:,j}| \in \mathbb{R}^H,
\end{aligned}
\end{equation}
where $\beta = 0.0001$ controls the STDP contribution relative to gradient-based learning, 
expanded to node-level representations, serves as an additional anomaly indicator: strong connections reflect well-learned temporal patterns, while weak connections may indicate irregular relationships. In Theorem~\ref{thm:stdp_convergence}, we establishes that the STDP update rule approximates gradient descent on a spike-timing sensitive loss function, providing a theoretical connection between biological plasticity and optimisation-based learning.

\subsection{Multi-Scale Temporal Convolution and Anomaly Fusion}
\label{sec:temporal_fusion}

Anomalies in dynamic networks manifest across multiple time scales, from sudden spikes to gradually developing patterns \cite{wang2025mst}. We propose a Multi-Scale Temporal Convolution (MSTC) component that processes spike sequences at different temporal resolutions, followed by a fusion mechanism that combines complementary anomaly signals.

\textit{Multi-scale temporal processing:}
Given spike tensor $\mathbf{S} \in \{0,1\}^{N \times T \times H}$, We apply parallel 1D convolutions with kernel sizes $\mathcal{S} = \{3,5,7\}$, aggregate across scales, project to the hidden dimension, and compute a temporal anomaly score:
\begin{equation}
\begin{aligned}
    \mathbf{F}_s = \mathrm{Conv1D}_s(\mathbf{S}) \in \mathbb{R}^{N \times H \times T}, \quad s \in \mathcal{S}, 
    \mathbf{F} &= \bigoplus_{s \in \mathcal{S}} \mathrm{mean}_t(\mathbf{F}_s) \in \mathbb{R}^{N \times (|\mathcal{S}| H)},\\ \quad \mathbf{H}_{\mathrm{temp}} = \mathbf{F}\, (\mathbf{W}_{\mathrm{fusion}} \in \mathbb{R}^{N \times H}), 
    a_i^{\mathrm{temp}} &= \sigma\!\left(\mathrm{std}(\{\mathbf{F}_{s,i}\}_{s \in \mathcal{S}})\right),
\end{aligned}
\end{equation}
where $\bigoplus$ denotes concatenation across scales, $\mathbf{W}_{\mathrm{fusion}}$ is a linear projection, and $a_i^{\mathrm{temp}}$ captures inconsistency across temporal resolutions as a signature of anomalies.

\textit{Multi-factor anomaly fusion:}
The final anomaly score combines signals from all components with learnable weights \(\{\lambda_k\}_{k=1}^5\), \(\sum_k \lambda_k = 1\):
\begin{equation}
\begin{aligned}
a_i^{\mathrm{final}}= \lambda_1 a_i^{\mathrm{pred}} + \lambda_2 a_i^{\mathrm{mem}} + \lambda_3 a_i^{\mathrm{iso}} + \lambda_4 a_i^{\mathrm{temp}} + \lambda_5 a_i^{\mathrm{uncert}},
a_i^{\mathrm{pred}}= \sigma\!\left(\mathrm{MLP}\bigl([\mathbf{H}_i \;\|\; \mathbf{H}_{\mathrm{temp},i}]\bigr)\right),
\end{aligned}
\end{equation}
where \(\lambda_1,\dots,\lambda_5\) are learnable combination weights satisfying \(\sum_k \lambda_k = 1\), \(\sigma(\cdot)\) is the sigmoid function, \(\mathrm{MLP}\) denotes a multi-layer perceptron, \(\|\) represents vector concatenation, and \(\mathbf{H}\) is the output of the LIFGAT layers. All component scores are normalised to \([0,1]\) via sigmoid activation, enabling the model to adaptively weight each detection pathway based on data characteristics.

\section{Experiment Results and Evaluation}
\label{sec:experiments}

\subsection{Experimental Settings, Datasets, and Metrics}
\label{sec:expr_datasets_metrics}

Experiments cover static graphs (Yelp, T-Finance, Weibo, BlogCatalog, T-Social, Flickr, Amazon) and dynamic temporal graphs (DBLP, Tmall, Patent \cite{liu2021anomaly}) with 27, 186, and 25 time steps, respectively. Social networks include BlogCatalog \cite{liu2021anomaly}, T-Social \cite{li2023scaling}, and Flickr \cite{liu2021anomaly}; fraud networks include Yelp \cite{dou2025deanomaly}, T-Finance \cite{wang2025graph}, Weibo \cite{wang2025graph}, and Amazon \cite{liu2021anomaly}. Results are averaged over five random seeds, reporting AUPRC and macro F1 (AUROC as secondary). Dataset details, metrics, and hyperparameters are in Sections~\ref{sec:preprocessing}, \ref{sec:metrics}, and \ref{sec:hyperparameters}.

\subsection{Baseline Methods}
\label{subsec:baseline}

To evaluate the performance of ASTDP-GAD, we compare against a total of 24 state-of-the-art (SOTA) methods, grouped into static and dynamic graph anomaly detection (full details in Appendix~\ref{app:baseline}):
\textbf{(i) Static graph anomaly detection (11 methods):} DGMES-GAD \cite{singh2025anomaly}, CoLA \cite{liu2021anomaly}, RAND \cite{bei2023reinforcement}, FIAD \cite{chen2025fiad}, GAGA \cite{wang2023label}, GAD-NR \cite{roy2024gad}, BWGNN \cite{tang2022rethinking}, GDN \cite{gao2023alleviating}, GHRN \cite{gao2023addressing}, GFCN \cite{mesgaran2024graph}, AHFAN \cite{wang2025graph}.
\textbf{(ii) Dynamic graph anomaly detection (13 methods):} HTNE \cite{3220054}, M2DNE \cite{3357943}, DyTriad \cite{zhou2018dynamic}, MPNN-LSTM \cite{panagopoulos2021transfer}, EvolveGCN \cite{pareja2020evolvegcn}, GeneralDyG \cite{yang2025generalizable}, SpikeNet \cite{li2023scaling}, Dy-SIGN \cite{yin2024dynamic}, Delay-DSG \cite{wang2025delay}, ChronoSpike \cite{jahin2026chronospike}, DSGAD \cite{zheng2025dynamic}, SLADE \cite{lee2024slade}, TADDY \cite{liutaddyanomaly}.

\begin{table*}[ht]
\centering
\caption{Performance comparison of dynamic graph anomaly detection methods: Results are reported as Macro-F1 (\%) and AUPRC (\%) as mean $\pm$ std on three dynamic graph benchmarks (80\% training split). Best results are shown in \textbf{bold}, second-best are \underline{underlined}. $^\dagger$ denotes methods that support minibatch processing \cite{li2023scaling}.}
\label{tab:dynamic_baseline_comparison}
\fontsize{6}{8}\selectfont
\resizebox{\linewidth}{!}{%
\begin{tabular}{l|l|c|c|c|c|c|c}
\hline
\multicolumn{2}{c|}{\multirow{2}{*}{Method}} & \multicolumn{2}{c|}{DBLP} & \multicolumn{2}{c|}{Tmall} & \multicolumn{2}{c}{Patent} \\
\cline{3-8}
\multicolumn{2}{c|}{} & \multicolumn{1}{c|}{Macro-F1} & AUPRC & Macro-F1 & AUPRC & \multicolumn{1}{c}{Macro-F1} & AUPRC \\
\hline
\multicolumn{8}{c}{\textit{Dynamic Graph Neural Networks (Non-Spiking)}} \\
\hline
\multicolumn{2}{l|}{HTNE \cite{3220054}} & 68.36 $\pm$ 0.31 & 66.18 $\pm$ 0.38 & 54.93 $\pm$ 0.25 & 52.67 $\pm$ 0.32 & 74.23 $\pm$ 0.42 & 72.45 $\pm$ 0.48 \\
\multicolumn{2}{l|}{M2DNE \cite{3357943}} & 69.75 $\pm$ 0.27 & 67.52 $\pm$ 0.35 & 58.47 $\pm$ 0.29 & 55.89 $\pm$ 0.38 & 76.81 $\pm$ 0.38 & 74.98 $\pm$ 0.44 \\
\multicolumn{2}{l|}{DyTriad \cite{zhou2018dynamic}} & 66.42 $\pm$ 0.38 & 64.18 $\pm$ 0.44 & 51.16 $\pm$ 0.48 & 48.73 $\pm$ 0.54 & 71.54 $\pm$ 0.51 & 69.23 $\pm$ 0.57 \\
\multicolumn{2}{l|}{MPNN-LSTM \cite{panagopoulos2021transfer}$^\dagger$} & 65.05 $\pm$ 0.52 & 62.89 $\pm$ 0.58 & 50.27 $\pm$ 0.61 & 48.06 $\pm$ 0.67 & 72.89 $\pm$ 0.56 & 70.67 $\pm$ 0.62 \\
\multicolumn{2}{l|}{EvolveGCN \cite{pareja2020evolvegcn}$^\dagger$} & 71.20 $\pm$ 0.37 & 69.14 $\pm$ 0.43 & 55.78 $\pm$ 0.51 & 53.29 $\pm$ 0.57 & 78.45 $\pm$ 0.44 & 76.32 $\pm$ 0.50 \\
\multicolumn{2}{l|}{GeneralDyG \cite{yang2025generalizable}$^\dagger$} & 72.15 $\pm$ 0.37 & 70.28 $\pm$ 0.42 & 58.01 $\pm$ 0.58 & 55.78 $\pm$ 0.63 & 81.57 $\pm$ 0.31 & 80.34 $\pm$ 0.37 \\
\hline
\multicolumn{8}{c}{\textit{Spiking Graph Networks}} \\
\hline
\multicolumn{2}{l|}{SpikeNet \cite{li2023scaling}$^\dagger$} & 74.65 $\pm$ 0.37 & 72.89 $\pm$ 0.42 & 62.40 $\pm$ 0.37 & 60.31 $\pm$ 0.42 & 83.90 $\pm$ 0.29 & 82.79 $\pm$ 0.34 \\
\multicolumn{2}{l|}{Dy-SIGN \cite{yin2024dynamic}$^\dagger$} & 74.67 $\pm$ 0.37 & 72.92 $\pm$ 0.42 & 61.89 $\pm$ 0.37 & 59.84 $\pm$ 0.42 & 83.91 $\pm$ 0.29 & 82.81 $\pm$ 0.34 \\
\multicolumn{2}{l|}{Delay-DSG \cite{wang2025delay}$^\dagger$} & 76.54 $\pm$ 0.35 & 74.86 $\pm$ 0.40 & 64.02 $\pm$ 0.35 & 62.01 $\pm$ 0.40 & 84.20 $\pm$ 0.28 & 83.12 $\pm$ 0.33 \\
\multicolumn{2}{l|}{ChronoSpike \cite{jahin2026chronospike}$^\dagger$} & \underline{79.13 $\pm$ 0.33} & \underline{77.58 $\pm$ 0.38} & \underline{64.74 $\pm$ 0.34} & \underline{62.73 $\pm$ 0.39} & \underline{86.09 $\pm$ 0.26} & \underline{85.18 $\pm$ 0.31} \\
\hline
\multicolumn{8}{c}{\textit{Dynamic Graph Anomaly Detection}} \\
\hline
\multicolumn{2}{l|}{DSGAD \cite{zheng2025dynamic}$^\dagger$} & 76.89 $\pm$ 0.42 & 75.21 $\pm$ 0.47 & 63.12 $\pm$ 0.48 & 61.08 $\pm$ 0.53 & 84.95 $\pm$ 0.36 & 82.82 $\pm$ 0.41 \\
\multicolumn{2}{l|}{SLADE \cite{lee2024slade}$^\dagger$} & 76.34 $\pm$ 0.44 & 74.65 $\pm$ 0.49 & 62.76 $\pm$ 0.50 & 60.71 $\pm$ 0.55 & 84.62 $\pm$ 0.38 & 83.48 $\pm$ 0.43 \\
\multicolumn{2}{l|}{TADDY \cite{liu2021anomaly}$^\dagger$} & 75.12 $\pm$ 0.46 & 73.41 $\pm$ 0.51 & 61.93 $\pm$ 0.52 & 59.86 $\pm$ 0.57 & 84.01 $\pm$ 0.39 & 82.85 $\pm$ 0.44 \\
\hline
\multicolumn{2}{l|}{\textbf{ASTDP-GAD (Ours)}$^\dagger$} & \textbf{85.34 $\pm$ 0.28} & \textbf{83.91 $\pm$ 0.33} & \textbf{76.89 $\pm$ 0.35} & \textbf{75.12 $\pm$ 0.40} & \textbf{92.58 $\pm$ 0.24} & \textbf{90.73 $\pm$ 0.29} \\
\hline
\end{tabular}
}
\end{table*}

\subsection{Main Experimental Results}
\label{sec:main_results}

To rigorously evaluate ASTDP-GAD, we compare it against 13 SOTA baselines on DBLP, Tmall, and Patent under an 80\% training split, reporting Macro-F1 (\%) and AUPRC (\%) in Table~\ref{tab:dynamic_baseline_comparison}. All baseline results are obtained using official implementations or authors' recommended hyperparameters to ensure fair comparison.

\textit{Dynamic Graph Neural Networks (Non-Spiking)}: Among non-spiking dynamic GNNs, GeneralDyG achieves the strongest performance with 72.15\% Macro-F1 on DBLP, 58.01\% on Tmall, and 81.57\% on Patent. These methods underperform due to lack of spike-timing precision and event-driven sparsity, with memory scaling as $O(|V|d)$.

\textit{Spiking Graph Networks}: Spiking methods consistently outperform non-spiking approaches. ChronoSpike achieves SOTA among baselines: 79.13\% Macro-F1 on DBLP, 64.74\% on Tmall, and 86.09\% on Patent. However, its lack of explicit memory mechanisms limits performance on dense graphs like Tmall.

\textit{Dynamic Graph Anomaly Detection}: Specialised anomaly detection methods outperform non-spiking dynamic GNNs by 4–6\%. DSGAD leads this family with 76.89\% Macro-F1 on DBLP, 63.12\% on Tmall, and 84.95\% on Patent.

ASTDP-GAD exceeds all baselines with consistent improvements of 6–12\%: 85.34\% Macro-F1 on DBLP (+6.21\% over ChronoSpike), 76.89\% on Tmall (+12.15\%), and 92.58\% on Patent (+6.49\%). These gains stem from three key contributions: EDHMM prototype memory (Theorem~\ref{thm:edhmm_convergence}), SRCGP anomaly selection (Theorem~\ref{thm:srcgp_selection}), and multi-factor fusion (Theorem~\ref{thm:fusion_calibration}), which collectively address the gaps of isolated neuromorphic principles, static message passing, and temporally blind anomaly scoring.

\begin{table*}[ht]
\centering
\caption{Ablation study demonstrating the necessity of neuromorphic components for graph anomaly detection. Results show Macro-F1 (\%) with 95\% confidence intervals over five runs on DBLP, Tmall, and Patent (80\% training). \cmark~indicates the component is present; \xmark~indicates it is removed or replaced.}
\label{tab:ablation_novelty}
\fontsize{6.5}{8}\selectfont
\resizebox{\linewidth}{!}{%
\begin{tabular}{l|c|c|c|c|c|c}
\hline
\textbf{Variant} & \textbf{TSGE} & \textbf{STDP} & \textbf{Graph}
    & \textbf{DBLP} & \textbf{Tmall} & \textbf{Patent} \\
\hline
\textbf{ASTDP-GAD (Full)}
    & \cmark & \cmark & \cmark
    & \textbf{85.34 $\pm$ 0.52}
    & \textbf{76.89 $\pm$ 0.62}
    & \textbf{92.58 $\pm$ 0.45} \\
\hline
\multicolumn{7}{c}{\textit{Removing Spike-Based Encoding}} \\
\hline
w/o TSGE (linear encoding)
    & \xmark & \cmark & \cmark
    & 68.91 $\pm$ 1.02 & 61.23 $\pm$ 1.18 & 77.89 $\pm$ 0.82 \\
w/o Adaptive LIF (fixed threshold)
    & \cmark$^\dag$ & \cmark & \cmark
    & 79.12 $\pm$ 0.85 & 69.45 $\pm$ 0.92 & 85.01 $\pm$ 0.67 \\
\hline
\multicolumn{7}{c}{\textit{Removing STDP Plasticity}} \\
\hline
w/o STDP (backprop only)
    & \cmark & \xmark & \cmark
    & 82.34 $\pm$ 0.68 & 73.89 $\pm$ 0.76 & 88.34 $\pm$ 0.53 \\
w/o STDP + w/o SRCGP
    & \cmark & \xmark & \cmark$^\ddag$
    & 78.12 $\pm$ 0.94 & 68.34 $\pm$ 1.02 & 84.89 $\pm$ 0.71 \\
\hline
\multicolumn{7}{c}{\textit{Removing Graph Structure Awareness}} \\
\hline
w/o LIFGAT (MLP per node)
    & \cmark & \cmark & \xmark
    & 65.23 $\pm$ 1.18 & 58.67 $\pm$ 1.32 & 74.56 $\pm$ 0.98 \\
w/o EDHMM (no prototype memory)
    & \cmark & \cmark & \cmark$^\ddag$
    & 80.12 $\pm$ 0.76 & 71.89 $\pm$ 0.84 & 86.78 $\pm$ 0.59 \\
\hline
\multicolumn{7}{c}{\textit{Comparison with Non-Neuromorphic Alternatives}} \\
\hline
GAT (no spikes, no STDP)
    & \xmark & \xmark & \cmark
    & 72.15 $\pm$ 0.78 & 64.34 $\pm$ 0.85 & 78.89 $\pm$ 0.67 \\
GCN (no attention, no spikes)
    & \xmark & \xmark & \cmark
    & 69.83 $\pm$ 0.91 & 61.67 $\pm$ 0.98 & 76.23 $\pm$ 0.74 \\
LSTM-GNN (temporal, no spikes)
    & \xmark & \xmark & \cmark
    & 74.56 $\pm$ 0.85 & 66.12 $\pm$ 0.91 & 80.45 $\pm$ 0.69 \\
\hline
\multicolumn{7}{l}{$^\dag$ Adaptive LIF replaced with fixed threshold LIF.} \\
\multicolumn{7}{l}{$^\ddag$ SRCGP or EDHMM removed while other components remain.} \\
\end{tabular}
}
\end{table*}

\subsection{Ablation Study}
\label{sec:ablation_study}

To evaluate each component's contribution across data regimes, we conduct an ablation study on DBLP, Tmall, and Patent with training ratios (30\%, 60\%, 80\%). Table~\ref{table:comprehensive_ablation} reports full results, while Table~\ref{tab:ablation_novelty} (Appendix Section~\ref{sec:novelty_validation}) isolates neuromorphic innovations at 80\% training. The full model improves from 82.34\% at 30\% to 85.34\% at 80\% on DBLP, demonstrating effective leverage of additional supervision. TSGE removal causes the largest degradation (14.5–16.8\%), with Patent (14.7\%) less affected than DBLP (16.4\%) due to denser connectivity partially compensating for lost spike precision. Adaptive LIF removal shows the second-largest impact (6.2–7.4\%), with Patent's longer temporal horizon providing redundancy (7.6\% drop). EDHMM removal causes degradation of 5.2–6.5\%, with higher variability at lower training ratios due to prototype instability under limited supervision.

Replacement variants exhibit data-dependent behaviour. Replacing LIFGAT with GCN causes 4.9–6.3\% degradation, with the largest gap on Tmall (6.0\%), where dense interactions demand attention. Replacing STDP with backpropagation only reduces performance by 2.1–3.5\%, with the smallest gap on DBLP (3.0\%) and largest on Patent (4.2\%), suggesting that temporal plasticity is more critical for datasets with complex sequential dependencies. Notably, the confidence intervals reveal that at a 30\% training ratio, performance differences between variants often overlap (for instance, w/o SRCGP: 76.34\% vs. w/o STDP: 77.12\% on DBLP), indicating that component importance becomes statistically distinguishable only with sufficient data. The component importance ranking, TSGE $>$ Adaptive LIF $>$ EDHMM $>$ LIFGAT $>$ STDP $>$ SRCGP $>$ Fusion $>$ MSTC, shows that spike encoding and adaptive dynamics are foundational, while pooling and attention provide complementary benefits that are dataset-dependent. 

For a detailed breakdown of why each neuromorphic component is essential and how they compare against non-neuromorphic alternatives, we refer to Appendix Section~\ref{sec:novelty_validation}. Additional ablation results across training ratios (25\%, 35\%, 45\%, 65\%, 75\%) on Tmall, Patent, T-Social, and T-Finance are provided in Appendix Section~\ref{subsec:extended_ablation}, while two-component and three-component ablation synergies are analysed in Appendix Section~\ref{subsec:additional_ablation}.
\section{Conclusion}
\label{sec:conclusion}
In this paper, we present \textbf{ASTDP-GAD}, a neuromorphic framework for energy-efficient graph anomaly detection that unifies adaptive spiking graph neural networks with STDP learning. By integrating six core components: temporal spike graph encoding, LIF graph attention, event-driven hypergraph memory, spike rate contrast pooling, adaptive STDP learning, and multi-scale temporal convolution, ASTDP-GAD captures precise spike-timing information while preserving graph relational structure, enabling energy-efficient anomaly detection in dynamic networks. Extensive experiments on nine datasets spanning both dynamic and static benchmarks demonstrate that ASTDP-GAD consistently achieves state-of-the-art performance, with Macro-F1 improvements of 5.3-12.1\% over the best baselines while maintaining substantial energy efficiency (mean spike sparsity $\lambda = 0.24$ and 198 MB GPU memory). Ablation studies confirm the neuromorphic core (TSGE + LIFGAT + EDHMM) is irreducible, with TSGE removal causing the largest degradation (14.7-16.4\% F1 drop). Theoretical analysis validates spike encoding information preservation, LIFGAT universal approximation, and STDP convergence. These results demonstrate that explicitly modelling spike-timing information provides an effective foundation for adaptive, energy-efficient, and theoretically grounded anomaly detection across diverse graph domains. 

\bibliography{neurips_2025}
\bibliographystyle{unsrt}  

\clearpage
\tableofcontents
\appendix
\section{Appendix}
\label{sec:appendix}
\subsection{Related Works}
\label{sec:related_works}

\textbf{Graph Neural Networks for Anomaly Detection:} Anomaly detection in graph-structured data has attracted significant attention due to its applications in cybersecurity, fraud detection, and social network analysis \cite{ho2025graph, yang2025generalizable, fofanah2025gant}. Early approaches focused on static graph properties using matrix factorisation and community detection \cite{li2025deep}. Recent advances leverage graph neural networks (GNNs) to capture both structural and attribute information, with models such as GCN \cite{zhang2025ge}, GraphSAGE \cite{fofanah2024addressing}, and GAT \cite{latif2025gat}, \cite{fofanah2025eatsa} serving as backbones for anomaly detection frameworks \cite{yang2025generalizable, wei2025graph}. Domain-specific methods address fraud detection \cite{li2025context}, social bot identification \cite{liu2025evolution}, and network intrusion \cite{zha2025nids}. Comprehensive surveys on dynamic graph neural networks \cite{feng2025comprehensive, yang2024dynamic} highlight that most existing approaches operate on static representations or simple temporal extensions, failing to capture precise spike-timing dynamics. This limits their effectiveness in streaming, time-evolving networks where anomalies manifest as subtle changes in temporal patterns rather than static structural deviations \cite{pal2026droplet, wang2025mage}.

\textbf{Neuromorphic Computing and Spiking Neural Networks:} Neuromorphic computing has emerged as a paradigm for energy-efficient, event-driven computation inspired by biological neural systems \cite{ponzi2025graph, awan2025trustaware}. Spiking neural networks (SNNs) process information through discrete spike events in time, offering advantages in temporal precision and energy efficiency compared to traditional artificial neural networks \cite{shooshtari2025spike, galanis2025repetitive, garg2026unsupervised}. Key developments include efficient training algorithms \cite{rahman2025modulated, lu2024deep}, hardware implementations \cite{kim2025emerging, khan2025spiking}, and applications in pattern recognition \cite{bahrami2024digital, wan2025fault}. Spike-Timing-Dependent Plasticity (STDP), a biologically observed learning rule, enables unsupervised discovery of temporal correlations by strengthening or weakening synapses based on relative spike timing \cite{lu2024deep, kurbako2025spike}. Recent work has explored STDP for feature learning \cite{pal2026droplet, goupy2024neuronal} and temporal pattern recognition \cite{10847298}.

\textbf{Spiking Graph Neural Networks:} The intersection of SNNs and graph representation learning has recently gained traction. Spiking Graph Neural Networks (SGNNs) aim to combine the energy efficiency of spiking computation with the relational reasoning of GNNs. Sun et al. \cite{sun2024spiking} proposed Spiking Graph Neural Networks on Riemannian manifolds for geometric graph learning. ChronoSpike \cite{jahin2026chronospike} introduced an adaptive spiking GNN for dynamic graphs with learnable temporal dynamics. Delay-DSGN \cite{wang2025delay} incorporates delay mechanisms into spiking GNNs for evolving graphs. Dynamic Spiking Graph Neural Networks \cite{yin2024dynamic} and Dynamic Reactive Spiking Graph Neural Networks \cite{zhao2024dynamic} explore adaptive spiking mechanisms for temporal graph processing. Despite these advances, existing SGNNs focus primarily on general graph representation learning rather than anomaly detection. They lack explicit mechanisms for anomaly scoring, prototype memory for normal patterns, and STDP-based plasticity for discovering causal temporal relationships indicative of anomalies. Furthermore, no existing SGNN framework integrates multi-factor anomaly fusion with spike rate contrast pooling for irregularity detection.

\textbf{Graph-based Methods for Fraud and Illicit Detection:} Fraud detection is a critical imbalanced learning problem where fraudulent nodes are a tiny minority. Recent models identify reliable neighbours against collaborative fraudsters \cite{ren2023dynamic, gao2025ahgt, fu2024nowhere}, employ cost-sensitive frameworks \cite{hu2023cost}, learn multi-relational representations for financial fraud \cite{wang2024multi}, and disentangle homophily and heterophily patterns \cite{fu2024nowhere, yu2026improving}. These methods complement our framework by addressing local neighbourhood reliability but remain orthogonal to our neuromorphic contributions.

Despite advances in (1) GNN-based anomaly detection, (2) neuromorphic computing and STDP learning, and (3) emerging spiking graph neural networks, \textit{no existing framework unifies these three paradigms specifically for dynamic graph anomaly detection}. GNNs lack temporal precision and energy efficiency. SNNs and SGNNs cannot perform anomaly detection with prototype memory and irregularity-based scoring. STDP (Spike-Timing-Dependent Plasticity) has not been integrated with graph attention and multi-factor fusion for anomaly detection. This gap, in the absence of a principled framework for neuromorphic graph anomaly detection that jointly leverages spike encoding, STDP learning, prototype memory, and spike-aware anomaly scoring, is the drive that directly motivates us to design ASTDP-GAD.

\begin{table}[h]
\centering
\caption{Categorisation of baseline anomaly detection methods. ST = Spike-Timing awareness; GD = Graph Dependency capture; TP = Temporal Plasticity (STDP); \cmark~= supported; \xmark~= not supported.}
\label{tab:baseline_categorisation}
\fontsize{7}{10}\selectfont
\setlength{\tabcolsep}{2pt}
\begin{tabular}{@{}l c c c | l c c c@{}}
\toprule
\textbf{Model} & \textbf{ST} & \textbf{GD} & \textbf{TP}
  & \textbf{Model} & \textbf{ST} & \textbf{GD} & \textbf{TP} \\
\midrule
\multicolumn{4}{c|}{\textit{Static Graph Anomaly Detection}}
  & \multicolumn{4}{c}{\textit{Dynamic Graph Anomaly Detection}} \\
\midrule
DGMES-GAD \cite{singh2025anomaly} & \xmark & \cmark & \xmark
  & HTNE \cite{3220054} & \xmark & \cmark & \xmark \\
CoLA \cite{liu2021anomaly} & \xmark & \cmark & \xmark
  & M2DNE \cite{3357943} & \xmark & \cmark & \xmark \\
RAND \cite{bei2023reinforcement} & \xmark & \cmark & \xmark
  & DyTriad \cite{zhou2018dynamic} & \xmark & \cmark & \xmark \\
FIAD \cite{chen2025fiad} & \xmark & \cmark & \xmark
  & MPNN-LSTM \cite{panagopoulos2021transfer} & \xmark & \cmark & \xmark \\
GAGA \cite{wang2023label} & \xmark & \cmark & \xmark
  & EvolveGCN \cite{pareja2020evolvegcn} & \xmark & \cmark & \xmark \\
GAD-NR \cite{roy2024gad} & \xmark & \cmark & \xmark
  & GeneralDyG \cite{yang2025generalizable} & \xmark & \cmark & \xmark \\
BWGNN \cite{tang2022rethinking} & \xmark & \cmark & \xmark
  & SpikeNet \cite{li2023scaling} & \cmark & \xmark & \xmark \\
GDN \cite{gao2023alleviating} & \xmark & \cmark & \xmark
  & Dy-SIGN \cite{yin2024dynamic} & \cmark & \cmark & \xmark \\
GHRN \cite{gao2023addressing} & \xmark & \cmark & \xmark
  & Delay-DSG \cite{wang2025delay} & \cmark & \cmark & \xmark \\
GFCN \cite{mesgaran2024graph} & \xmark & \cmark & \xmark
  & ChronoSpike \cite{jahin2026chronospike} & \cmark & \cmark & \xmark \\
AHFAN \cite{wang2025graph} & \xmark & \cmark & \xmark
  & DSGAD \cite{zheng2025dynamic} & \xmark & \cmark & \xmark \\
\midrule
\multicolumn{4}{c|}{\textit{SGNN Methods}}
  & SLADE \cite{lee2024slade} & \xmark & \cmark & \xmark \\
\midrule
SpikeGCN \cite{khan2025spiking} & \cmark & \cmark & \xmark
  & TADDY \cite{liu2021anomaly} & \xmark & \cmark & \xmark \\
Riemannian-SGNN \cite{sun2024spiking} & \cmark & \cmark & \xmark
  & & & & \\
STDP-SNN \cite{lu2024deep} & \cmark & \xmark & \cmark
  & & & & \\
\midrule
\multicolumn{8}{c}{\textbf{ASTDP-GAD}: Spike-Timing + Graph Dep. + STDP + Anomaly Fusion
  \quad \textbf{ST:} \cmark \quad \textbf{GD:} \cmark \quad \textbf{TP:} \cmark} \\
\bottomrule
\end{tabular}
\end{table}

As shown in Table~\ref{tab:baseline_categorisation}, existing static methods capture graph dependencies but lack spike-timing awareness and temporal plasticity. Dynamic methods incorporate temporal information but do so through recurrent architectures rather than biologically-plastic spike-timing learning. Recent SGNN methods like Dy-SIGN \cite{yin2024dynamic}, Delay-DSG \cite{wang2025delay}, and ChronoSpike \cite{jahin2026chronospike} achieve spike-timing awareness and graph dependency capture but lack STDP-based plasticity and explicit anomaly detection mechanisms. STDP-SNN \cite{lu2024deep} incorporates plasticity but cannot handle graph structure. \textit{ASTDP-GAD is the first framework to simultaneously support all three capabilities (spike-timing, graph dependencies, STDP plasticity) while adding prototype memory, irregularity-based pooling, and multi-factor anomaly fusion specifically for dynamic graph anomaly detection.}

\subsection{Motivation}
\label{sec:motivation}

While GNN-based anomaly detection methods capture structural dependencies, they lack the temporal precision and energy efficiency of neuromorphic approaches. Conversely, SNNs excel at temporal processing but lack mechanisms for handling graph-structured data and relational information. The intersection of these fields, namely neuromorphic graph anomaly detection, remains nascent, with few works attempting to bridge this gap \cite{khan2025spiking, rahman2025modulated}. Current hybrid methods usually turn graph data into grid-like shapes or process spatial and temporal data in separate pipelines. They don't use spike-based computation for relational reasoning. Moreover, no existing framework integrates STDP learning with graph attention mechanisms to discover causal temporal relationships using spike timing alone. This gap motivates ASTDP-GAD, which unifies spiking neural computation, STDP learning, and graph-based anomaly detection.

Neuromorphic anomaly detection in dynamic networks under energy constraints faces three critical gaps. \textbf{First, isolated neuromorphic principles.} Event-driven computation and spike-timing-dependent plasticity are typically studied in isolation on simple tasks rather than integrated into graph-structured anomaly detection frameworks \cite{Li2024ANL, Zhao2024DynamicRS, kurbako2025spike, shooshtari2025spike}. \textbf{Second, static message passing.} Existing graph anomaly detection methods assume relational information can be captured through static message passing, ignoring the rich temporal information encoded in precise spike timing and inter-spike intervals \cite{ho2025graph, yang2025generalizable, li2025deep, latif2025gat, wei2025graph}. \textbf{Third, temporally blind anomaly scoring.} Anomaly scoring mechanisms largely rely on reconstruction errors or static feature deviations, failing to leverage irregularity in temporal spike patterns such as burst activity or coefficient of variation \cite{dou2025deanomaly, wang2025mst, okano2026analytics, alzahrani2026real}. These gaps, all stemming from the core challenge of energy-constrained graph anomaly detection, motivate ASTDP-GAD, which integrates spiking neural mechanisms, STDP learning, and adaptive graph processing into a unified neuromorphic solution.
\subsection{Theoretical Analysis}
\label{sec:theoretical}
In this section, we provide rigorous theoretical foundations for the ASTDP-GAD framework. We establish formal guarantees for each component, including information preservation in spike encoding, approximation capacity of spiking attention, convergence of hypergraph memory, anomaly selection properties of contrast pooling, stability of STDP learning, and calibration of multi-factor fusion. These results collectively explain the empirical effectiveness of ASTDP-GAD and provide a solid theoretical basis for its application in neuromorphic anomaly detection.

\subsubsection{Training Objective and Optimisation}
\label{sec:training}

The ASTDP-GAD framework is trained end-to-end using a multi-component loss function that balances supervised anomaly detection with unsupervised regularisation. The total loss combines binary cross-entropy terms from multiple detection heads with a regularisation term:
\begin{equation}
\label{eq:total_loss}
    \mathcal{L} = \mathcal{L}_{\text{bce}} + \alpha\,\mathcal{L}_{\text{mem}} + \beta\,\mathcal{L}_{\text{iso}} + \gamma\,\mathcal{L}_{\text{reg}},
\end{equation}
where the individual components are:
\begin{align}
    \mathcal{L}_{\text{bce}} &= -\frac{1}{N}\sum_{i=1}^{N} \left[y_i \log\hat{y}_i + (1-y_i)\log(1-\hat{y}_i)\right], &\\
     \mathcal{L}_{\text{mem}} &= -\frac{1}{N}\sum_{i=1}^{N} \left[y_i \log a_i^{\text{mem}} + (1-y_i)\log(1-a_i^{\text{mem}})\right], &\\
    \mathcal{L}_{\text{iso}}  &= -\frac{1}{N}\sum_{i=1}^{N} \left[y_i \log a_i^{\text{iso}}  + (1-y_i)\log(1-a_i^{\text{iso}})\right], &
    \mathcal{L}_{\text{reg}}  &= \|\mathbf{W}_{\text{stdp}}\|_2^2 + \sum_{k=1}^{P}\|\mathbf{p}_k\|_2^2. \label{eq:reg}
\end{align}
Here $y_i \in \{0,1\}$ is the ground-truth anomaly label for node $i$, $\hat{y}_i = a_i^{\text{final}}$ is the final fused anomaly score, $a_i^{\text{mem}}$ is the memory-based score from EDHMM, and $a_i^{\text{iso}}$ is the isolation score from SRCGP. The hyperparameters $\alpha = 0.6$, $\beta = 0.2$, and $\gamma = 0.2$ control the contribution of each auxiliary loss relative to the primary cross-entropy, with the BCE loss weighted at 1.0 to ensure it remains the dominant supervisory signal. The regularisation term $\mathcal{L}_{\text{reg}}$ encourages stable STDP weights and prototype representations, preventing overfitting and promoting generalisation.

The training procedure follows an episodic paradigm where each batch contains graph snapshots with corresponding anomaly labels. Algorithm~\ref{alg:objective} details the loss computation for a single batch, while Algorithm~\ref{alg:training} presents the complete optimisation loop.

\begin{algorithm}[t]
\caption{ASTDP-GAD Batch Loss Computation}
\label{alg:objective}
\begin{algorithmic}[1]
\Require Batch data $(\mathbf{X}, \mathcal{E}, \mathbf{y})$, model parameters $\Theta$
\Ensure Total loss $\mathcal{L}$ and component losses

    \Statex \hspace{2.3em} \textit{// Spike encoding}
    \State $\mathbf{S}, \mathbf{T}, \mathbf{C} \;\gets\; \text{TSGE}(\mathbf{X}, \mathcal{E}, T_{\text{steps}})$

    \Statex \hspace{2.3em} \textit{// Multi-component forward pass}
    \State $\mathbf{H}_{\text{att}}                            \;\gets\; \text{LIFGAT}(\mathbf{H}, \mathcal{E}, \mathbf{T}, \mathbf{C})$
    \State $\mathbf{a}_{\text{mem}}, \mathbf{w}, \mathbf{u}    \;\gets\; \text{EDHMM}(\mathbf{S}, \mathbf{T}, \mathbf{C})$
    \State $\mathbf{H}_{\text{pool}}, \mathbf{a}_{\text{iso}}, \mathbf{b} \;\gets\; \text{SRCGP}(\mathbf{S}, \mathcal{E}, \mathbf{C})$
    \State $\mathbf{H}_{\text{stdp}}, \mathbf{s}_{\text{stdp}} \;\gets\; \text{STDP}(\mathbf{H}_{\text{att}}, \mathbf{T})$
    \State $\mathbf{H}_{\text{temp}}, \mathbf{a}_{\text{temp}} \;\gets\; \text{MSTC}(\mathbf{S})$

    \Statex \hspace{2.3em} \textit{// Feature fusion and anomaly scoring}
    \State $\mathbf{H}_{\text{fused}}  \;\gets\; \text{Concat}(\mathbf{H}_{\text{att}},\, \mathbf{H}_{\text{temp}})$
    \State $\mathbf{a}_{\text{pred}}   \;\gets\; \sigma\!\left(\text{MLP}(\mathbf{H}_{\text{fused}})\right)$
    \State $\mathbf{a}_{\text{final}}  \;\gets\; \sum_{k \in \{\text{pred},\text{mem},\text{iso},\text{temp},\text{uncert}\}} \lambda_k\,\mathbf{a}_k$

    \Statex \hspace{2.3em} \textit{// Loss computation}
    \State $\mathcal{L}_{\text{bce}}  \;\gets\; -\frac{1}{N}\sum_i\bigl[y_i\log\hat{y}_i + (1-y_i)\log(1-\hat{y}_i)\bigr]$
    \State $\mathcal{L}_{\text{mem}}  \;\gets\; -\frac{1}{N}\sum_i\bigl[y_i\log a_i^{\text{mem}} + (1-y_i)\log(1-a_i^{\text{mem}})\bigr]$
    \State $\mathcal{L}_{\text{iso}}  \;\gets\; -\frac{1}{N}\sum_i\bigl[y_i\log a_i^{\text{iso}}  + (1-y_i)\log(1-a_i^{\text{iso}})\bigr]$
    \State $\mathcal{L}_{\text{reg}}  \;\gets\; \|\mathbf{W}_{\text{stdp}}\|_2^2 + \sum_k\|\mathbf{p}_k\|_2^2$
    \State $\mathcal{L}_{\text{total}} \;\gets\; \mathcal{L}_{\text{bce}} + \alpha\,\mathcal{L}_{\text{mem}} + \beta\,\mathcal{L}_{\text{iso}} + \gamma\,\mathcal{L}_{\text{reg}}$

\Return $\mathcal{L}_{\text{total}},\, \mathcal{L}_{\text{bce}},\, \mathcal{L}_{\text{mem}},\, \mathcal{L}_{\text{iso}},\, \mathcal{L}_{\text{reg}},\, \mathbf{a}_{\text{final}},\, \mathbf{a}_{\text{mem}},\, \mathbf{a}_{\text{iso}},\, \mathbf{w}$
\end{algorithmic}
\end{algorithm}

\begin{algorithm}[t]
\caption{ASTDP-GAD Complete Training Procedure}
\label{alg:training}
\begin{algorithmic}[1]
\Require Training data $\mathcal{D} = \{(\mathcal{G}^{(t)}, \mathbf{y}^{(t)})\}_{t=1}^{T_{\text{train}}}$, model parameters $\Theta$
\Ensure Optimised parameters $\Theta^*$

\State Initialise $\Theta$ \textit{(TSGE, LIFGAT, EDHMM, SRCGP, STDP, MSTC, fusion head)}

\For{epoch $= 1$ \textbf{to} \textsc{max\_epochs}}
    \For{each batch $(\mathbf{X}, \mathcal{E}, \mathbf{y}) \sim \mathcal{D}$}

        \Statex \hspace{2.3em} \textit{// Forward pass and loss (Algorithm~\ref{alg:objective})}
        \State $\mathcal{L}_{\text{total}}, \mathcal{L}_{\text{bce}}, \mathcal{L}_{\text{mem}}, \mathcal{L}_{\text{iso}}, \mathcal{L}_{\text{reg}}, \mathbf{a}_{\text{final}}, \mathbf{a}_{\text{mem}}, \mathbf{a}_{\text{iso}}, \mathbf{w} \;\gets\;$ \textsc{BatchLoss}$(\mathbf{X}, \mathcal{E}, \mathbf{y}, \Theta)$

        \Statex \hspace{2.3em} \textit{// Gradient-based optimisation}
        \State $\nabla_\Theta \;\gets\; \text{compute\_gradients}(\mathcal{L}_{\text{total}})$
        \State $\text{clip\_grad\_norm}(\nabla_\Theta,\; \text{max\_norm}=G_{\text{max}})$
        \State $\Theta \;\gets\; \text{Adam}\!\left(\Theta,\, \nabla_\Theta,\; \text{lr}=\eta_{\text{lr}},\; \text{wd}=\lambda_{\text{wd}}\right)$

        \Statex \hspace{2.3em} \textit{// Biologically-plausible STDP update}
        \State $\Delta\mathbf{W}_{\text{stdp}} \;\gets\; \text{compute\_stdp}(\mathbf{T})$ \Comment{unsupervised STDP rule}
        \State $\mathbf{W}_{\text{stdp}} \;\gets\; \mathbf{W}_{\text{stdp}} + \beta_{\text{stdp}} \cdot \Delta\mathbf{W}_{\text{stdp}}$
        \State $\mathbf{W}_{\text{stdp}} \;\gets\; \text{clip}(\mathbf{W}_{\text{stdp}},\; w_{\text{min}},\; w_{\text{max}})$

        \Statex \hspace{2.3em} \textit{// EDHMM prototype update}
        \State $\{\mathbf{p}_k\} \;\gets\; \text{update\_prototypes}(\mathbf{H}_{\text{att}},\,\mathbf{w},\; \alpha_{\text{mem}})$

    \EndFor
    \State Evaluate on validation set; apply early stopping if needed.
\EndFor

\Return $\Theta$
\end{algorithmic}
\end{algorithm}

This hybrid optimisation scheme combines gradient-based learning with biologically-plausible plasticity, enabling the model to capture both spatial and temporal anomaly patterns. The STDP updates are applied asynchronously after each forward pass, allowing weights to adapt to spike-timing relationships while maintaining differentiability for gradient-based optimisation. The EDHMM prototypes are updated using a centroid-based rule that converges to stable representations of normal behaviour patterns.

\subsubsection{Spike Encoding Information Preservation}
We first analyse the TSGE and prove that the spike representation preserves input information up to a resolution that improves with the number of time steps and hidden dimension.

\begin{lemma}[\textbf{Lipschitz Continuity of LIF Dynamics}]
\label{lem:spike_encoding}
Consider a single LIF neuron with membrane potential dynamics:
\[
    u[t] = \beta u[t-1] + \sum_{s=1}^t \beta^{t-s} \alpha^{s-1} i_{\text{in}}[s],
\]
where \(\alpha = e^{-1/\tau_{\text{syn}}}\), \(\beta = e^{-1/\tau_{\text{mem}}}\), and \(i_{\text{in}}[s] = \frac{1}{T} \sum_{j} w_j x_j\) is the input current. The first-spike time \(t_f = \min\{t : u[t] \geq \theta\}\) is a Lipschitz continuous function of \(i_{\text{in}}\) on any interval where it is well-defined. Specifically, there exists a constant \(L > 0\) such that for any two constant inputs \(i, i' \in [i_{\min}, i_{\max}]\) with corresponding first-spike times \(t_f, t_f'\):
\[
    |t_f - t_f'| \leq L\, |i - i'|.
\]
\end{lemma}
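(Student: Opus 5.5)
The plan is to reduce the first-spike time to the inverse of an explicit increasing function of $t$ and then bound the derivative of that inverse.

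\textbf{Step 1: closed form of the potential.} For a constant input $i_{\text{in}}[s]\equiv i$, there is no reset before the first spike. Unrolling the stated recursion therefore gives $u[t] = i\, g(t)$, with
\[
g(t) = \sum_{r=1}^{t}\beta^{t-r}\sum_{s=1}^{r}\beta^{r-s}\alpha^{s-1}.
\]
The function $g$ depends only on $(\alpha,\beta)$. It satisfies $g(t)>0$, and it is strictly increasing because each increment adds nonnegative terms and at least the term $\alpha^{t-1}>0$.

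\textbf{Step 2: the crossing time as an inverse.} Assume $0<i_{\min}\le i\le i_{\max}$ and $i_{\min}\, g(T)\ge\theta$, so that a spike occurs within the window on the whole interval; this is the ``interval where it is well-defined''. The first-spike time is then characterised by $g(t_f)\ge \theta/i > g(t_f-1)$.

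\textbf{Step 3: making the statement non-vacuous.} Taken literally on integer times, $t_f$ is a step function of $i$, and a step function cannot be Lipschitz unless it is constant. I would therefore handle the statement in two ways.
\begin{itemize}
\item Work with the interpolated crossing time. Extend $g$ piecewise linearly to $\tilde g$ on $[1,T]$ and set $\tilde t_f(i)=\tilde g^{-1}(\theta/i)$; this is the quantity the encoder effectively resolves.
\item For the literal integer $t_f$, record the consequence $|t_f-t_f'|\le |\tilde t_f-\tilde t_f'|+1$. This is a Lipschitz bound up to one time-step quantisation, consistent with the resolution statement that scales with $T$ that follows.
\end{itemize}

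\textbf{Step 4: bounding the derivative.} Since $\tilde g$ is strictly increasing and piecewise linear, its inverse is Lipschitz with constant $1/m$, where
\[
m=\min_{1\le t<T}\big(g(t+1)-g(t)\big)>0.
\]
The map $i\mapsto \theta/i$ is Lipschitz on $[i_{\min},i_{\max}]$ with constant $\theta/i_{\min}^2$. Composing the two gives
\[
|\tilde t_f-\tilde t_f'|\le \frac{\theta}{m\, i_{\min}^2}\,|i-i'|,
\]
so one may take $L=\theta/(m\,i_{\min}^2)$.

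\textbf{Main obstacle.} The hard part is securing the lower bound $m$. The increment satisfies $g(t+1)-g(t)=(\beta-1)g(t)+c(t+1)$, where $c(t+1)=\sum_{s=1}^{t+1}\beta^{t+1-s}\alpha^{s-1}$, and the first term is negative. If it fails to be uniformly positive, I would instead argue directly from the increment structure: the drive at each step is at least $\alpha^{t-1}\ge\alpha^{T-1}$. That gives the cruder but explicit bound $m\ge \alpha^{T-1}$, and hence $L\le \theta\,\alpha^{-(T-1)}/i_{\min}^2$, which is finite for fixed $T$.
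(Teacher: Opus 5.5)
Your skeleton is the same as the paper's: write $u[t]=i\,g(t)$, invert $g$, and compose with $i\mapsto\theta/i$, giving $L=\theta/(i_{\min}^2\cdot\min g')$. Your Step 3 is more careful than the paper. An integer-valued $t_f$ that takes two values on $[i_{\min},i_{\max}]$ cannot be Lipschitz, and the paper sidesteps this by treating $t$ as continuous and differentiating $g^{-1}$.

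The gap is in the only quantitative input, $m>0$. For the recursion as stated, which you unroll faithfully, $g(t+1)=\beta g(t)+c(t+1)$, so the increment is $(\beta-1)g(t)+c(t+1)$. Your Step 1 argument (``each increment adds nonnegative terms'') is valid only for a cumulative sum. Your fallback $m\ge\alpha^{T-1}$ bounds the drive $c(t+1)$ but ignores the leak you had just written down. Both claims are false. Since $g=(\beta^k)\ast(\beta^k)\ast(\alpha^k)$ is a convolution of summable geometric sequences, $g(t)\to 0$. Concretely, $\alpha=\beta=\tfrac12$ gives $g=(1,\tfrac32,\tfrac32,\tfrac54,\tfrac{15}{16},\dots)$, so for $T=4$ your $m$ equals $-\tfrac14$, contradicting the claimed $m\ge\alpha^{3}=\tfrac18$, and the constant $1/m$ is meaningless. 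The paper's sketch has the same hole: it asserts that its kernel $(\beta^t-\alpha^t)/(\beta-\alpha)$ increases to $1/(1-\beta)$, but that kernel also decays to $0$.

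What is missing is the observation that only the rising phase of $g$ matters, plus a reason that phase has no dips. Let $t^\ast$ be the first maximiser of $g$ on $\{1,\dots,T\}$. Well-definedness on the interval means $\theta/i_{\min}\le g(t^\ast)$, so every $t_f\le t^\ast$.

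You then need $g$ strictly increasing on $\{1,\dots,t^\ast\}$, and this is a real requirement: an earlier local maximum would make the first crossing time of $\tilde g$ jump as $\theta/i$ passes its level. It holds here because geometric sequences are log-concave and convolution preserves log-concavity. Hence $g(t+1)/g(t)$ is nonincreasing, and $g$ is strictly increasing up to its first maximiser. With $m^\ast=\min_{1\le t<t^\ast}\bigl(g(t+1)-g(t)\bigr)>0$, your Step 4 goes through with $L=\theta/(m^\ast i_{\min}^2)$, clamping $\tilde t_f$ at $1$ when $\theta/i<g(1)$.

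Alternatively, read the dynamics as the two-stage LIF of Definition~2 and the TSGE, with $i[t]=\alpha i[t-1]+i_{\text{in}}$ and $u[t]=\beta u[t-1]+i[t]$. For constant input one then has exactly $g(t+1)-g(t)=c(t+1)\ge\alpha^{t}$. So $g$ really is a cumulative sum, and your cruder bound $m\ge\alpha^{T-1}$ is correct.
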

\begin{proof}[\textbf{Proof Sketch}]
For a constant input current \(i\) applied from \(t=1\) onward, the membrane potential at time \(t\) is:
\[
u(t) = i \cdot \sum_{s=1}^t \beta^{t-s} \alpha^{s-1} = i \cdot \frac{\beta^{t} - \alpha^{t}}{\beta - \alpha} \quad (\text{if } \beta \neq \alpha).
\]
The function \(g(t) = \sum_{s=1}^t \beta^{t-s} \alpha^{s-1}\) is strictly increasing in \(t\) and satisfies \(g(1) = 1\), \(g(\infty) = \frac{1}{1-\beta}\) (if \(\beta < 1\)). The spike time \(t_f\) is determined by \(i \cdot g(t_f) = \theta\), i.e., \(g(t_f) = \theta / i\). Since \(g\) is strictly increasing and differentiable, its inverse \(g^{-1}\) is also differentiable, and we have \(t_f = g^{-1}(\theta / i)\). Then:
\[
\frac{dt_f}{di} = -\frac{\theta}{i^2} \cdot (g^{-1})'(\theta/i).
\]
Both factors are bounded on any closed interval \(i \in [i_{\min}, i_{\max}]\) with \(i_{\min} > 0\). Hence, \(t_f\) is Lipschitz in \(i\). The result extends to non-constant inputs by considering the cumulative effect; the Lipschitz constant depends on the parameters \(\alpha, \beta, \theta\).
\end{proof}

\begin{theorem}[\textbf{Spike Encoding Information Preservation}]
\label{thm:spike_encoding}
Let \(\mathbf{x}, \mathbf{x}' \in \mathbb{R}^F\) be two distinct input feature vectors with \(\|\mathbf{x}\|_2, \|\mathbf{x}'\|_2 \leq B\) and \(\|\mathbf{x} - \mathbf{x}'\|_2 \geq \delta\). For the TSGE with hidden dimension \(H\) and simulation steps \(T\), there exists a parameter configuration such that the resulting spike tensors \(\mathbf{S}, \mathbf{S}' \in \{0,1\}^{T \times H}\) satisfy:
\[
d_{\text{Hamming}}(\mathbf{S}, \mathbf{S}') \geq \rho(T, H, \delta) > 0,
\]
where \(\rho(T, H, \delta) = \min\left\{ \left\lfloor \frac{c \delta H}{\sigma_{\max}} \right\rfloor, T H \right\}\) for some constant \(c > 0\) independent of \(T, H, \delta\), and \(\sigma_{\max}\) is the maximum singular value of the projection matrix \(\mathbf{W}_p\). Moreover, as \(T\) and \(H\) increase, \(\rho\) grows linearly in both.
\end{theorem}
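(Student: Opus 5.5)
Since the statement is existential (``there exists a parameter configuration''), the plan is to build an explicit configuration in which each hidden neuron behaves as a threshold comparator on a one-dimensional projection of the input. Then we count how many neurons must separate $\mathbf{x}$ from $\mathbf{x}'$. Concretely, we switch off adaptation ($\eta_{\text{adapt}}=0$, $\mathbf{a}_{\text{adapt}}=\mathbf{1}$), take $\mathbf{W}=I$ and $\mathbf{f}_{\text{syn}}=\mathbf{1}$, and give neuron $j$ an input current proportional to $\langle \mathbf{w}_j,\mathbf{x}\rangle + b_j$. Here $\mathbf{w}_j$ are the columns of $\mathbf{W}_p$, and the offset $b_j$ is realised either through a constant input coordinate or by staggering the thresholds $\theta_j$. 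By the closed form in the proof of Lemma~\ref{lem:spike_encoding}, a constant current $i$ gives $u[t]=i\,g(t)$ before the first spike. So neuron $j$ fires at time $t_f=g^{-1}(\theta_j/i_j)$ when $\theta_j/i_j<g(T)$, and is silent otherwise. Its whole spike train, which is periodic after reset, is therefore a monotone step function of the scalar $i_j$.

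The key steps, in order, are as follows.
\begin{enumerate}
\item Use Lemma~\ref{lem:spike_encoding} in the reverse direction. On $[i_{\min},i_{\max}]$ the map $i\mapsto t_f$ is strictly monotone with derivative bounded away from zero and infinity. Hence two currents whose gap exceeds a fixed $\Delta_i$ (depending only on $\alpha,\beta,\theta,T$) produce different first-spike times. At minimum this gives one differing bit, namely the bit at the earlier spike time.
\item Choose the directions $\mathbf{w}_j$ so that the projections are well spread, for example $H$ directions forming a tight frame scaled so that $\|\mathbf{W}_p\|_2=\sigma_{\max}$. Then
\begin{equation*}
\sum_j \langle \mathbf{w}_j,\mathbf{x}-\mathbf{x}'\rangle^2 \ \ge\ c_0\,\frac{\sigma_{\max}^2 H}{F}\,\delta^2 ,
\end{equation*}
and a Paley--Zygmund or averaging argument yields a constant fraction $c_1 H$ of neurons with $|\langle \mathbf{w}_j,\mathbf{x}-\mathbf{x}'\rangle|\gtrsim \delta\,\sigma_{\max}/\sqrt{F}$.
\item Rescale the currents by the $1/\sigma_{\max}$ normalisation that keeps all currents inside $[i_{\min},i_{\max}]$ for $\|\mathbf{x}\|\le B$. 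After this normalisation, the number of neurons whose current gap exceeds $\Delta_i$ is at least $\lfloor c\,\delta H/\sigma_{\max}\rfloor$, where $c$ absorbs $B$, $F$ and the LIF constants.
\item Each such neuron contributes at least one Hamming disagreement. Summing over neurons and capping by the trivial bound $TH$ gives $\rho(T,H,\delta)$. Linearity in $H$ is then immediate.
\end{enumerate}

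For linearity in $T$, we refine step 1 using the reset. After each spike the membrane restarts from $0$, so a neuron with current $i$ fires with period $p(i)=\lceil g^{-1}(\theta/i)\rceil$. Two currents with different periods disagree at roughly $T\,|1/p(i)-1/p(i')|$ time steps. Alternatively, we can take thresholds small enough that $p$ ranges over $\{1,\dots,T\}$, so that the rate code $\mathbf{C}$ resolves currents at a granularity of order $1/T$. Either way the disagreement count per separating neuron scales with $T$, which gives $\rho$ growing in $T$ as well. Strictly, this means replacing $\lfloor c\delta H/\sigma_{\max}\rfloor$ by a bound of order $\delta HT$ before the cap at $TH$.

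The main obstacle is step 2 combined with the normalisation in step 3. A fixed $\mathbf{W}_p$ must guarantee that a \emph{constant fraction} of the $H$ projections separates \emph{every} pair at distance $\delta$, uniformly over the ball of radius $B$. This forces $H\gtrsim F$ and a frame-type lower bound (smallest singular value comparable to $\sigma_{\max}$). It is also how the $\sigma_{\max}$ dependence enters, and so the constant $c$ must depend on $F$ and $B$. I would first try a random Gaussian $\mathbf{W}_p$ with a union bound over an $\epsilon$-net of the sphere of differences. If that fails, I would fall back to an explicit repeated orthonormal frame, which handles the case $H$ a multiple of $F$ cleanly.
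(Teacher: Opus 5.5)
Your comparator construction is a reasonable object to work with, but there is a genuine gap in step 3. You assert the count $\lfloor c\delta H/\sigma_{\max}\rfloor$ rather than deriving it, and steps 1--2 cannot deliver it.

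After your normalisation, a neuron's current gap is at most of order $\|\mathbf{x}-\mathbf{x}'\|(i_{\max}-i_{\min})/B$. The factor $\sigma_{\max}$ from step 2 cancels instead of landing in a denominator. Moreover, the criterion of step 1 (gap larger than the maximal jump spacing $\Delta_i$) is only sufficient. So when $\delta$ is below a fixed level of order $B\Delta_i/(i_{\max}-i_{\min})$, your argument certifies no separating neuron at all. Yet $\lfloor c\delta H/\sigma_{\max}\rfloor\ge 1$ there once $H$ is large; with your frame, $\sigma_{\max}$ grows at most like $\sqrt{H}$. Above a level larger by a factor of order $\sqrt F$, it guarantees $c_1H$ neurons independently of $\delta$. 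That is an all-or-nothing bound, not one linear in $\delta$.

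The missing idea is that each neuron's output is a step function of its current. The number of neurons distinguishing $\mathbf{x}$ from $\mathbf{x}'$ is therefore the number of neurons having a jump point between the two currents. A count linear in $\delta$ and $H$ is a statement about the \emph{density} of jump points. The threshold staggering you invoke only to realise offsets has to be the counting device.

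One construction that does this:
\begin{itemize}
\item Take the $H$ projection directions to be $\pm\mathbf{e}_k$, each repeated $n=H/(2F)$ times (adjust $c$ if $H$ is not a multiple of $2F$).
\item Set $\mathbf{W}=I$ with zero initial state.
\item Choose thresholds or gains $\mathbf{f}_{\text{syn}}$ so that the $j$-th copy of $+\mathbf{e}_k$ fires at $t=1$ iff $x_k\ge jB/n$, and likewise for $-\mathbf{e}_k$.
\end{itemize}
Then $\mathbf{S}^{(1)}$ is a thermometer code. A coordinate with $|x_k-x_k'|\ge\delta/\sqrt F$ alone produces at least $\lfloor n\delta/(2B\sqrt F)\rfloor$ differing bits. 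Since $\sigma_{\max}=\sqrt{H/F}\ge1$ here, the stated bound follows with $c=1/(4BF^{3/2})$. The $\pm$ copies are needed because the TSGE has no bias, and a threshold rescales rather than shifts the current. Because only $t=1$ is used, reset and adaptation play no role. You therefore need not set $\eta_{\text{adapt}}=0$, which the paper's constraint $\eta_{\text{adapt}}>0$ excludes.

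Several smaller points.
\begin{itemize}
\item The uniformity issue you flag as the main obstacle is not one. For an equal-norm tight frame, averaging gives deterministically, for every difference vector $\mathbf{v}$, at least a fraction $1/(2F)$ of directions with $|\langle\mathbf{w}_j,\mathbf{v}\rangle|\ge\|\mathbf{w}_j\|\|\mathbf{v}\|/\sqrt{2F}$. The coordinate construction needs only $\|\mathbf{x}-\mathbf{x}'\|_\infty\ge\delta/\sqrt F$, so no $\epsilon$-net is required.
\item Your frame inequality conflates column norms with $\sigma_{\max}$. For a tight frame, $\sum_j\langle\mathbf{w}_j,\mathbf{v}\rangle^2=\sigma_{\max}^2\|\mathbf{v}\|^2$ exactly, with no factor $H/F$.
\item $\sigma_{\max}$ is not intrinsic: scaling $\mathbf{W}_p$ by $\kappa$ and $\mathbf{f}_{\text{syn}}$ by $1/\kappa$ changes no spike. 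Once $\sigma_{\max}\ge1$, the factor $1/\sigma_{\max}$ in $\rho$ only weakens the bound.
\item Step 1 should be stated without derivatives, since first-spike times are integers. What holds, and all you use, is bounded spacing of the jump points $\theta/g(t)$ on the unsaturated current range.
\item Your $T$-refinement is not needed, because the displayed $\rho$ depends on $T$ only through the cap $TH$. As sketched, it would also need $\alpha\approx0$ and no adaptation. The TSGE resets $\mathbf{U}$ but not $\mathbf{I}$ and raises the threshold after each spike, so the trains are not periodic.
\end{itemize}

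For comparison, the paper's proof takes $\mathbf{W}_p$ orthogonal and claims $|t_j-t_j'|\ge m|y_j-y_j'|$ by the mean value theorem. It then asserts via pigeonhole that $\lfloor H/2\rfloor$ coordinates have gap at least $\delta/(2H)$, and ends with $\min(m\delta/4,HT/2)$. Its derivative step is exactly what your step-function view rightly avoids. Its pigeonhole step fails when $\mathbf{x}-\mathbf{x}'$ is concentrated on one coordinate. Its first term does not grow with $H$, so the paper does not supply the missing count either.
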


\begin{proof}[\textbf{Proof Sketch}]
The TSGE first projects inputs via \(\mathbf{y} = \mathbf{W}_p \mathbf{x} \in \mathbb{R}^H\). By choosing \(\mathbf{W}_p\) to be an orthogonal matrix (e.g., via random orthogonal initialisation), we have \(\|\mathbf{y} - \mathbf{y}'\|_2 = \|\mathbf{x} - \mathbf{x}'\|_2 \geq \delta\). For each dimension \(j \in \{1,\dots, H\}\), the input current to the \(j\)-th LIF neuron is proportional to \(y_j\). 

The first‑spike time \(t_j\) as a function of \(y_j\) is strictly decreasing and differentiable. Because the possible values of \(y_j\) lie in a compact interval (since \(\|\mathbf{x}\|_2 \le B\) and \(\mathbf{W}_p\) is orthogonal, \(|y_j| \le B\)), and the spike time is bounded between \(1\) and \(T\) (we can adjust parameters to ensure spikes occur within the window), the derivative \(dt_j/dy_j\) is continuous and non‑zero on this compact set. Hence it attains a positive minimum in absolute value; denote \(m = \min |dt_j/dy_j| > 0\). By the Mean Value Theorem,
\[
|t_j - t_j'| \ge m \, |y_j - y_j'|.
\]
Summing over all dimensions, we obtain:
\[
\sum_{j=1}^H |t_j - t_j'| \ge m \sum_{j=1}^H |y_j - y_j'| \ge m \|\mathbf{y} - \mathbf{y}'\|_2 \ge m\delta.
\]

The spike tensor \(\mathbf{S}\) has a 1 at position \((t, j)\) if neuron \(j\) spikes at time \(t\); otherwise 0. For two different spike time vectors, the Hamming distance between the spike tensors is at least the number of time steps where they differ. Specifically, if \(|t_j - t_j'| = \Delta t\), then over \(T\) time steps the contribution of neuron \(j\) to the Hamming distance is at least \(\min(\Delta t, T)\). Therefore,
\[
d_{\text{Hamming}}(\mathbf{S}, \mathbf{S}') \ge \sum_{j=1}^H \min\bigl(|t_j - t_j'|, T\bigr) \ge \sum_{j=1}^H \min\bigl(m|y_j - y_j'|, T\bigr).
\]

Using \(\sum_j |y_j - y_j'| \ge \delta\), we bound the sum of minima from below. By the pigeonhole principle, at least \(\lfloor H/2 \rfloor\) coordinates satisfy \(|y_j - y_j'| \ge \delta/(2H)\). For each such coordinate, the contribution is at least \(\min(m\delta/(2H), T)\). Hence,
\[
d_{\text{Hamming}}(\mathbf{S}, \mathbf{S}') \ge \frac{H}{2} \cdot \min\!\left(\frac{m\delta}{2H},\, T\right) = \min\!\left(\frac{m\delta}{4},\ \frac{HT}{2}\right).
\]

Thus \(\rho(T, H, \delta) = \min(c\delta, HT/2)\) with \(c = m/4\). The linear growth in \(H\) and \(T\) is evident.
\end{proof}

\paragraph{Remark:}
\textit{This theorem ensures that distinct inputs produce distinct spike patterns, with the resolution increasing with the number of neurons and simulation steps. It provides a theoretical foundation for using spike-based representations in anomaly detection, as anomalies correspond to inputs that deviate from normal patterns, and such deviations will be reflected in the spike domain.}

\subsubsection{Approximation Capacity of LIF Graph Attention}
We now analyse the LIFGAT component and show that it can approximate any continuous attention function.

\begin{lemma}[\textbf{Spike‑Timing Modulation as a Universal Basis}]
\label{lem:lifgat_approximation}
Let \(T \in \mathbb{N}\) and \(C_{\max} > 0\). Consider the family of functions \(\mathcal{F}_T\) defined on the compact domain \(D = [0,T] \times [0,C_{\max}]\) by
\[
\mathcal{F}_T = \left\{ f_{a,\tau}(\bar{T},C) = (1 + aC)\left(1 - \frac{\lceil \bar{T}\rceil - 1}{T}\right) \;:\; a \in \mathbb{R},\; \tau \in \{1,\dots,T\} \right\},
\]
where \(\lceil \bar{T}\rceil\) is the ceiling of \(\bar{T}\) (interpreted as the index of the time bin). For each fixed \(\tau\), \(f_{a,\tau}\) is linear in \(C\) and constant in \(\bar{T}\) on each interval \([\tau-1,\tau)\). The linear span of \(\mathcal{F}_T\) is a \(2T\)-dimensional subspace of functions that are piecewise constant in \(\bar{T}\) and linear in \(C\). Moreover, as \(T \to \infty\), the set \(\bigcup_{T} \operatorname{span}(\mathcal{F}_T)\) is dense in the space of continuous functions on \(D\) with respect to the supremum norm.
\end{lemma}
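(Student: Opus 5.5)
The plan is to prove the statement in two parts: first a finite-dimensional linear-algebra fact for fixed \(T\), then a uniform-approximation argument as \(T\to\infty\). Before either part I will fix how the family is read. As printed, the formula for \(f_{a,\tau}\) contains no \(\tau\). It also contains no indicator of the bin \([\tau-1,\tau)\), which the accompanying sentence ("constant in \(\bar T\) on each interval \([\tau-1,\tau)\)") clearly intends. I will therefore work with
\[
f_{a,\tau}(\bar T,C) = (1+aC)\,\mathbb{I}\bigl(\lceil \bar T\rceil=\tau\bigr)\Bigl(1-\tfrac{\tau-1}{T}\Bigr),
\]
which is the discrete analogue of the modulation factor \(\mathbf{\Gamma}\) in Section~\ref{sec:lif_attention}. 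The factor \(1-\frac{\tau-1}{T}\) is a nonzero constant for each \(\tau\), so it plays no role in the span.

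\textbf{Step 1 (span and dimension).} For each \(\tau\) I take \(a=0\) and \(a=1\) and subtract. This shows that both \(\mathbb{I}_\tau:=\mathbb{I}(\lceil\bar T\rceil=\tau)\) and \(C\,\mathbb{I}_\tau\) lie in \(\operatorname{span}(\mathcal{F}_T)\). Conversely, every \(f_{a,\tau}\) is a combination of these two functions. Hence
\[
\operatorname{span}(\mathcal{F}_T)=\operatorname{span}\{\mathbb{I}_\tau,\ C\,\mathbb{I}_\tau : \tau=1,\dots,T\}.
\]
These \(2T\) functions are linearly independent because the bins are disjoint, and on each bin \(1\) and \(C\) are independent as functions of \(C\in[0,C_{\max}]\). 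This gives the \(2T\)-dimensional, piecewise-constant-in-\(\bar T\), affine-in-\(C\) description. The endpoint \(\bar T=0\) can be absorbed into the first bin.

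\textbf{Step 2 (density: the main obstacle).} The natural route is to approximate a continuous \(g\) on \(D\) by using uniform continuity of \(g\) in \(\bar T\). Fix a bin width; on each bin, replace \(g\) by \(g(\tau,\cdot)\); the resulting error tends to zero uniformly as the bins shrink.

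Here the argument hits a real gap. Every element of \(\bigcup_T\operatorname{span}(\mathcal{F}_T)\) is affine in \(C\) for each fixed \(\bar T\). Affine-in-\(C\) functions form a closed subspace under the supremum norm, and it does not contain, for example, \(g(\bar T,C)=C^2\). So density in all of \(C(D)\) is false for the family as written.

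I would repair the statement in one of two ways and prove the repaired version.
\begin{itemize}
\item[(a)] Restrict the target class to continuous functions that are affine in \(C\), that is \(g=g_0(\bar T)+g_1(\bar T)C\). Then approximating \(g_0\) and \(g_1\) by step functions in \(\bar T\) suffices, and the uniform-continuity argument above closes the proof.
\item[(b)] Enlarge the family so that it also bins the spike count. Use \(\mathbb{I}_\tau(\bar T)\,\mathbb{I}(\lceil C\,T/C_{\max}\rceil=\kappa)\), or equivalently let \(a\) range over a learnable quantisation of \(C\). The span is then the set of tensor-product step functions on a \(T\times T\) grid. Density in \(C(D)\) in the sense required, namely uniform approximation of continuous functions by step functions, follows from uniform continuity of \(g\) on the compact set \(D\).
\end{itemize}
I expect the paper's use in Theorem~\ref{thm:lifgat_approximation} to need only version (b). The softmax and the exponential already supply nonlinearity in \(C\) downstream, so I would adopt (b) and note the correction explicitly.
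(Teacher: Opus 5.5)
Your Step~1 agrees with the paper. The paper's proof uses the same indicator reading, rewriting \(f_{a,\tau}=\alpha_\tau(C)\,\mathbf{1}_{[\tau-1,\tau)}(\bar T)\). Your version is sharper, since the paper only argues dimension \emph{at most} \(2T\).

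On density you are right and the paper's proof is wrong. The paper applies Stone--Weierstrass to the \emph{algebra generated} by \(\bigcup_T\operatorname{span}(\mathcal{F}_T)\). It then concludes that \emph{finite linear combinations} of the \(f_{a,\tau}\) approximate every continuous \(\phi\). That argument fails in two places:
\begin{itemize}
\item The generators are discontinuous in \(\bar T\), so the algebra is not a subalgebra of \(C(D)\) and the theorem does not apply.
\item Even granting it, density of the algebra gives nothing for the span. The algebra contains products such as \(C^2\,\mathbf{1}_{[\tau-1,\tau)}\), while the span is not closed under multiplication.
\end{itemize}
Your closed-subspace observation is exactly the obstruction the paper skips. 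Quantitatively, \(\|C^2-\phi\|_\infty\ge C_{\max}^2/8\) for every \(\phi\) in the union.

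The gap is in your repair. Step~2 and both versions (a) and (b) rest on ``the error tends to zero as the bins shrink.'' Under the lemma's definitions nothing shrinks. The bins are \([\tau-1,\tau)\), of width \(1\) for every \(T\), and increasing \(T\) only lengthens the domain \(D=[0,T]\times[0,C_{\max}]\). So the union over \(T\) does not even live on one fixed domain.

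Here is a concrete failure. Every element of every \(\operatorname{span}(\mathcal{F}_T)\) is constant in \(\bar T\) on \([0,1)\). Take \(g(\bar T,C)=\bar T\), which is affine in \(C\). Setting \(C=0\) and letting \(\bar T\) sweep \([0,1)\) shows \(g\) stays at uniform distance at least \(1/2\) from the whole union. Hence version (a) as you state it is still false, and in (b) only the \(C\)-grid refines. The same defect breaks the paper's separation-of-points step for two values of \(\bar T\) in one unit bin.

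You need to rescale time explicitly. For instance, work with \(s=\bar T/T\in[0,1]\) and bins \(\bigl((\tau-1)/T,\,\tau/T\bigr]\), or use bins of width \(T_0/T\) on a fixed window \([0,T_0]\). After that change, your uniform-continuity argument does prove both (a) and (b).

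Keep in mind that (b) replaces the affine factor \((1+aC)\) by step functions in \(C\). That alters the LIFGAT modulation \(\mathbf{\Gamma}\) of Section~\ref{sec:lif_attention} itself, so adopting (b) makes Theorem~\ref{thm:lifgat_approximation} a statement about a modified model.
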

\begin{proof}[\textbf{Proof Sketch}]
For fixed \(T\), each function \(f_{a,\tau}\) can be written as 
\(\alpha_\tau(C)\,\mathbf{1}_{[\tau-1,\tau)}(\bar{T})\), where
\[
    \alpha_\tau(C) = (1+aC)\!\left(1 - \frac{\tau-1}{T}\right).
\]
Taking linear combinations over \(\tau\) yields functions of the form
\[
    \sum_{\tau=1}^T \alpha_\tau(C)\,\mathbf{1}_{[\tau-1,\tau)}(\bar{T}).
\]
This space has dimension at most \(2T\) because each \(\alpha_\tau\) is 
affine in \(C\) (two degrees of freedom per \(\tau\)).

To prove denseness as \(T\to\infty\), we apply the Stone--Weierstrass 
theorem to the algebra \(\mathcal{A}\) generated by 
\(\bigcup_T \operatorname{span}(\mathcal{F}_T)\).

\textit{Separation of points.} For distinct pairs 
\((\bar{T}_1,C_1)\neq(\bar{T}_2,C_2)\): if \(\bar{T}_1\neq\bar{T}_2\), 
choose any indicator \(\mathbf{1}_{[\tau-1,\tau)}\) that contains one 
point but not the other; if \(\bar{T}_1=\bar{T}_2\) but 
\(C_1\neq C_2\), the linear dependence on \(C\) in \(\alpha_\tau\) 
distinguishes them.

\textit{Constants are in the span.} Since the indicators 
\(\{\mathbf{1}_{[\tau-1,\tau)}\}_{\tau=1}^T\) are linearly independent 
and partition \([0,T)\), any piecewise-constant function on this 
partition—including the constant function \(c\)—lies in 
\(\operatorname{span}(\mathcal{F}_T)\). Explicitly, choosing \(a=0\) 
and weights \(w_\tau = c\,\big(1-(\tau-1)/T\big)^{-1}\) for each 
\(\tau\) gives:
\begin{equation}
    \sum_{\tau=1}^T w_\tau\,\alpha_\tau(C)\,\mathbf{1}_{[\tau-1,\tau)}(\bar{T})
    = c\,\sum_{\tau=1}^T \mathbf{1}_{[\tau-1,\tau)}(\bar{T}) = c.
\end{equation}

\textit{Denseness.} The algebra \(\mathcal{A}\) therefore separates 
points and contains constants on the compact domain \(D\). By the 
Stone--Weierstrass theorem, its closure equals \(C(D)\), the space of 
all continuous functions on \(D\). Hence any continuous 
\(\phi(\bar{T},C)\) can be approximated arbitrarily well by a finite 
linear combination of functions \(f_{a,\tau}\) for sufficiently 
large~\(T\).
\end{proof}

\begin{theorem}[\textbf{LIFGAT Approximation Capacity}]
\label{thm:lifgat_approximation}
Let \(\mathbf{H} \in \mathbb{R}^{N \times H}\) be node features, \(\mathbf{T} \in \mathbb{R}^{N \times H}\) spike times, and \(\mathbf{C} \in \mathbb{Z}^{N \times H}\) spike counts. For any continuous attention function \(\mathcal{A}^*: \mathbb{R}^{N \times H} \times \mathbb{R}^{N \times H} \to [0,1]^{N \times N}\) that is row‑stochastic and permutation equivariant, and any \(\epsilon > 0\), there exists a parameter setting of the LIFGAT component with \(M\) heads and \(T\) time steps such that the output spikes \(\mathbf{S}_{\text{att}}\) satisfy:
\[
\left\|\frac{1}{T}\sum_{t=1}^T \mathbf{S}_{\text{att}}^{(t)} - \mathcal{A}^*(\mathbf{H}, \mathbf{T})\right\|_F < \epsilon,
\]
provided \(M\) and \(T\) are sufficiently large (polynomial in \(N\), \(1/\epsilon\), and the Lipschitz constants of \(\mathcal{A}^*\)).
\end{theorem}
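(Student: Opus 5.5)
The plan is to split the approximation error into two pieces by the triangle inequality. The first piece is an \emph{analog} error: the continuous drive produced by the modulated softmax, $\mathbf{A}=\mathrm{softmax}\bigl(\mathbf{Q}\mathbf{K}^\top/\sqrt{D}\odot\mathbf{\Gamma}_{\text{exp}}\bigr)$, fed through the value projection, should be close to a rescaled copy of $\mathcal{A}^*(\mathbf{H},\mathbf{T})$. The second piece is a \emph{quantisation} error: the empirical firing rate $\frac{1}{T}\sum_t \mathbf{S}_{\text{att}}^{(t)}$ of the LIF layer should be close to that analog drive. I would aim for a bound of the form $\epsilon_{\text{analog}}+O(N/\sqrt{T})$ in Frobenius norm, and then pick $M$ and $T$ so that each term is at most $\epsilon/2$.

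Before either step, I would fix the parameters that are not needed. I set the lateral inhibition weights $\mathbf{L}_m=0$, so that heads decouple. I interpret the output index so that $H\ge N$, with each column $j$ of $\mathbf{S}_{\text{att}}$ (grouped by heads) reporting the entry $\mathcal{A}^*_{ij}$; this requires choosing $\mathbf{V}$ as one-hot (identity-like) value vectors, so that $\sum_j \mathbf{A}_{ij}\mathbf{V}_j$ places $\mathbf{A}_{ij}$ in coordinate $j$.

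\textbf{Analog step.} Row-stochasticity of $\mathcal{A}^*$ lets me write $\mathcal{A}^*_{ij}=\exp(\phi_{ij})/\sum_k\exp(\phi_{ik})$ with continuous logits $\phi$, after clamping entries away from $0$, which costs at most $\epsilon/4$. By permutation equivariance, $\phi_{ij}$ is a fixed continuous function of the pair of node-level inputs $(\mathbf{H}_i,\mathbf{T}_i,\mathbf{C}_i)$ and $(\mathbf{H}_j,\mathbf{T}_j,\mathbf{C}_j)$, plus symmetric context. The bilinear term $\mathbf{Q}\mathbf{K}^\top$ across $M$ heads approximates such pairwise functions via a separable expansion $\sum_m q_m(\cdot)k_m(\cdot)$, which is a standard Stone--Weierstrass argument on a compact set. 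The timing and count dependence is absorbed through the modulation factor $\mathbf{\Gamma}$, using Lemma~\ref{lem:lifgat_approximation}: the factor $(1+\gamma C/\max C)$ times the gate is exactly of the form $f_{a,\tau}$, and spans of such functions are dense. The number of heads needed is controlled by the modulus of continuity, i.e.\ the Lipschitz constant of $\mathcal{A}^*$, giving polynomial dependence on $1/\epsilon$ for a fixed dimension. Softmax is $1$-Lipschitz in $\ell_\infty\to\ell_1$, so logit errors transfer directly to row errors, and summing over $N$ rows gives the $N$-dependence.

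\textbf{Quantisation step.} For a neuron receiving constant drive $v\in[0,1]$ with reset to zero, I would show that the count satisfies $\bigl|\frac{1}{T}\sum_t s^{(t)}-g(v)\bigr|\le 1/T$, where $g(v)=1/k(v)$ and $k(v)=\min\{k: v(1-\beta^k)/(1-\beta)\ge\theta\}$. This is a periodic-firing computation.

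\textbf{Main obstacle.} The hard part is that $g$ takes only the quantised values $\{1/k\}$, so reset-to-zero LIF neurons cannot hit arbitrary rates with a constant drive. My first attempt would be to exploit the time-dependent gate $\mathbb{I}(\bar{\mathbf{T}}\le t)$. I would take the drive so large that the neuron fires at every step once the gate opens, so the rate becomes $(T-t_0+1)/T$, which has resolution $1/T$. The target value would then be encoded in the onset time $t_0$ through an affine reparametrisation of the timing channel, again justified by Lemma~\ref{lem:lifgat_approximation}. If this coupling proves awkward, the fallback is to take $\tau_{\text{mem}}$ large ($\beta\to1$) and $\theta$ small. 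In that regime $g(v)\approx v/\theta$ up to $O(\theta)$, and I would compose the analog step with $g^{-1}$, which is Lipschitz on the clamped range, much as the Lipschitz argument in Lemma~\ref{lem:spike_encoding} is used. Either route gives the quantisation error $O(\sqrt{N^2}/T)$, so $T$ polynomial in $N/\epsilon$ suffices.
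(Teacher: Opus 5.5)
\paragraph{The quantisation step fails.} You located the right obstacle, but within your own setup it is an impossibility rather than a technicality. Take $\mathbf{L}_m=0$ and the reset-to-zero rule $\mathbf{U}_{\text{att}}^{(t)}\leftarrow\mathbf{U}_{\text{att}}^{(t)}\odot(1-\mathbf{S}_{\text{att}}^{(t)})$. A neuron with constant drive $v$ then fires exactly every $k(v)$ steps, so its empirical rate lies within $1/T$ of $\{0\}\cup\{1/k : k\in\mathbb{N}\}$.

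For a counterexample, take $N=2$, an input with $\bar{\mathbf{T}}\le 1$ so the gate is open from $t=1$, and the constant target with diagonal $0.7$ and off-diagonal $0.3$. This target is row-stochastic, continuous and permutation equivariant. Its entry $0.7$ stays at distance about $0.2$ from every achievable rate, for all $M$ and $T$.

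Neither of your escape routes works:
\begin{itemize}
\item \emph{Route (a).} The onset time is $\lceil\bar{\mathbf{T}}\rceil$. It depends only on the input spike times, carries no learnable parameter, and cannot be set pair by pair, so nothing can be reparametrised into $T(1-\mathcal{A}^*_{ij})$. It also conflicts with your analog step, where the target is carried by the drive. Moreover, a closed gate zeroes the \emph{logits}, so the softmax returns uniform weights $1/N$ and the pre-onset drive is not zero.
\item \emph{Route (b).} The law ``rate $\approx v/\theta$ up to $O(1/T)$'' holds for an integrator with reset \emph{by subtraction}, which is what standard rate-conversion arguments use. With reset to zero and $\beta\to 1$, the rate is exactly $1/\lceil\theta/v\rceil$, a step function of $v$. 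It agrees with $v/\theta$ only when $v/\theta\ll 1$, with error below $(v/\theta)^2$, and taking $\theta$ small pushes every rate to $1$. So $g^{-1}$ does not exist. In LIFGAT, $\tau_{\text{mem}}=20$ is fixed in any case.
\end{itemize}

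\paragraph{How the paper avoids this, and a secondary issue.} The paper never meets this obstruction because it uses a continuous-rate, stochastic picture of spiking. It keeps lateral inhibition and invokes Lemma~\ref{lem:lateral_softmax}, where rates are a sigmoidal function $r_k=\sigma(v_k-\theta)$ of the potential. Its Step 4 applies Hoeffding's inequality to $\bar{\mathbf{S}}_{\text{att}}-\mathbb{E}[\mathbf{S}_{\text{att}}]$, i.e.\ it treats spikes as random draws with a continuously tunable expected rate. That is a modelling assumption, not a property of the deterministic LIFGAT update, but it is exactly what your argument lacks.

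Using the explicit softmax in $\mathbf{A}$ instead of inhibition-as-softmax is a reasonable choice for the analog part. To finish, you must either adopt such a firing model explicitly, in which case your $O(N/\sqrt{T})$ term becomes the Hoeffding term, or find a deterministic source of time-varying drive. Apart from the data-driven gate, the only such source in the architecture is the inhibition you switched off.

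A smaller point concerns the analog step:
\begin{itemize}
\item $\mathbf{Q}$ and $\mathbf{K}$ are linear in $\mathbf{H}$, and each head normalises separately. So there is no Stone--Weierstrass expansion $\sum_m q_m(\cdot)k_m(\cdot)$ with arbitrary continuous factors.
\item $\mathbf{\Gamma}$ carries a single scalar $\gamma$, not the span used in Lemma~\ref{lem:lifgat_approximation}.
\end{itemize}
That step therefore survives only in the pointwise reading of the statement, with inputs fixed before parameters, $\mathbf{H}$ of full row rank and a head of width at least $N$. It does not give the uniform, Lipschitz-controlled dependence you claim.
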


\begin{proof}[\textbf{Proof Sketch}]
\textit{The proof proceeds in four steps: (1) approximation of query/key/value projections, (2) construction of the target logits via modulation, (3) rigorous implementation of softmax via lateral inhibition with convergence guarantees, and (4) temporal averaging.}

\textit{Step 1: Projection approximation.}
The query, key, and value projections are linear maps: \(\mathbf{Q} = \mathbf{H}\mathbf{W}_Q\), \(\mathbf{K} = \mathbf{H}\mathbf{W}_K\), \(\mathbf{V} = \mathbf{H}\mathbf{W}_V\), with \(\mathbf{W}_Q, \mathbf{W}_K, \mathbf{W}_V \in \mathbb{R}^{H \times M \times D}\). Linear maps form a finite‑dimensional space; we can set them exactly to any desired values \(\mathbf{Q}^*, \mathbf{K}^*, \mathbf{V}^*\) by direct assignment. Hence we can realise any target projections with zero error.

\textit{Step 2: Building the target logits.}
Let \(\mathbf{L}^*_{ij}\) be the logits of the desired attention matrix \(\mathcal{A}^*\), i.e. \(\mathcal{A}^*_{ij} = \operatorname{softmax}_j(\mathbf{L}^*_{ij})\). For each head \(m\) and each pair \((i,j)\), define the contribution to the membrane potential as
\[
\Delta U^{(t)}_{ijm} = \bigl( \mathbf{Q}^*_{im} \cdot \mathbf{K}^{*\top}_{jm} \bigr) \cdot \Gamma^{(t)}_{ijm},
\]
where $\Gamma^{(t)}_{ijm}$ is a modulation factor that depends on spike times and counts. By Lemma~\ref{lem:lifgat_approximation}, we can choose parameters so that the time-averaged modulation $\frac{1}{T}\sum_t \Gamma^{(t)}_{ijm}$ approximates any desired function of $(\bar{T}_{ij}, C_{ij})$. Consequently, we can achieve:
\[
    \left\|\frac{1}{T}\sum_t \mathbf{L}^{(t)} - \mathbf{L}^*\right\|_F < \epsilon_2
\]
with sufficiently many heads and time steps.

\textit{Step 3: Lateral inhibition as softmax (rigorous construction).}
We now provide an explicit construction showing that lateral inhibition implements a form of competitive normalisation that exactly computes softmax in the steady state, without invoking a mean-field limit.

Consider a single head $m$ with $K$ competing neurons. Let $u_k(t)$ denote the membrane potential of neuron $k$ at time $t$, and let $s_k(t) = \mathbb{I}(u_k(t) \geq \theta)$ be its spike output. The lateral inhibition dynamics are:
\[
u_k(t+1) = \beta u_k(t) + I_k(t) - \eta \sum_{\ell \neq k} s_\ell(t),
\]
where $I_k(t)$ is the excitatory input, $\beta \in (0,1)$ is the decay factor, and $\eta > 0$ is the inhibition strength.

\begin{lemma}[Lateral Inhibition as Softmax Normalisation]
\label{lem:lateral_softmax}
For sufficiently large inhibition strength $\eta > \max_k I_k(t)$, the steady-state firing rates $r_k = \lim_{T \to \infty} \frac{1}{T} \sum_{t=1}^T s_k(t)$ satisfy:
\[
r_k = \frac{\exp(I_k / \tau)}{\sum_{\ell=1}^K \exp(I_\ell / \tau)} + O(\epsilon),
\]
where $\tau = -\beta / \log(\beta)$ is the effective time constant, and $\epsilon$ decays exponentially with $\eta$.
\end{lemma}

\begin{proof}[Proof Sketch]
Define the effective potential $v_k = \lim_{t \to \infty} \mathbb{E}[u_k(t)]$. In steady state, the expected update satisfies:
\[
v_k = \beta v_k + I_k - \eta \sum_{\ell \neq k} r_\ell.
\]
Solving for $v_k$:
\[
(1-\beta)v_k = I_k - \eta (1 - r_k).
\]
Since spikes occur when $v_k \geq \theta$, we have $r_k = \sigma(v_k - \theta)$ where $\sigma$ is the sigmoidal firing rate function of a LIF neuron. Substituting and solving the fixed-point equation yields the softmax expression. For $\eta \to \infty$, the inhibition forces a winner-take-all that approximates the Boltzmann distribution, with the approximation error decaying as $\exp(-\eta/\tau)$. See \cite{dayan2005theoretical} for the detailed derivation.
\end{proof}

Applying Lemma~\ref{lem:lateral_softmax} to the LIFGAT architecture, we set the input $I_{im}$ to the accumulated attention logits:
\[
I_{im} = \frac{1}{\sqrt{D}} \sum_j \mathbf{Q}_{im} \cdot \mathbf{K}_{jm}^\top.
\]
Then after convergence, the firing rate of head $m$ for node $i$ satisfies:
\[
r_{im} = \frac{\exp(I_{im} / \tau)}{\sum_{m'=1}^M \exp(I_{im'} / \tau)} + O(\exp(-\eta/\tau)).
\]
By taking $\eta$ sufficiently large, we can make the approximation error arbitrarily small. The output attention weights are then recovered as $\mathcal{A}_{ij} = \sum_m r_{im} \cdot \frac{\mathbf{Q}_{im} \cdot \mathbf{K}_{jm}^\top}{\sum_{j'} \mathbf{Q}_{im} \cdot \mathbf{K}_{j'm}^\top}$, which approximates the target $\mathcal{A}^*$ with error $\epsilon_3$.

\textit{Step 4: Temporal averaging.}
The output of LIFGAT is $\bar{\mathbf{S}}_{\text{att}} = \frac{1}{T}\sum_{t=1}^T \mathbf{S}_{\text{att}}^{(t)}$. By Hoeffding's inequality,
\[
\Pr\!\left( \|\bar{\mathbf{S}}_{\text{att}} - \mathbb{E}[\mathbf{S}_{\text{att}}]\|_F \ge \epsilon_4 \right) \le 2\exp\!\left( -\frac{2T\epsilon_4^2}{N M D} \right).
\]
Choosing $T$ large enough ensures the empirical average concentrates around the expected firing rate with error $\epsilon_4$ with probability at least $1-\delta$.

Combining steps, the total approximation error is bounded by $\epsilon_2 + \epsilon_3 + \epsilon_4$. Setting each term $\le \epsilon/3$ and selecting $M, T, \eta$ appropriately yields the desired bound. The required parameters scale polynomially in $N$, $1/\epsilon$, and the Lipschitz constants of $\mathcal{A}^*$. 
\end{proof}

\paragraph{Remark:}
\textit{Lemma~\ref{lem:lateral_softmax} provides a rigorous justification for the softmax approximation, eliminating the hand-waving mean-field argument. The key insight is that lateral inhibition with sufficiently strong coupling implements a form of competitive normalisation that converges to the Boltzmann distribution. This result is well-established in theoretical neuroscience \cite{dayan2005theoretical, maass1997networks} and provides a solid foundation for the LIFGAT universal approximation guarantee.}

\subsubsection{Convergence of Event-Driven Hypergraph Memory}
We analyse the EDHMM update dynamics and prove convergence to a set of prototypes that minimise reconstruction error.

\begin{lemma}[\textbf{Stochastic Approximation Formulation}]
\label{lem:edhmm_convergence}
The EDHMM prototype update can be written as a Robbins--Monro algorithm for finding a zero of the function
\[
    \mathbf{G}_k(\{\mathbf{p}_l\}) = \mathbb{E}\bigl[\mathbb{I}(k^* = k)\,(\mathbf{z} - \mathbf{p}_k)\bigr],
\]
where $k^* = \arg\min_l \|\mathbf{z} - \mathbf{p}_l\|$. The zeros of $\mathbf{G}_k$ correspond to stationary points of the distortion function $J = \mathbb{E}[\min_l \|\mathbf{z} - \mathbf{p}_l\|^2]$.
\end{lemma}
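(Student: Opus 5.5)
The plan has two parts. First, rewrite the EDHMM update as a Robbins--Monro recursion. Second, identify the zeros of \(\mathbf{G}_k\) with the stationary points of \(J\).

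\textbf{The recursion.} I would model the representations \(\mathbf{Z}_i\) as i.i.d.\ draws \(\mathbf{z}\) from a distribution with finite second moment. I would also take the assignment \(k^*\) to be the nearest prototype. This is consistent with the EDHMM scoring: \(\mathbf{M}_{ik}\) is decreasing in \(\mathbf{D}_{ik}\), so when the factors \(\mathbf{h}_k\mathbf{s}_k\) are equal across prototypes, \(\arg\max_k \mathbf{M}_{ik}=\arg\min_k\|\mathbf{Z}_i-\mathbf{p}_k\|\). Under these assumptions the batch update has numerator \(\frac{1}{N}\sum_i \mathbb{I}(k_i^*=k)(\mathbf{Z}_i-\mathbf{p}_k)\), which is an unbiased estimate of \(\mathbf{G}_k\). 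The denominator \(\frac{1}{N}\sum_i\mathbb{I}(k_i^*=k)\) is an estimate of the cell mass \(\pi_k=\Pr(k^*=k)\). Hence the update reads
\[
\mathbf{p}_k^{(n+1)}=\mathbf{p}_k^{(n)}+\gamma_{k}^{(n)}\bigl(\mathbf{G}_k(\{\mathbf{p}_l^{(n)}\})+\boldsymbol{\xi}_k^{(n)}\bigr),
\]
with effective step \(\gamma_k^{(n)}=\alpha_{\text{mem}}/(\hat\pi_k+\epsilon/N)\) and noise \(\boldsymbol{\xi}_k^{(n)}\). The noise is a martingale difference up to a ratio-estimator bias of order \(O(1/N)\), and its variance is bounded by \(\mathbb{E}\|\mathbf{z}\|^2\). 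This is the Robbins--Monro form. To satisfy the standard step-size conditions \(\sum\gamma=\infty\), \(\sum\gamma^2<\infty\), I would let \(\alpha_{\text{mem}}\) decay. With constant \(\alpha_{\text{mem}}\) one can only claim constant-step tracking.

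\textbf{Zeros of \(\mathbf{G}_k\) versus stationary points of \(J\).} Write \(J=\sum_k\mathbb{E}[\mathbb{I}(\mathbf{z}\in V_k)\|\mathbf{z}-\mathbf{p}_k\|^2]\), where \(V_k\) is the Voronoi cell of \(\mathbf{p}_k\). I would differentiate this under the integral sign. The terms arising from the motion of the cell boundaries cancel: on a common boundary the two competing integrands \(\|\mathbf{z}-\mathbf{p}_k\|^2\) and \(\|\mathbf{z}-\mathbf{p}_l\|^2\) are equal, so moving the boundary contributes nothing. This is the classical argument from Lloyd/\(k\)-means quantisation, in Pollard's form. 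For it to apply, the law of \(\mathbf{z}\) must put zero mass on cell boundaries, and the prototypes must be distinct. The result is \(\nabla_{\mathbf{p}_k}J=-2\,\mathbf{G}_k\), so \(\mathbf{G}_k=0\) for all \(k\) exactly when \(\nabla J=0\).

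\textbf{Main obstacle.} The delicate step is this differentiability. The empirical spike-count representation \(\mathbf{Z}\) is discrete (a sum of integer counts plus a temporal term), so boundaries can carry positive mass, and then \(J\) is only directionally differentiable. I would handle this in one of two ways:
\begin{itemize}
\item assume \(\mathbf{z}\) has a density, which is reasonable once the learnable kernels \(\mathbf{K}_t\) make \(\mathbf{Z}\) continuous-valued, or
\item interpret ``stationary point'' in the Clarke generalized-gradient sense, with ties broken by a fixed rule.
\end{itemize}
Empty cells (\(\pi_k=0\)) make \(\mathbf{G}_k\equiv0\) trivially. I would note these as degenerate stationary points. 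The \(\epsilon\) in the denominator keeps the update well defined in this case.
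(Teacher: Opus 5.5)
Your argument is correct in substance, but it takes a different route from the paper.

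\textbf{How the paper argues.} The paper does not analyse the batch update of Section~\ref{sec:hypergraph_memory}. Instead it recasts EDHMM as sample-by-sample online $k$-means whose normaliser is the \emph{cumulative} win count $n_k^{(t)}$, and it sets $\alpha_t=1$. Neither choice matches the stated update. With these choices the gain $1/n_k^{(t)}\approx 1/(t\pi_k)$ decays automatically, so the Robbins--Monro step conditions hold without any schedule. The mean ODE is then $\dot{\mathbf{p}}_k=\mathbf{G}_k/\pi_k$, and the paper simply asserts $\nabla_{\mathbf{p}_k}J=-2\mathbf{G}_k$.

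\textbf{How your route differs, and what each buys.} You keep the within-batch normaliser and the constant $\alpha_{\text{mem}}$ that the method actually specifies. This is more faithful to the method, but it forces you to impose a decaying $\alpha_{\text{mem}}$; you rightly note that otherwise only constant-gain tracking is available. The paper's route buys automatic decreasing gains at the cost of changing the algorithm. Both mean fields are positive multiples of $\mathbf{G}_k$, so they share its zeros.

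Your ``ratio-estimator bias'' is in fact harmless. Conditional on the assignment indicators, the assigned $\mathbf{Z}_i$ are i.i.d.\ from the law of $\mathbf{z}$ given $k^*=k$. Writing $m_k$ for the batch count, the conditional mean increment is therefore exactly $\alpha_{\text{mem}}\,\mathbb{E}[m_k/(m_k+\epsilon)]\,\pi_k^{-1}\mathbf{G}_k$. With a deterministic gain and this mean field, the noise is a genuine martingale difference and no zero is shifted.

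You also supply two things the paper omits:
\begin{itemize}
\item the reduction of $\arg\max_k\mathbf{M}_{ik}$ to nearest-prototype assignment;
\item an actual justification of the gradient identity, with its hypotheses stated (no mass on Voronoi boundaries, distinct prototypes).
\end{itemize}

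\textbf{The step that fails.} Your first remedy for non-differentiability does not work. For fixed kernels, $\mathbf{Z}_i$ is a deterministic function of the binary tensor $\mathbf{S}_i\in\{0,1\}^{T\times H}$. Its law is therefore supported on at most $2^{TH}$ points, however the real-valued $\mathbf{K}_t$ are chosen. Learnable kernels move the atoms but never create a density.

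Use your second remedy instead, or sharpen it as follows. For a finitely supported law, ties occur only on a Lebesgue-null set of prototype configurations. More importantly, they cannot occur at a local minimum with distinct prototypes:
\begin{itemize}
\item Suppose an atom $\mathbf{z}$ of positive mass is equidistant from $\mathbf{p}_k$ and some $\mathbf{p}_l\neq\mathbf{p}_k$.
\item Perturb only $\mathbf{p}_k$. The one-sided directional derivatives of $J$ along $\pm\mathbf{d}$ sum to $-2\sum_{\mathbf{z}'}w_{\mathbf{z}'}|\langle\mathbf{z}'-\mathbf{p}_k,\mathbf{d}\rangle|$, the sum running over atoms tied at $\mathbf{p}_k$.
\item This sum is negative for $\mathbf{d}=\mathbf{z}-\mathbf{p}_k\neq\mathbf{0}$, so one of the two directional derivatives is negative, and the point is not a local minimum.
\end{itemize}
Hence at every local minimum with distinct prototypes, $J$ is differentiable and $\nabla J=-2\mathbf{G}$ holds exactly. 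That is where the identification matters for Theorem~\ref{thm:edhmm_convergence}.
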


\begin{proof}[\textbf{Proof Sketch}]
The online update in EDHMM (with learning rate $\alpha_t$) is
\[
    \mathbf{p}_k^{(t+1)} = \mathbf{p}_k^{(t)} + \frac{\alpha_t}{n_k^{(t)}+\epsilon}\,\mathbb{I}(k^*_t = k)\,(\mathbf{z}_t - \mathbf{p}_k^{(t)}),
\]
where $n_k^{(t)}$ counts how many times prototype $k$ has been the winner up to time $t$. For large $t$, $n_k^{(t)} \approx t\,\pi_k$ with $\pi_k = \Pr(k^* = k)$, so the effective step size for prototype $k$ behaves like $\alpha_t/(t\pi_k)$. Setting $\alpha_t = 1$ (as in the EDHMM description) resembles a standard online $k$-means algorithm with a per-prototype learning rate decaying as $1/(t\pi_k)$. In continuous time, the mean drift and the gradient of the distortion function $J$ are
\[
    \dot{\mathbf{p}}_k = \frac{1}{\pi_k}\,\mathbb{E}\bigl[\mathbb{I}(k^* = k)\,(\mathbf{z} - \mathbf{p}_k)\bigr], \qquad
    \nabla_{\mathbf{p}_k} J = -2\,\mathbb{E}\bigl[\mathbb{I}(k^* = k)\,(\mathbf{z} - \mathbf{p}_k)\bigr],
\]
so the zeros of $\mathbf{G}_k$ are exactly the stationary points of $J$.
\end{proof}

\begin{theorem}[\textbf{EDHMM Convergence}]
\label{thm:edhmm_convergence}
Let $\{\mathbf{z}^{(t)}\}_{t \geq 1}$ be i.i.d.\ samples from a distribution with compact support in $\mathbb{R}^H$. Under the EDHMM update with $\alpha_t = 1$ (and implicit normalisation by $n_k^{(t)}$), the prototypes $\{\mathbf{p}_k^{(t)}\}$ converge almost surely to local minima of the expected reconstruction error $J(\{\mathbf{p}_k\}) = \mathbb{E}[\min_k \|\mathbf{z} - \mathbf{p}_k\|^2]$. Moreover, if the distribution is a mixture of well-separated components, each prototype converges to a distinct component mean.
\end{theorem}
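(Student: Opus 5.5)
The plan is to treat the EDHMM update as online $k$-means, i.e. MacQueen's algorithm, and to run the standard stochastic-approximation argument. Lemma~\ref{lem:edhmm_convergence} already casts it as a Robbins--Monro scheme.

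\textbf{Step 1: recursion.} With $\alpha_t=1$ and normalisation by the win count $n_k^{(t)}$, prototype $k$ after its $n$-th win equals the running mean of the $n$ samples assigned to it. The recursion is
\[
\mathbf{p}_k^{(t+1)}=\mathbf{p}_k^{(t)}+\gamma_k^{(t)}\bigl(\mathbf{G}_k(\{\mathbf{p}_l^{(t)}\})+\boldsymbol{\xi}_k^{(t)}\bigr),
\]
with $\gamma_k^{(t)}\approx 1/(t\pi_k)$ and martingale-difference noise $\boldsymbol{\xi}_k^{(t)}$. Compact support gives two facts: the iterates stay in the convex hull of the support (they are convex combinations of samples), and the noise is bounded. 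The step sizes satisfy $\sum_t\gamma=\infty$ and $\sum_t\gamma^2<\infty$ on the event that prototype $k$ wins infinitely often. A prototype that eventually never wins is frozen and plays no role in $J$ locally, so it can be discarded.

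\textbf{Step 2: ODE and Lyapunov function.} Apply the Kushner--Clark / Bena\"im ODE method. The iterates are bounded, the step sizes satisfy Robbins--Monro conditions, and the noise is bounded, so the limit set is almost surely an internally chain-transitive set of
\[
\dot{\mathbf{p}}_k=\pi_k^{-1}\mathbf{G}_k .
\]
By Lemma~\ref{lem:edhmm_convergence}, this field equals $-\tfrac{1}{2\pi_k}\nabla_{\mathbf{p}_k}J$, a positively rescaled negative gradient, so $J$ is a strict Lyapunov function off its stationary set. Hence the iterates converge almost surely to the set of stationary points of $J$, which are the centroidal Voronoi configurations.

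\textbf{Step 3: local minima.} To upgrade ``stationary'' to ``local minima'', use avoidance of unstable equilibria in the style of Pemantle and Brandi\`ere--Duflo. This needs the noise to have a nondegenerate component along unstable directions, which I would assume holds when the distribution has a density.

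\textbf{Step 4: well-separated mixtures.} Suppose the distribution is a mixture whose components have supports of diameter $r$ at mutual distances greater than $cr$, with the number of components equal to $P$. Show that any configuration with two prototypes in one component and none in another has strictly higher $J$ than moving one prototype over, so it is not a local minimum. In fact it is a saddle, and that saddle is avoided by Step 3. At the remaining stationary point, each Voronoi cell contains exactly one component, so each prototype equals that component's mean.

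\textbf{Main obstacle.} Steps 3 and 4 are the hardest. First, $J$ is only $C^1$ where Voronoi boundaries carry zero mass, so the field $\mathbf{G}_k$ is Lipschitz only under a density assumption. Second, the ``well-separated'' claim requires a quantitative separation condition and correct initialisation. With a bad initialisation, a prototype can be stranded with no samples at all: such a dead prototype gives a configuration that is degenerate rather than a strict saddle. I would therefore add the hypotheses explicitly: a density, $P$ equal to the number of components, and every prototype winning with positive probability initially.
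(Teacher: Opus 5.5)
\textbf{Steps 1--3.} Your Steps 1--2 are the paper's Part 1. Both write the update as a Robbins--Monro scheme with random gain $1/(n_k^{(t)}+\epsilon)$, pass to the ODE $\dot{\mathbf{p}}_k=\pi_k^{-1}\mathbf{G}_k$, and use $J$ as a Lyapunov function together with Kushner--Clark. Making this rigorous needs two routine additions: augment the state with $n_k^{(t)}/t$, and check that the set of stationary values of $J$ has empty interior, as Bena\"im's Lyapunov criterion requires.

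Your Step 3 goes beyond the paper, which passes from ``stationary points'' to ``local minima'' with no argument. Saddle avoidance is the right tool. However, Pemantle and Brandi\`ere--Duflo need a sufficiently smooth field and a linearly unstable direction at the equilibrium. Degenerate stationary points of $J$ are therefore not covered, and the first claim holds only under extra hypotheses.

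\textbf{The gap is in Step 4.} Your inference ``this configuration has larger $J$ than the one obtained by moving a prototype into the empty component, hence it is not a local minimum'' is a non sequitur, because that move is not a local perturbation. The conclusion ``in fact it is a saddle'' is also false at any level of separation, as the following example shows.

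\emph{Counterexample.} Take the equal-weight mixture of the uniform laws on $[-1,1]$, $[99,101]$, $[149,151]$, with $P=3$, and the configuration $(p_1,p_2,p_3)=(-\tfrac12,\tfrac12,125)$.
\begin{itemize}
\item Under perturbations of size less than $1$, the first component is still served only by $p_1,p_2$ and the other two only by $p_3$.
\item So locally $J$ equals $\tfrac13$ times the two-centre distortion of $U[-1,1]$ (strictly minimised at $\pm\tfrac12$), plus a strictly convex quadratic in $p_3$ minimised at $125$.
\item Hence this is a strict local minimum. It satisfies all your added hypotheses: densities, $P$ equal to the number of components, and every prototype winning with positive probability.
\end{itemize}
Saddle avoidance cannot remove it, and the algorithm does reach it. Suppose two initial prototypes lie in $[-1,1]$ and one in $[99,151]$; this has probability $2/9$ under sampling-based initialisation. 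Each iterate is a convex combination of samples it has won, so induction keeps $p_1,p_2\in[-1,1]$ and $p_3\in[99,151]$ forever. The assignments therefore never change, and $p_3\to125$ almost surely, so $p_3$ converges to no component mean.

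\emph{What is missing.} You need an initialisation/invariance hypothesis, not a landscape argument. Suppose one prototype starts in each component and the supports, of diameter $r$, are more than $2r$ apart. The same induction then shows that each prototype stays in the convex hull of its own component and wins exactly that component's samples. It converges to $\mu_k$ by the strong law.

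This is the mechanism of the paper's Part 2, which uses basins $B_k$ plus the strong law. However, the paper asserts rather than assumes that each prototype eventually enters its own basin, and the example above also contradicts that assertion.
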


\begin{proof}[\textbf{Proof Sketch}]
\textbf{Part 1: Convergence to stationary points.}
Write the update as
\[
    \mathbf{p}_k^{(t+1)} = \mathbf{p}_k^{(t)} + \gamma_{k,t}\,\mathbf{H}_k^{(t)},
\]
with $\gamma_{k,t} = \frac{1}{n_k^{(t)}+\epsilon}$ and $\mathbf{H}_k^{(t)} = \mathbb{I}(k^*_t = k)\,(\mathbf{z}_t - \mathbf{p}_k^{(t)})$. This is a stochastic approximation whose gain $\gamma_{k,t}$ depends on the entire past through $n_k^{(t)}$. Standard results \cite{maass1997networks} show that the interpolated process converges almost surely to the solution of the ODE
\[
    \dot{\mathbf{p}}_k = \frac{1}{\pi_k(\mathbf{p})}\,\mathbb{E}\bigl[\mathbb{I}(k^* = k)\,(\mathbf{z} - \mathbf{p}_k)\bigr],
\]
where $\pi_k(\mathbf{p}) = \Pr(k^* = k)$. Since the right-hand side is proportional to $-\nabla_{\mathbf{p}_k}J$ (up to the positive factor $1/(2\pi_k)$), $J$ serves as a Lyapunov function:
\[
    \frac{d}{dt}J = \sum_k \nabla_{\mathbf{p}_k}J \cdot \dot{\mathbf{p}}_k = -\frac{1}{2}\sum_k \frac{\|\nabla_{\mathbf{p}_k}J\|^2}{\pi_k} \leq 0.
\]
All limit points of the ODE are therefore stationary points of $J$, and by the Kushner--Clark lemma the stochastic approximation converges almost surely to this set.

\medskip
\textbf{Part 2: Separation into distinct components.}
Assume the data arise from a mixture of $P$ distributions with means $\mu_1,\dots,\mu_P$ satisfying
\[
    \min_{k \neq l}\|\mu_k - \mu_l\| > 4\max_k \sigma_k,
\]
where $\sigma_k^2$ is the maximum variance of component $k$. Initialise prototypes $\{\mathbf{p}_k^{(0)}\}_{k=1}^P$ by sampling from the data distribution. Under this condition, the Voronoi cells
\[
    V_k = \bigl\{\mathbf{z} : \|\mathbf{z} - \mu_k\| < \|\mathbf{z} - \mu_l\|\;\;\forall\, l \neq k\bigr\}
\]
are disjoint with $\Pr(\mathbf{z} \in V_k \mid \text{component } k) = 1$. Define the basin $B_k = \{\mathbf{p} : \|\mathbf{p} - \mu_k\| < \frac{1}{2}\min_{l \neq k}\|\mu_k - \mu_l\|\}$. With probability one there exists a finite time $t_k$ such that $\mathbf{p}_k^{(t)} \in B_k$ for all $t \geq t_k$. Thereafter the update
\[
    \mathbf{p}_k^{(t+1)} = \mathbf{p}_k^{(t)} + \frac{\alpha_t}{n_k^{(t)}+\epsilon}\sum_{i:\,\mathbf{z}_i \in V_k}\!\bigl(\mathbf{z}_i - \mathbf{p}_k^{(t)}\bigr)
\]
uses only samples from component $k$, so by the strong law of large numbers $\mathbf{p}_k^{(t)} \xrightarrow{a.s.} \mu_k$. The homeostatic and strength dynamics
\[
    \mathbf{h}_k^{(t+1)} = \lambda_h \,\mathbf{h}_k^{(t)}, \qquad
    \mathbf{s}_k^{(t+1)} = \lambda_s \,\mathbf{s}_k^{(t)} + \eta_s \,\sigma\!\left(\tfrac{n_k^{(t)}}{\theta_s}\right)
\]
with decay factors $\lambda_h, \lambda_s \in (0,1)$, learning increment $\eta_s > 0$, and scaling constant $\theta_s > 0$, amplify the influence of frequently assigned points, creating positive feedback that prevents prototype switching. A Borel--Cantelli argument \cite{kushner2003stochastic}, \cite{kushner2012stochastic} then establishes that with probability one each prototype converges to a distinct component mean.
\end{proof}

\begin{remark}
This theorem guarantees that EDHMM learns stable prototypes corresponding to normal behaviour patterns. The convergence rate is $O(1/\sqrt{t})$ in expectation, typical of stochastic approximation algorithms. The separation condition ensures that distinct normal regimes are captured by different prototypes, a crucial property for detecting anomalies as deviations from any learned prototype.
\end{remark}

\subsubsection{Anomaly Selection Properties of Contrast Pooling}
We analyse the SRCGP mechanism and prove that it selects anomalous nodes with higher probability than normal nodes.

\begin{lemma}[\textbf{Concentration of Irregularity Scores}]
\label{lem:srcgp_selection}
For a normal node \(i\) whose spike trains follow a Poisson process with rate \(\lambda_i\), the coefficient of variation \(\mathrm{CV}_i\) satisfies \(\mathbb{E}[\mathrm{CV}_i] = 1\) and 
\[
\mathrm{Var}(\mathrm{CV}_i) \leq \frac{2}{\lambda_i T} + O\left(\frac{1}{T^2}\right).
\]
For an anomalous node with overdispersed spikes (e.g., a renewal process with variance larger than the mean), \(\mathbb{E}[\mathrm{CV}_i] > 1\) and the excess grows with the dispersion parameter.
\end{lemma}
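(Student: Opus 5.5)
The approach is to reduce everything to the classical asymptotics of the sample coefficient of variation of i.i.d.\ inter-spike intervals. I will work with each feature $j$ separately and then average over $j=1,\dots,H$.

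\textbf{Step 1: reduce to i.i.d.\ exponential intervals with a random sample size.} For a Poisson process of rate $\lambda_i$ observed on $[0,T]$, condition on the number of spikes $N_{ij}=n+1$. Given this count, the spike times are distributed as ordered uniform points. The $n$ interior ISIs $\Delta_{ij}^{(l)}$ are then approximately i.i.d.\ $\mathrm{Exp}(\lambda_i)$. I will make this exact by working with the process started at its first spike and using the renewal property, so the only approximation comes from the censored last interval. In this law the population CV is exactly $\sigma/\mu = 1$.

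\textbf{Step 2: delta method for the sample CV.} Write $\widehat{\mathrm{CV}} = \hat s/(\hat m+\epsilon)$ and expand $g(m,v)=\sqrt v/m$ around $(\mu,\sigma^2)$, using the joint CLT for $(\hat m,\hat s^2)$. The standard formula for a law with CV $c$, skewness $\gamma_1$ and kurtosis $\kappa$ is
\[
\mathrm{Var}(\widehat{\mathrm{CV}})=\frac{c^2}{n}\Bigl(c^2-\gamma_1 c+\tfrac{\kappa-1}{4}\Bigr)+O(n^{-2}).
\]
For the exponential law, $c=1$, $\gamma_1=2$ and $\kappa=9$, which gives $\mathrm{Var}=1/n+O(n^{-2})$. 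The second-order term of the same expansion gives bias $\mathbb{E}[\widehat{\mathrm{CV}}]=1+O(1/n)$. So the claim $\mathbb{E}[\mathrm{CV}_i]=1$ has to be read as holding to leading order. I will state it that way explicitly, together with the $O(1/(\lambda_iT))$ bias, and treat the $\epsilon$ in the denominator as a further negligible perturbation.

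\textbf{Step 3: uncondition on $n$.} Here $n+1\sim\mathrm{Poisson}(\lambda_iT)$, so $n$ concentrates at $\lambda_iT$. I will split on the event $\{n\ge \lambda_iT/2\}$:
\begin{itemize}
\item On this event, $1/n\le 2/(\lambda_iT)$, and Steps 1--2 apply.
\item On its complement, whose probability is $e^{-c\lambda_iT}$ by a Chernoff bound, I bound the contribution using the trivial bound $\mathrm{CV}\le\sqrt{n}$, valid for nonnegative samples. This is absorbed into $O(T^{-2})$.
\end{itemize}
This is exactly where the constant $2$ in $2/(\lambda_iT)$ comes from. Finally, $\mathrm{CV}_i$ is an average of the $H$ per-feature CVs. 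By Cauchy--Schwarz, the variance of an average is at most the maximum per-feature variance, whatever the correlations across features. This yields the stated bound.

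\textbf{Step 4: anomalous nodes.} For a renewal process whose ISI law has CV $c>1$ (for instance a Gamma law with shape $k<1$, where $c=1/\sqrt k$), the same delta-method expansion gives $\mathbb{E}[\mathrm{CV}_i]=c+O(1/(\lambda_iT))$. This exceeds $1$ for large $T$, and the excess increases monotonically as the dispersion parameter increases ($k\downarrow$).

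\textbf{Main obstacle.} I expect the hardest step to be making Steps 1--3 rigorous uniformly. Two things interfere: the sample size is random, and the window boundary censors the first and last intervals, which breaks exact i.i.d.-ness. My first attempt will use the conditional-uniform representation of Poisson points. I will bound the effect of dropping the two boundary intervals by $O(1/n)$ in both bias and variance, and control the remainder terms of the delta method with moment bounds on exponential sample moments, which are available because all moments are finite.
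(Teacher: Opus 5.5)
Your proposal is correct, and it takes a genuinely different and more careful route than the paper. The paper's sketch simply asserts that the sample CV of $n\approx\lambda_i T$ exponential intervals has mean $1$ and variance $\approx 1/(2n)$. It does not treat the random number of intervals, the $\epsilon$, or the averaging over the $H$ features. For the overdispersed case it writes $\mathbb{E}[\mathrm{CV}^2]=\sigma^2/\mu^2>1$ and invokes Jensen.

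Your route buys two real corrections.
\begin{itemize}
\item Your skewness/kurtosis delta-method formula gives leading variance $1/n$ for exponential ISIs, not $1/(2n)$. The paper's figure coincides with the Gaussian small-CV approximation $c^2/(2n)$ at $c=1$. Both values lie under the claimed $2/(\lambda_i T)$. Your Chernoff split, the deterministic bound $\mathrm{CV}\le\sqrt n$, and the Cauchy--Schwarz step over features are what turn this into a proof of the stated inequality.
\item Jensen gives $\mathbb{E}[\mathrm{CV}]\le\sqrt{\mathbb{E}[\mathrm{CV}^2]}$, which is the wrong direction. Moreover, $\mathbb{E}[\mathrm{CV}^2]=\sigma^2/\mu^2$ is itself only asymptotic. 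So the paper's argument for $\mathbb{E}[\mathrm{CV}_i]>1$ does not go through, while your direct expansion $\mathbb{E}[\mathrm{CV}_i]=c+O(1/n)$ does.
\end{itemize}
You are also right that $\mathbb{E}[\mathrm{CV}_i]=1$ can only hold to leading order: the sample CV of exponential intervals is biased downward at order $1/n$.

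Three small refinements.
\begin{itemize}
\item In Step~1 no boundary approximation is needed. Given $n+1$ points in $[0,T]$, the $n+2$ spacings are distributed as $T(E_0,\dots,E_{n+1})/\sum_k E_k$ with $E_k$ i.i.d.\ $\mathrm{Exp}(1)$. Since the CV is scale-invariant, the sample CV of the $n$ interior ISIs has exactly the law of the sample CV of $n$ i.i.d.\ exponentials, up to $\epsilon$. This removes your stated main obstacle.
\item In Step~3, bound $\mathrm{Var}(\mathrm{CV})\le\mathbb{E}[(\mathrm{CV}-1)^2]$ and condition on $n$. That way the squared conditional bias, of order $n^{-2}$, is accounted for alongside the conditional variance.
\item In Step~4, the conditional-uniform representation and the Poisson count are specific to the Poisson case, so ``the same expansion'' does not transfer verbatim to a general renewal process. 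The qualitative claim $\mathbb{E}[\mathrm{CV}_i]\to c>1$ follows from the renewal strong law plus uniform integrability (e.g.\ $\sup_T\mathbb{E}[\mathrm{CV}_i^2]<\infty$). The $O(1/(\lambda_i T))$ rate needs its own renewal-theoretic unconditioning.
\end{itemize}
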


\begin{proof}[\textbf{Proof Sketch}]
For a Poisson process, inter-spike intervals are i.i.d. exponential with mean \(1/\lambda\) and variance \(1/\lambda^2\). The sample CV based on \(n \approx \lambda T\) intervals has mean 1 and variance approximately \(1/(2n)\); Hence \(\mathrm{Var}(\mathrm{CV}_i) \approx \frac{1}{2\lambda T}\). For an overdispersed renewal process, let the interval distribution have mean \(\mu\) and variance \(\sigma^2 > \mu^2\). Then the squared CV satisfies \(\mathbb{E}[\mathrm{CV}^2] = \sigma^2/\mu^2 > 1\), so by Jensen's inequality \(\mathbb{E}[\mathrm{CV}] > 1\). The burst measure \(\mathrm{Burst}_i\) estimates \(\Pr(\Delta t < 3)\); under a Poisson process this equals \(1 - e^{-3\lambda}\), while for bursty processes this probability is larger.
\end{proof}

\begin{theorem}[\textbf{SRCGP Anomaly Selection}]
\label{thm:srcgp_selection}
Consider \(N\) nodes with \(N_a\) anomalies and \(N_n = N - N_a\) normal nodes. Let the irregularity scores \(s_i = \gamma_{\mathrm{CV}}\mathrm{CV}_i + \gamma_{\mathrm{burst}}\mathrm{Burst}_i\) be i.i.d. within each class, with distributions \(F_a\) (anomalous) and \(F_n\) (normal). Assume \(F_a\) stochastically dominates \(F_n\), i.e., \(F_a(x) \leq F_n(x)\) for all \(x\). Let \(K = \lceil \rho N \rceil\) be the number of nodes selected (those with the highest scores). Then
\[
\mathbb{E}[\#\text{anomalies among the selected}] \;\geq\; K\,\frac{N_a}{N}.
\]
If additionally the scores are sub-Gaussian with parameters \(\sigma_a, \sigma_n\) and have means \(\mu_a > \mu_n\), then with probability at least \(1 - \delta\),
\[
\#\text{anomalies} \;\geq\; K\,\frac{N_a}{N} \;+\; K\,\frac{N_a N_n}{N^2}\,\frac{\mu_a - \mu_n}{2(\sigma_a + \sigma_n)} \;-\; C\sqrt{K\log(1/\delta)},
\]
for some universal constant \(C > 0\).
\end{theorem}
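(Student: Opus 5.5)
The first claim follows from exchangeability together with a rank-based argument. I will model the $N$ scores as independent, with the $N_a$ anomalous ones drawn from $F_a$ and the $N_n$ normal ones from $F_n$, and break ties uniformly at random. The number of anomalies among the top $K$ is $\sum_{i\in\mathcal{A}}\mathbb{I}(\mathrm{rank}(s_i)\le K)$, so by linearity of expectation its mean is $N_a\,\Pr(\mathrm{rank}(s_a)\le K)$ for a generic anomalous node $a$. Similarly, the expected number of selected normal nodes is $N_n\,\Pr(\mathrm{rank}(s_n)\le K)$, and the two expectations add up to $K$.

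It is therefore enough to show $p_a:=\Pr(\mathrm{rank}(s_a)\le K)\ge p_n:=\Pr(\mathrm{rank}(s_n)\le K)$. From $N_a p_a+N_n p_n=K$ and $p_a\ge p_n$ we get $p_a\ge K/N$, which gives $\mathbb{E}[\#]\ge KN_a/N$.

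To compare $p_a$ and $p_n$, I will fix one anomalous node $a$ and one normal node $n$ and condition on the other $N-2$ scores. Given those scores, the event that a node with score $x$ is selected is monotone nondecreasing in $x$. Applying the standard coupling $s_a=F_a^{-1}(U)\ge F_n^{-1}(U)$, which is valid because $F_a\le F_n$, the anomalous node is selected whenever the normal node would be in the swapped configuration. The swap has to be done carefully: I will compare the pair $(s_a,s_n)$ with the exchanged pair $(s_n,s_a)$, so that the remaining $N-2$ scores keep the same law in both configurations. This gives $p_a\ge p_n$.

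The high-probability bound is where the real work lies. I would write $X=\sum_{i\in\mathcal{A}}\mathbb{I}(s_i\ge \hat q)$, where $\hat q$ is the empirical $K$-th order statistic. Because $\hat q$ is random, I would replace it by a deterministic population quantile $q$ of the mixture $\bar F=(N_a F_a+N_n F_n)/N$, chosen so that $N(1-\bar F(q))=K$. Sub-Gaussian concentration, through a DKW-type bound on $\bar F$, controls $|\hat q-q|$, and the resulting shift in the count costs at most $O(\sqrt{K\log(1/\delta)})$. With $q$ deterministic, $\sum_{i\in\mathcal{A}}\mathbb{I}(s_i\ge q)$ is a sum of independent Bernoullis, and Hoeffding's inequality adds another $C\sqrt{K\log(1/\delta)}$ deviation term.

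It then remains to lower-bound the mean $N_a(1-F_a(q))$. A short computation gives
\[
N_a(1-F_a(q)) - \tfrac{KN_a}{N}=\tfrac{N_aN_n}{N}\bigl(F_n(q)-F_a(q)\bigr).
\]
So the key quantity is a lower bound on the gap $F_n(q)-F_a(q)$ in terms of $(\mu_a-\mu_n)/(2(\sigma_a+\sigma_n))$.

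This gap bound is the main obstacle. Stochastic dominance together with sub-Gaussian tails does not give such a bound at every $q$; in particular, the gap can be tiny when $q$ lies far out in the tails. My first attempt would use the midpoint threshold $m=(\mu_a+\mu_n)/2$. At $m$, the sub-Gaussian tails give $F_n(m)\ge 1-e^{-(\mu_a-\mu_n)^2/8\sigma_n^2}$ and $F_a(m)\le e^{-(\mu_a-\mu_n)^2/8\sigma_a^2}$. Using $1-e^{-x}\ge x/(1+x)$, this would be linearized to a bound of order $(\mu_a-\mu_n)/(\sigma_a+\sigma_n)$, at least in the small-gap regime after rescaling.

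To transfer the gap from $m$ to $q$, I would use $K/N\approx\rho$. If this transfer fails in general, I would add an explicit assumption that $q$ lies within $O(\sigma_a+\sigma_n)$ of $m$, for example $\rho\approx N_a/N$, and state the factor $K/N$ so that it matches the claimed term $K N_aN_n(\mu_a-\mu_n)/(2N^2(\sigma_a+\sigma_n))$.
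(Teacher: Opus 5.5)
Your proof of the first inequality is sound. It follows the paper's skeleton: $\mathbb{E}[A]=N_a p_a$, then $N_a p_a+N_n p_n=K$, then $p_a\ge p_n$. Your pairwise swap coupling, conditional on the other $N-2$ scores, is the correct way to do the comparison; the paper instead conditions on $q_K$ as though it were independent of $s_a$.

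The genuine gap is the step you flag in the second part, and it cannot be closed, because the second inequality is false under the stated hypotheses. Your reduction of the mean excess to $\frac{N_aN_n}{N}\bigl(F_n(q)-F_a(q)\bigr)$ is correct. But this quantity is at most $\frac{N_aN_n}{N}$, while the claimed excess grows linearly in $(\mu_a-\mu_n)/(\sigma_a+\sigma_n)$. Take $F_a=\mathcal{N}(\mu_a,\sigma^2)$, $F_n=\mathcal{N}(\mu_n,\sigma^2)$ and $N_a=N_n=K=N/2$. The right-hand side is then $\frac{N}{4}+\frac{N}{32}\cdot\frac{\mu_a-\mu_n}{\sigma}-C\sqrt{\frac{N}{2}\log\frac{1}{\delta}}$. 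This exceeds $\min(K,N_a)=N/2$ once $(\mu_a-\mu_n)/\sigma>8$ and $N$ is large.

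Small effects do not rescue the claim either. Let normal scores be uniform on $[0,1]$ and anomalous scores be $U+\epsilon\,\mathbb{I}(U>0.9)$ with $U$ uniform. Then $F_a=F_n$ on $[0,0.9]$. With $\rho=1/2$ and $N_a=N_n$, the population quantile is $q=1/2$, so $F_n(q)-F_a(q)=0$ and $A=KN_a/N+O_P(\sqrt{N})$. The claimed bound, however, sits $\Theta(\epsilon N)-O(\sqrt{N})$ above $KN_a/N$.

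The same examples show why your fallbacks fail.
\begin{itemize}
\item At the midpoint you only get $F_n(m)-F_a(m)\ge 1-e^{-(\mu_a-\mu_n)^2/8\sigma_n^2}-e^{-(\mu_a-\mu_n)^2/8\sigma_a^2}$. This is negative, hence vacuous, when $\mu_a-\mu_n\ll\sigma_a,\sigma_n$, and it is at most $1$ in any regime. So no bound linear in the effect size comes out of it.
\item Assuming $|q-m|=O(\sigma_a+\sigma_n)$, or $\rho\approx N_a/N$, does not help. The uniform example satisfies both ($|q-m|=0.05\epsilon$ and $\rho=N_a/N$), yet the gap at $q$ is zero.
\end{itemize}

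What your argument does prove, for continuous $F_a,F_n$, is that with probability at least $1-\delta$
\[
A\;\ge\;\frac{KN_a}{N}+\frac{N_aN_n}{N}\bigl(F_n(q)-F_a(q)\bigr)-C\sqrt{N\log(1/\delta)} .
\]
Note the $\sqrt{N}$ rather than $\sqrt{K}$. Hoeffding over $N_a$ indicators and DKW over $N$ scores do not give $\sqrt{K}$ with a universal $C$. For that you need a Bernstein-type bound that uses the fact that the relevant counts have variance at most $K$.

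Turning the gap at $q$ into a mean-difference term requires an explicit extra hypothesis. One option is a location family whose density is bounded below near $q$, together with a bounded effect size.

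The paper's proof does not supply the missing idea either:
\begin{itemize}
\item It asserts $p_a-p_n\ge\frac{1}{2}\,\frac{\mu_a-\mu_n}{\sigma_a+\sigma_n}$ ``via a Gaussian approximation''.
\item It re-chooses $t$ after having fixed it at the midpoint.
\item Its counting inequality $A\ge 2\hat N_a(t)+\hat N_n(t)-K$ fails whenever more than $K$ scores exceed $t$.
\end{itemize}
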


\begin{proof}[\textbf{Proof Sketch}]
Let $q_K$ be the $K$-th order statistic of the combined sample. The expected number of selected anomalies is
\[
    \mathbb{E}[A] = \mathbb{E}\!\left[\sum_{i=1}^{N_a} \mathbf{1}\{s_i^{(a)} \geq q_K\}\right] = N_a\,\Pr(s_a \geq q_K),
\]
where the second equality follows by exchangeability. Since $F_a \leq F_n$, for any fixed threshold $t$ we have $\Pr(s_a \geq t) \geq \Pr(s_n \geq t)$.
Conditioning on \(q_K\) and integrating yields
\[
\mathbb{E}[A] = N_a \mathbb{E}[\Pr(s_a \geq q_K \mid q_K)] \geq N_a \mathbb{E}[\Pr(s_n \geq q_K \mid q_K)] = N_a \cdot \frac{K - \mathbb{E}[A]}{N_n},
\]
where the last equality uses that the expected number of normal nodes above $q_K$ is $K - \mathbb{E}[A]$. Solving gives $\mathbb{E}[A] \geq K N_a/N$.

For the second bound, set $t = (\mu_a + \mu_n)/2$. By sub-Gaussian tail bounds,
\[
\Pr(s_n > t) \leq \exp\!\left(-\frac{(t - \mu_n)^2}{2\sigma_n^2}\right), \quad
\Pr(s_a \leq t) \leq \exp\!\left(-\frac{(\mu_a - t)^2}{2\sigma_a^2}\right).
\]
Let $p_a = \Pr(s_a > t)$ and $p_n = \Pr(s_n > t)$. Define the empirical counts:
\[
\hat{N}_a(t) = \sum_{i=1}^{N_a} \mathbf{1}\{s_i^{(a)} > t\}, \quad
\hat{N}_n(t) = \sum_{i=1}^{N_n} \mathbf{1}\{s_i^{(n)} > t\}.
\]
By Hoeffding's inequality, with probability at least $1 - \delta/2$,
\[
|\hat{N}_a(t) - N_a p_a| \leq \sqrt{\frac{N_a}{2}\log\frac{2}{\delta}}, \quad
|\hat{N}_n(t) - N_n p_n| \leq \sqrt{\frac{N_n}{2}\log\frac{2}{\delta}}.
\]

The number of selected nodes with score $> t$ is $\hat{N}_a(t) + \hat{N}_n(t)$. If this sum exceeds $K$, then $q_K > t$; otherwise $q_K \leq t$. In either case, the number of selected anomalies satisfies
\[
A \geq \hat{N}_a(t) - (K - \hat{N}_a(t) - \hat{N}_n(t))_+ \geq 2\hat{N}_a(t) + \hat{N}_n(t) - K,
\]
where the second inequality uses that at most $K - \hat{N}_a(t) - \hat{N}_n(t)$ additional nodes from below $t$ can be selected if $q_K \leq t$. Taking expectations and using the concentration bounds,
\[
\mathbb{E}[A] \geq 2N_a p_a + N_n p_n - K - O\left(\sqrt{N\log(1/\delta)}\right).
\]

Now observe that $N_a p_a + N_n p_n$ is the expected number of nodes with score $> t$. By the mean value theorem and sub-Gaussianity,
\[
p_a - p_n \geq \frac{\mu_a - \mu_n}{\sigma_a + \sigma_n} \cdot c
\]
for some constant $c \geq 1/2$ (via a Gaussian approximation). Choose $t$ so that $N_a p_a + N_n p_n = K + \Delta$ with $\Delta \approx K \frac{N_a N_n}{N^2} \frac{\mu_a - \mu_n}{\sigma_a + \sigma_n}$. Substituting yields
\[
\mathbb{E}[A] \geq K\frac{N_a}{N} + K\frac{N_a N_n}{N^2}\frac{\mu_a - \mu_n}{2(\sigma_a + \sigma_n)} - C\sqrt{K\log(1/\delta)},
\]
where $C$ absorbs constants from the concentration inequalities. A union bound over the two concentration events gives probability at least $1 - \delta$.
\end{proof}

\begin{remark}
This theorem quantifies the advantage of irregularity-based pooling: the expected number of anomalies in the selected set exceeds the random expectation by an amount proportional to the effect size \(\mu_a - \mu_n\). It justifies SRCGP as an effective mechanism for focusing computational resources on potentially anomalous nodes while providing finite-sample concentration guarantees.
\end{remark}

\subsubsection{Convergence and Stability of STDP Learning}
We analyse the STDP update rule and show that it converges to a fixed point that encodes temporal correlations.

\begin{lemma}[\textbf{STDP as Gradient of an Energy Function}]
\label{lem:stdp_convergence}
Consider a pair of neurons with average spike times $\bar{t}_{\text{pre}}$ and $\bar{t}_{\text{post}}$. The expected STDP update $\mathbb{E}[\Delta w]$ can be written as the negative gradient of an energy function:
\[
    \mathbb{E}[\Delta w] = -\frac{\partial E}{\partial w},
\]
where, up to first order in the time difference,
\[
    E(w) = \frac{A_+\tau_+}{2}\,\mathbb{E}\bigl[(\bar{t}_{\text{post}} - \bar{t}_{\text{pre}})^2\,\mathbb{I}(\bar{t}_{\text{post}} > \bar{t}_{\text{pre}})\bigr]
           + \frac{A_-\tau_-}{2}\,\mathbb{E}\bigl[(\bar{t}_{\text{pre}} - \bar{t}_{\text{post}})^2\,\mathbb{I}(\bar{t}_{\text{pre}} > \bar{t}_{\text{post}})\bigr].
\]
\end{lemma}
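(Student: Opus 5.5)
The plan is to split the claim into a structural part and a computational part. The structural part is that $\mathbb{E}[\Delta w]$ is a gradient field in $w$. The computational part is that the potential has, to first order, the quadratic form displayed in the statement.

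\textbf{Structural part.} For a single synapse $w$ is a scalar. Any expected update that depends continuously on $w$ is therefore automatically a negative gradient, namely of
\[
E(w) := -\int_{w_0}^{w} \mathbb{E}[\Delta w](w')\,dw'.
\]
Continuity of $\mathbb{E}[\Delta w]$ in $w$ comes from two facts. First, the first-spike times are Lipschitz in the input current, by Lemma~\ref{lem:spike_encoding}. Second, the STDP kernel in Definition~3 is continuous on each side of $\Delta=0$. Writing $\Delta := \bar t_{\text{post}}-\bar t_{\text{pre}}$, the kernel jump at $\Delta=0$ is harmless in expectation provided $\Pr(\Delta=0)=0$, which I will assume. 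On the compact interval allowed by the clipping $\mathrm{clip}(\cdot,-1,1)$, this gives existence of $E$ immediately.

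\textbf{Computational part.} I would reparametrise by the timing difference. Only $\bar t_{\text{post}}$ depends on $w$: a larger $w$ injects more current and the post neuron fires earlier. I then linearise $\Delta(w)$ locally, with slope $\partial\Delta/\partial w=-\kappa$, using Lemma~\ref{lem:spike_encoding} through the relation $dt_f/di = -\tfrac{\theta}{i^2}(g^{-1})'(\theta/i)$. Next I expand the kernel to first order: $A_\pm e^{-|\Delta|/\tau_\pm}\approx A_\pm(1-|\Delta|/\tau_\pm)$. Substituting both into the expected update and integrating in $w$ (equivalently in $\Delta$, via $dw=-d\Delta/\kappa$), the $|\Delta|$ terms produce the half-squares $\tfrac12\Delta^2\mathbb{I}(\Delta>0)$ and $\tfrac12\Delta^2\mathbb{I}(\Delta<0)$. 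These come with prefactors proportional to $A_+$ and $A_-$, respectively. The two one-sided pieces glue continuously at $\Delta=0$ because each half-square vanishes there.

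\textbf{Expected obstacle: matching constants and the zeroth-order term.} I expect the main difficulty here rather than in existence. The naive calculation gives prefactors $A_\pm/(\kappa\tau_\pm)$, not $A_\pm\tau_\pm$. It also leaves zeroth-order terms $A_+\Pr(\Delta>0)-A_-\Pr(\Delta<0)$, which are linear in $w$ and absent from the stated $E$.

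My first attempt at the prefactors would be to fix the time units of the encoder so that $\kappa=\tau_\pm^{-2}$. This is admissible because $\kappa$ is set by $\theta$ and the input scale, which the TSGE parameters control.

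For the zeroth-order terms, I would either restrict to the balanced regime $A_+\Pr(\Delta>0)=A_-\Pr(\Delta<0)$, or absorb them into $E$ as a linear term in $\Delta$ and interpret "up to first order" as covering it.

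If neither route is acceptable, the fallback is to state the lemma with $E$ defined by the integral above. The displayed quadratic form is then its leading-order expansion, with an explicit $O(\mathbb{E}|\Delta|^3)$ remainder controlled by Lemma~\ref{lem:spike_encoding}.
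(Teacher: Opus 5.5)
\paragraph{Verdict.} There is a genuine gap: the proposal does not establish the displayed energy, and its suggested patches do not work.

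\paragraph{The sign obstruction.} Your structural part is correct, but it only gives existence of \emph{some} potential. The substance of the lemma is the displayed quadratic $E$, and there your own computation, done with the signs kept, rules it out rather than proving it. Take $\Delta(w)=\Delta_0-\kappa w$ with $\kappa>0$, let $k$ be the linearised kernel, and choose $K$ with $K'=k$. Then $E(w)=\kappa^{-1}\mathbb{E}[K(\Delta(w))]$ satisfies $-\partial E/\partial w=\mathbb{E}[k(\Delta)]$, and explicitly
\[
E(w)=\frac{1}{\kappa}\,\mathbb{E}\Bigl[A_+\Bigl(|\Delta|-\frac{\Delta^2}{2\tau_+}\Bigr)\mathbb{I}(\Delta>0)+A_-\Bigl(|\Delta|-\frac{\Delta^2}{2\tau_-}\Bigr)\mathbb{I}(\Delta<0)\Bigr]+\mathrm{const}.
\]
This holds because on both sides the first-order kernel term is $-(A_\pm/\tau_\pm)\Delta$, and $\kappa^{-1}\int_0^{\Delta}(-c\,s)\,ds=-c\,\Delta^2/(2\kappa)$. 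The half-squares therefore carry the \emph{negative} coefficient $-A_\pm/(2\kappa\tau_\pm)$. Fixing units with $\kappa=\tau_\pm^{-2}$ thus yields exactly minus the displayed quadratic. Matching the statement would need $\kappa=-\tau_\pm^{-2}$, i.e.\ a stronger synapse \emph{delaying} the postsynaptic spike. That contradicts the monotonicity you take from Lemma~\ref{lem:spike_encoding}. A change of units can adjust a magnitude, never a sign.

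\paragraph{The zeroth-order terms.} These are a second, independent obstruction. They contribute $\kappa^{-1}\mathbb{E}[A_+|\Delta|\,\mathbb{I}(\Delta>0)+A_-|\Delta|\,\mathbb{I}(\Delta<0)]$, which is of \emph{lower} order in $|\Delta|$ than the half-squares and genuinely depends on $w$. Its $w$-derivative is $-(A_+\Pr(\Delta>0)-A_-\Pr(\Delta<0))$, and these probabilities move with $w$. So the balanced condition makes this term stationary only where the condition happens to hold, not identically. Absorbing it as a linear term changes the form of $E$ instead of proving it.

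\paragraph{The fallback.} Your fallback is false as stated. The displayed form is not the leading-order expansion of $-\int\mathbb{E}[\Delta w]\,dw$: that expansion's leading term is the $|\Delta|$ term, and its quadratic part is $-A_\pm\Delta^2/(2\kappa\tau_\pm)$.

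\paragraph{The paper's sketch.} Do not expect it to fill this in. It uses the same expand-and-integrate idea, but:
\begin{itemize}
\item it asserts the constant terms cancel when $A_+=A_-$, which is false unless also $\Pr(\Delta>0)=\Pr(\Delta<0)$;
\item it works with prefactors $A_\pm/(2\tau_\pm)$, which already differ from the statement's $A_\pm\tau_\pm/2$;
\item it closes with a mean-field assumption that the law of $\Delta t$ is independent of $w$, under which the displayed $E$ is constant in $w$ and has zero gradient.
\end{itemize}
That last assumption is in fact what holds in the STDP layer of Section~\ref{sec:stdp_learning}. There $\bar{\mathbf{t}}$ comes from the encoder and does not depend on $\mathbf{W}$, so $\mathbb{E}[\Delta w]$ is constant and the only potential is linear in $w$.

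\paragraph{What you can prove.} Under your added assumption that $\Delta$ depends on $w$, you can establish the explicit potential above, or a correspondingly restated lemma. You cannot establish the displayed $E$.
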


\begin{proof}[Proof Sketch]
The STDP rule for a single pair is
\[
    \Delta w =
    \begin{cases}
        A_+\, e^{-|\Delta t|/\tau_+},  & \Delta t > 0, \\
        -A_-\, e^{-|\Delta t|/\tau_-}, & \Delta t < 0,
    \end{cases}
\]
with $\Delta t = t_{\text{post}} - t_{\text{pre}}$. For small $|\Delta t|$, expanding $e^{-|\Delta t|/\tau} \approx 1 - |\Delta t|/\tau$ gives
\[
    \Delta w \approx
    \begin{cases}
        A_+(1 - \Delta t/\tau_+),  & \Delta t > 0, \\
        -A_-(1 + \Delta t/\tau_-), & \Delta t < 0.
    \end{cases}
\]
Taking expectations over spike times, the constant terms cancel when $A_+ = A_-$. In general, define
\[
    E(w) = \frac{A_+}{2\tau_+}\,\mathbb{E}\bigl[(\Delta t)^2\,\mathbb{I}(\Delta t > 0)\bigr]
           + \frac{A_-}{2\tau_-}\,\mathbb{E}\bigl[(\Delta t)^2\,\mathbb{I}(\Delta t < 0)\bigr].
\]
Then
\[
    -\frac{\partial E}{\partial w}
    = \frac{A_+}{\tau_+}\,\mathbb{E}[\Delta t\,\mathbb{I}(\Delta t > 0)]
    - \frac{A_-}{\tau_-}\,\mathbb{E}[\Delta t\,\mathbb{I}(\Delta t < 0)]
    + \text{(distribution derivative terms)}.
\]
In the mean-field approximation where the distribution of $\Delta t$ is treated as independent of $w$, this recovers the expected STDP update.
\end{proof}

\begin{theorem}[\textbf{STDP Convergence}]
\label{thm:stdp_convergence}
Consider the STDP weight dynamics for a single synapse:
\[
    \frac{dw}{dt} = \mathbb{E}[\Delta w \mid w] = f(w),
\]
where $f(w)$ is the expected update given the current weight. Assume $f$ is Lipschitz and has a unique zero $w^*$ with $f'(w^*) < 0$. Then the stochastic approximation $w_{t+1} = w_t + \beta_t\,\Delta w_t$ with $\sum_t \beta_t = \infty$ and $\sum_t \beta_t^2 < \infty$ converges almost surely to $w^*$. Moreover, if the spike-time distribution depends on $w$ in a sufficiently regular way, the convergence rate is exponential in the number of updates.
\end{theorem}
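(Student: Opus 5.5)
The plan is the standard ODE (Robbins--Monro) method for one-dimensional stochastic approximation, in the form already used for Theorem~\ref{thm:edhmm_convergence}. Write the update as
\[
w_{t+1} = w_t + \beta_t\bigl(f(w_t) + M_{t+1}\bigr), \qquad M_{t+1} = \Delta w_t - \mathbb{E}[\Delta w_t \mid \mathcal{F}_t],
\]
so that $M_{t+1}$ is a martingale-difference sequence with respect to the natural filtration $\mathcal{F}_t$.

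\textbf{Step 1: boundedness.} Every STDP increment satisfies $|\Delta w_t| \le \max(A_+,A_-)$ by the exponential form in Definition~3. Hence $\mathbb{E}[M_{t+1}^2\mid\mathcal{F}_t] \le 4\max(A_+,A_-)^2$. Because $\sum_t \beta_t^2<\infty$, the martingale $\sum_t \beta_t M_{t+1}$ is $L^2$-bounded and converges almost surely.

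Boundedness of the iterates $w_t$ themselves is enforced by the clipping to $[-1,1]$ (or $[w_{\min},w_{\max}]$) used in Algorithm~\ref{alg:training}. To keep the projection from creating spurious equilibria, I would add the mild assumption that $w^*$ lies in the interior of the clipping interval. Then the projected ODE has the same unique stable point.

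\textbf{Step 2: Lyapunov argument for the ODE.} I would take $V(w)=(w-w^*)^2/2$. In one dimension, a unique zero together with $f'(w^*)<0$ and continuity of $f$ forces $f(w)>0$ for $w<w^*$ and $f(w)<0$ for $w>w^*$. Otherwise $f$ would have to cross zero a second time, contradicting uniqueness. Consequently $\dot V = (w-w^*)f(w) < 0$ for all $w\neq w^*$, so $w^*$ is globally asymptotically stable on the compact clipped interval. The Kushner--Clark lemma (or Benaïm's asymptotic-pseudotrajectory theorem) then implies that the iterates converge almost surely to the internally chain-transitive set of the ODE, which is $\{w^*\}$. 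Lemma~\ref{lem:stdp_convergence} supplies the interpretation of $f$ as $-E'$, with $E$ the energy function; the proof itself only needs the Lipschitz continuity of $f$.

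\textbf{Step 3: the rate claim.} This is the main obstacle, because with $\sum_t\beta_t^2<\infty$ and nonvanishing noise, almost-sure exponential convergence in the number of updates is generally false. The natural rate is $\mathbb{E}(w_t-w^*)^2=O(\beta_t)$, which equals $O(1/t)$ for $\beta_t=c/t$ with $c|f'(w^*)|>1/2$. I would therefore prove exponential convergence in two settings.

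\begin{itemize}
\item \emph{The mean ODE.} Linearise at $w^*$ and use a strong-monotonicity bound $(w-w^*)f(w)\le -\kappa(w-w^*)^2$ on the clipped interval; such a $\kappa>0$ exists by compactness and the sign structure from Step~2. Grönwall's inequality then gives $|w(s)-w^*|\le e^{-\kappa s}|w(0)-w^*|$ in ODE time $s=\sum_{k<t}\beta_k$. This is exponential in accumulated step size.
\item \emph{Vanishing noise.} Here I read the "sufficiently regular dependence" hypothesis as saying that the spike-time distribution concentrates, so that $\mathbb{E}[M_{t+1}^2\mid\mathcal{F}_t]\le K(w_t-w^*)^2$, i.e.\ the noise vanishes at the fixed point. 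Under this assumption and a constant step $\beta<2\kappa/(L^2+K)$, where $L$ is the Lipschitz constant of $f$, the recursion contracts:
\[
\mathbb{E}(w_{t+1}-w^*)^2\le\bigl(1-2\beta\kappa+\beta^2(L^2+K)\bigr)\,\mathbb{E}(w_t-w^*)^2 .
\]
This gives geometric decay in the number of updates.
\end{itemize}

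I would state explicitly that this is the regime in which the "exponential" assertion holds.
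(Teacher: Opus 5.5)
For almost-sure convergence you follow the paper's ODE-method route: martingale-difference decomposition, global asymptotic stability of $\dot w=f(w)$, and Kushner--Clark. You also fill in what its sketch omits: the bound $|\Delta w_t|\le\max(A_+,A_-)$ that makes $\sum_t\beta_tM_{t+1}$ converge, and the one-dimensional sign argument that makes $(w-w^*)^2/2$ a global strict Lyapunov function.

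The clipping and the interior assumption are unnecessary, so you can treat the unclipped recursion the statement names. Since $|f|\le A:=\max(A_+,A_-)$, the process $z_t=w_t-\sum_{k<t}\beta_kM_{k+1}$ rises by at most $\beta_tA$ per step. It strictly decreases whenever $z_t>w^*+\sup_s\bigl|\sum_{k<s}\beta_kM_{k+1}\bigr|$, and symmetrically from below. Hence $w_t$ is a.s.\ bounded without any projection.

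On the rate you depart from the paper, and rightly. The paper's last step says the iterates inherit the decay $e^{f'(w^*)t}$ of the linearised ODE, but that $t$ is ODE time $s=\sum_{k<t}\beta_k$, not the number of updates. Your first bullet is the correct version of that argument. Your $\kappa$, the minimum over the compact interval of the continuous positive function $-f(w)/(w-w^*)$ (equal to $-f'(w^*)$ at $w^*$), makes it global rather than merely local.

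Sharpen the diagnosis, though: noise is not the only obstruction, since $\beta_t\to0$ already rules the claim out. Even the noiseless recursion $w_{t+1}=w_t+\beta_tf(w_t)$ satisfies $|w_{t+1}-w^*|\ge(1-\beta_tL)|w_t-w^*|$ by the Lipschitz bound. Hence $|w_t-w^*|\ge|w_{t_0}-w^*|\exp\bigl(-2L\sum_{t_0\le k<t}\beta_k\bigr)$ once $\beta_kL\le\tfrac12$ for $k\ge t_0$. Since $\sum_{k<t}\beta_k=o(t)$ by Ces\`aro, the error is $e^{-o(t)}$ unless $w_{t_0}=w^*$ exactly. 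So no regularity assumption on the spike-time distribution can make the second claim true under $\sum_t\beta_t^2<\infty$. Present your constant-step result as a corrected theorem outside the stated hypotheses, not as a reading of them.

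Be aware, too, that your vanishing-noise condition is very restrictive for the increments of Definition~3. With spike times in a bounded window, $|\Delta w|$ is bounded away from zero whenever $\Delta t\neq0$. So $\mathbb{E}[M_{t+1}^2\mid\mathcal{F}_t]\le K(w_t-w^*)^2$ forces $\Pr(\Delta t\neq0\mid w_t)\to0$ as $w_t\to w^*$, which is a degenerate spike-time distribution at the fixed point.
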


\begin{proof}[\textbf{Proof Sketch}]
The update takes the form $w_{t+1} = w_t + \beta_t(f(w_t) + \xi_t)$, where $\xi_t$ is a martingale difference noise term. Under the stated assumptions, the ODE method applies: the associated ODE $\dot{w} = f(w)$ has a globally asymptotically stable equilibrium at $w^*$, and the stochastic approximation converges almost surely to $w^*$. The exponential rate follows from the linearised ODE $\dot{w} = f'(w^*)(w - w^*)$ near $w^*$, whose solutions decay as $e^{f'(w^*)t}$; the stochastic approximation inherits this rate with high probability.
\end{proof}

\begin{remark}
This theorem ensures that the STDP layer learns stable weights reflecting the temporal correlations in spike times. The fixed point $w^*$ encodes a balance between potentiation and depression, capturing the causal structure of the input.
\end{remark}

\subsubsection{Calibration of Multi-Factor Anomaly Fusion}
Finally, we analyse the fusion mechanism and prove that it produces calibrated anomaly scores.

\begin{theorem}[\textbf{Multi-Factor Anomaly Fusion Calibration}]
\label{thm:fusion_calibration}
Let $y_i \in \{0,1\}$ be the true anomaly indicator for node $i$. For each detection component $k \in \{1,\dots,5\}$, let $\hat{p}_{ik}$ be the estimated anomaly probability, and assume conditional unbiasedness given the node's state $\mathbf{z}_i$:
\[
    \mathbb{E}[\hat{p}_{ik} \mid \mathbf{z}_i] = \Pr(y_i = 1 \mid \mathbf{z}_i).
\]
Let $\boldsymbol{\lambda} \in \Delta^4$ be fusion weights. Then the fused score $p_i = \sum_k \lambda_k \hat{p}_{ik}$ is also conditionally unbiased. Moreover, if the estimators are conditionally independent given $\mathbf{z}_i$, the variance of $p_i$ is minimised when $\lambda_k \propto 1/\sigma_k^2$, where $\sigma_k^2 = \mathrm{Var}(\hat{p}_{ik} \mid \mathbf{z}_i)$. Under this optimal weighting,
\[
    \frac{\mathrm{Var}(p_i)}{\sigma_{\min}^2} = \frac{1}{\sum_k (\sigma_k^2/\sigma_{\min}^2)^{-1}} \leq \frac{1}{5}.
\]
\end{theorem}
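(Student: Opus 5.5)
The plan has three steps, all conditional on $\mathbf{z}_i$: linearity of expectation, a constrained variance minimisation over the simplex $\Delta^4$, and the final ratio estimate. The last step is the delicate one; as worded, it can only be secured under an extra condition, explained below.

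\emph{Unbiasedness.} Since $\sum_k\lambda_k=1$ and $\lambda_k\ge 0$, linearity of conditional expectation gives
$\mathbb{E}[p_i\mid\mathbf{z}_i]=\sum_k\lambda_k\,\mathbb{E}[\hat p_{ik}\mid\mathbf{z}_i]=\sum_k\lambda_k\Pr(y_i=1\mid\mathbf{z}_i)=\Pr(y_i=1\mid\mathbf{z}_i)$.
This step needs no independence.

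\emph{Optimal weights.} Conditional independence kills the cross-covariances, so $\mathrm{Var}(p_i\mid\mathbf{z}_i)=\sum_k\lambda_k^2\sigma_k^2$. I will minimise this subject to $\sum_k\lambda_k=1$, either with a Lagrange multiplier or directly by Cauchy--Schwarz:
\[
1=\Bigl(\sum_k \lambda_k\sigma_k\cdot\sigma_k^{-1}\Bigr)^2\le\Bigl(\sum_k\lambda_k^2\sigma_k^2\Bigr)\Bigl(\sum_k\sigma_k^{-2}\Bigr).
\]
Equality holds iff $\lambda_k\propto\sigma_k^{-2}$. These weights are nonnegative, so they lie in $\Delta^4$, and the minimum value is $V^*=\bigl(\sum_k\sigma_k^{-2}\bigr)^{-1}$. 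Dividing by $\sigma_{\min}^2$ yields the displayed identity
\[
V^*/\sigma_{\min}^2=\Bigl(\sum_k(\sigma_k^2/\sigma_{\min}^2)^{-1}\Bigr)^{-1}.
\]
Here $\mathrm{Var}(p_i)$ in the statement is read as the conditional variance given $\mathbf{z}_i$, which is the quantity the hypotheses control.

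\emph{The bound $\le \tfrac15$ (main obstacle).} To get $V^*/\sigma_{\min}^2\le\tfrac15$ one needs $\sum_{k=1}^5\sigma_{\min}^2/\sigma_k^2\ge 5$. Each summand is at most $1$, so this sum is at most $5$. The inequality therefore holds exactly when every summand equals $1$, i.e.\ in the homoscedastic case $\sigma_1=\dots=\sigma_5$, where it becomes the equality $\tfrac15$.

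I would first try to obtain the inequality in general by reading the normaliser as the worst single component, i.e.\ writing the ratio against $\sigma_{\max}^2$. Then each term $\sigma_{\max}^2/\sigma_k^2\ge 1$, the sum is at least $5$, and $V^*/\sigma_{\max}^2\le\tfrac15$ holds for all variance profiles. This is the precise sense of the abstract's ``up to $5\times$ variance reduction''.

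With $\sigma_{\min}$ taken literally, this route is unavailable: for unequal $\sigma_k$ the ratio lies strictly in $(\tfrac15,1]$, so the stated inequality fails. My write-up will therefore prove the identity in general and the $\tfrac15$ bound under the equal-variance hypothesis (or relative to $\sigma_{\max}^2$), stating this condition explicitly.
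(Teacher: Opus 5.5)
Your unbiasedness step and your optimal-weight step are the same as in the paper's proof. The paper also uses linearity of conditional expectation, then minimises $\sum_k\lambda_k^2\sigma_k^2$ subject to $\sum_k\lambda_k=1$ and obtains $\lambda_k\propto\sigma_k^{-2}$. It does this with a Lagrange multiplier. Your Cauchy--Schwarz version is slightly cleaner: it gives the lower bound for every feasible $\lambda$ and the equality case directly, without appealing to convexity, and you also check that the minimiser lies in $\Delta^4$. Your reading of $\mathrm{Var}(p_i)$ as the conditional variance given $\mathbf{z}_i$ is the one the paper's proof uses.

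On the final bound you are right and the paper is not. The paper justifies $\le\tfrac15$ by asserting that each term $(\sigma_k^2/\sigma_{\min}^2)^{-1}\ge 1$ and that there are five terms. Since $\sigma_k^2\ge\sigma_{\min}^2$, each term $\sigma_{\min}^2/\sigma_k^2$ is in fact at most $1$. So the sum is at most $5$, and the ratio is at least $\tfrac15$. Equality holds exactly in the equal-variance case that the paper itself mentions. The displayed inequality is therefore reversed in general. What is true is $\tfrac15\le V^*/\sigma_{\min}^2\le 1$: the fused score never does worse than the best single component, and it improves on it by at most a factor of five. Both of your repairs are valid: assume equal variances, or normalise by $\sigma_{\max}^2$.

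One correction to your commentary concerns the abstract's phrase ``up to $5\times$ variance reduction''. That phrase matches the corrected $\sigma_{\min}$ direction, meaning a reduction factor of at most $5$ relative to the best component. Your $\sigma_{\max}$ version says something different: relative to the worst component, the reduction is at least $5\times$. Both statements are true, but they are different claims.
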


\begin{proof}[\textbf{Proof Sketch}]
Conditional unbiasedness follows directly from linearity of expectation:
\[
    \mathbb{E}[p_i \mid \mathbf{z}_i]
    = \sum_k \lambda_k\,\mathbb{E}[\hat{p}_{ik} \mid \mathbf{z}_i]
    = \sum_k \lambda_k\,\Pr(y_i = 1 \mid \mathbf{z}_i)
    = \Pr(y_i = 1 \mid \mathbf{z}_i).
\]
For the variance, conditional independence gives $\mathrm{Var}(p_i \mid \mathbf{z}_i) = \sum_k \lambda_k^2\,\sigma_k^2$. Minimising subject to $\sum_k \lambda_k = 1$ via Lagrange multipliers yields
\[
    \lambda_k = \frac{1/\sigma_k^2}{\sum_l 1/\sigma_l^2}, \qquad
    \frac{\mathrm{Var}(p_i)}{\sigma_{\min}^2}
    = \frac{1}{\sum_k (\sigma_k^2/\sigma_{\min}^2)^{-1}}
    \leq \frac{1}{5},
\]
since each term $(\sigma_k^2/\sigma_{\min}^2)^{-1} \geq 1$ and the sum runs over $5$ components, with equality when all variances are equal.

\end{proof}

\begin{remark}
This theorem shows that multi-factor fusion not only preserves calibration but can significantly reduce variance, leading to more reliable anomaly scores. In practice the components are not perfectly independent, but the analysis provides a theoretical justification for combining multiple detection signals.
\end{remark}

\subsection{Computational Analysis}
\label{sec:complexity}

We analyse the computational complexity and energy efficiency of ASTDP-GAD, demonstrating its suitability for neuromorphic deployment. Let \(N\) denote the number of nodes, \(H\) the hidden dimension, \(M\) the number of attention heads, \(P\) the number of prototypes, \(T\) the simulation steps, and \(|\mathcal{E}|\) the number of edges. The average spike rate is \(\lambda \ll 1\), reflecting the sparse, event-driven nature of neuromorphic computation.

\textit{The time complexity per batch arises from each component.} The TSGE spike encoding requires \(O(T N H^2)\) operations for synaptic current updates and \(O(T N H)\) for membrane dynamics, with sparsity reducing the effective cost to \(O(\lambda T N H^2)\). The LIFGAT component computes attention scores for existing edges at cost \(O(T |\mathcal{E}| H)\) and applies lateral inhibition at \(O(T N M^2)\), yielding total \(O(T(|\mathcal{E}| H + N M^2))\). EDHMM computes prototype distances in \(O(N P H)\) and updates in \(O(P H)\), independent of \(T\). SRCGP calculates spike rates in \(O(N H)\), inter-spike interval statistics in \(O(\lambda T N H)\), and top-\(K\) selection in \(O(N \log N)\). The STDP layer performs a forward pass in \(O(N H^2)\) and weight updates in \(O(H^2)\). MSTC executes parallel convolutions in \(O(T N H)\) and projection in \(O(N H)\). Finally, fusion and loss computation add \(O(N H)\). The dominant terms are \(O(T N H^2)\) from TSGE and \(O(T |\mathcal{E}| H)\) from LIFGAT; for sparse graphs with \(|\mathcal{E}| \approx N\), total complexity is \(O(T N H^2)\), with sparsity reducing the constant by factor \(\lambda\).

\textit{Space complexity is dominated by parameter storage}: TSGE requires \(O(F H + H^2)\), LIFGAT \(O(M H^2)\), EDHMM \(O(P H)\), STDP \(O(H^2)\), and fusion \(O(H^2)\), giving total \(O(H^2 + P H)\). During training, the spike tensor \(\mathbf{S} \in \{0,1\}^{T \times N \times H}\) requires \(O(T N H)\) bits, while spike times \(\mathbf{T} \in \mathbb{R}^{N \times H}\) and counts \(\mathbf{C} \in \mathbb{Z}^{N \times H}\) require \(O(N H)\) floats.

Energy efficiency follows from spike-based computation. Neuromorphic hardware consumes energy proportional to the number of spike events \(E \propto \lambda T N H \cdot E_{\text{event}}\), where \(E_{\text{event}}\) is energy per spike (typically pJ). In contrast, conventional GNNs perform dense floating-point operations at cost \(O(T N H^2) \cdot E_{\text{FLOP}}\) with \(E_{\text{FLOP}} \gg E_{\text{event}}\). The sparsity factor \(\lambda \ll 1\) thus yields substantial energy savings.

Scalability is linear in \(N\), \(|\mathcal{E}|\), and \(T\), and quadratic in \(H\). For graphs with \(N > 10^5\), mini-batch sampling or graph partitioning maintains tractability. Prototype count \(P\) is constant, ensuring memory overhead independent of \(N\), while sparse attention avoids the \(O(N^2)\) bottleneck of dense attention mechanisms. Overall, ASTDP-GAD achieves \(O(T N H^2)\) time complexity with effective sparsity reduction \(\lambda \ll 1\), offering provably efficient neuromorphic anomaly detection suitable for resource-constrained edge deployment.

The theoretical analysis establishes ASTDP-GAD on rigorous foundations: Theorem~\ref{thm:spike_encoding} guarantees that spike encoding provably preserves input information with resolution scaling linearly in simulation steps \(T\) and hidden dimension \(H\); Theorem~\ref{thm:lifgat_approximation} demonstrates that the LIF-based graph attention mechanism can approximate any continuous attention function with arbitrarily small error; Theorem~\ref{thm:edhmm_convergence} proves convergence of the event-driven hypergraph memory to optimal prototypes that minimise reconstruction error; Theorem~\ref{thm:srcgp_selection} quantifies the advantage of spike rate contrast pooling, showing anomalous nodes are selected with strictly higher probability than normal nodes; Theorem~\ref{thm:stdp_convergence} establishes convergence of STDP learning to stable weights that encode temporal correlations; and Theorem~\ref{thm:fusion_calibration} demonstrates that multi-factor fusion produces calibrated anomaly scores with provable variance reduction under conditional independence. Additionally, computational analysis in Section~\ref{sec:complexity} shows that ASTDP-GAD achieves \(O(T N H^2)\) time complexity with effective sparsity factor \(\lambda \ll 1\), enabling energy-efficient deployment on neuromorphic hardware. Collectively, these results establish ASTDP-GAD as a principled, theoretically grounded framework for energy-efficient neuromorphic anomaly detection in dynamic networks.
\section{Additional Experiments}
\label{sec:add_experiments}
\subsection{Data Preprocessing and Feature Engineering}
\label{sec:preprocessing}

We evaluate ASTDP-GAD on a diverse set of static and dynamic graph anomaly detection benchmarks. The datasets are organised into four categories: dynamic temporal graphs (DBLP \cite{liu2021anomaly}, Tmall \cite{liu2021anomaly}, Patent \cite{liu2021anomaly}) with time steps of 27, 186, and 25 respectively; social networks (BlogCatalog \cite{liu2021anomaly}, T-Social \cite{li2023scaling}, Flickr \cite{liu2021anomaly}); fraud detection networks (Yelp \cite{dou2025deanomaly}, T-Finance \cite{wang2025graph}, Amazon \cite{liu2021anomaly}); and a synthetic dataset for controlled experiments. Table~\ref{tab:datasets} summarises the statistics of all evaluated datasets. For datasets that require anomaly injection (e.g., synthetic graphs), we follow standard procedures \cite{liu2021anomaly, dou2025deanomaly}: structural anomalies are created by forming dense cliques among a subset of anomaly nodes, and feature anomalies are injected by adding Gaussian noise to a random subset of feature dimensions, with noise levels calibrated to the dataset's feature scale. For the real‑world datasets, we use the original anomaly labels when available; otherwise, we inject anomalies at the published anomaly ratios specified in the dataset configurations.

All node features are normalised using z‑score transformation:
\[
\tilde{\mathbf{X}} = \frac{\mathbf{X} - \boldsymbol{\mu}}{\boldsymbol{\sigma} + \epsilon},\quad
\boldsymbol{\mu} = \frac{1}{N}\sum_{i=1}^N \mathbf{x}_i,\quad
\boldsymbol{\sigma}^2 = \frac{1}{N}\sum_{i=1}^N (\mathbf{x}_i - \boldsymbol{\mu})^2,
\]
with \(\epsilon=10^{-9}\) for numerical stability. Nodes are randomly partitioned into training (60\%), validation (20\%), and test (20\%) sets using stratified sampling to preserve the anomaly ratio across splits \cite{pareja2020evolvegcn}. For large graphs (e.g., Yelp, T‑Finance), we employ CSR‑based node mini‑batching \cite{yin2024dynamic} that extracts subgraphs on the fly, limiting per‑batch memory to \(O(|\mathcal{B}| \cdot \bar{d})\) where \(\bar{d}\) is the average degree. For smaller datasets, we use graph‑level batching with zero padding \cite{li2023scaling}. After preprocessing, all features are converted into spike trains by the adaptive LIF encoder described in Section~\ref{sec:spike_encoding} \cite{maass1997networks, lu2024deep}.

\begin{table}[ht]
\centering
\caption{Dataset statistics for graph anomaly detection, dynamic temporal graphs with multiple time-steps, social networks with natural anomalies, fraud detection networks, and synthetic controllable graphs. Time-steps indicate either actual temporal evolution or STDP simulation steps for static graphs.}
\label{tab:datasets}
\fontsize{7}{10}\selectfont
\resizebox{\textwidth}{!}{%
\begin{tabular}{lccccccc}
\hline
\textbf{Dataset} & \textbf{Nodes} & \textbf{Edges} & \textbf{Features} & \textbf{Time-Steps} & \textbf{Classes} & \textbf{Anomalies} & \textbf{Type} \\
\hline
\multicolumn{8}{l}{\textit{Dynamic Temporal Graphs}} \\
\hline
DBLP          & 28,085      & 4,807,545   & 64       & 27  & 10  & 2,247 (8.0\%)    & Dynamic Citation  \\
Tmall         & 577,314     & 2,738,012   & 32       & 186 & 5   & 69,278 (12.0\%)  & Dynamic E-commerce\\
Patent        & 236,894     & 13,960,811  & 128      & 25  & 6   & 14,214 (6.0\%)   & Dynamic Patent    \\
\hline
\multicolumn{8}{l}{\textit{Social Networks (Natural Anomalies)}} \\
\hline
BlogCatalog   & 5,196       & 171,743     & 8,189    & 10\textsuperscript{\dag}  & 39  & 300 (5.8\%)      & Social Static     \\
T-Social      & 5,781,065   & 73,105,508  & 10       & 100 & 2   & 174,280 (3.0\%)  & Social Temporal   \\
Flickr        & 7,575       & 239,738     & 12,407   & 10\textsuperscript{\dag}  & 9   & 450 (5.9\%)      & Social Static     \\
\hline
\multicolumn{8}{l}{\textit{Fraud Detection Networks}} \\
\hline
Yelp          & 716,847       & 13,954,819     & 300      & 10\textsuperscript{\dag}  & 2   & 700 (14.0\%)     & Review Fraud      \\
T-Finance     & 39,357      & 21,222,543  & 10       & 50  & 2   & 1,803 (4.6\%)    & Financial Temporal\\
Amazon        & 11,944      & 4,398,392   & 25       & 10\textsuperscript{\dag}  & 2   & 1,135 (9.5\%)    & E-commerce Fraud  \\
\hline
\multicolumn{8}{l}{\textsuperscript{\dag}STDP simulation steps for static graphs (no temporal evolution).} \\
\end{tabular}}
\end{table}

\subsection{Hyperparameter Settings}
\label{sec:hyperparameters}
All experiments are conducted with five independent runs using different random seeds, and the reported results are averaged over these runs to ensure statistical reliability. The number of simulation time steps \(T\) is dataset-specific: dynamic datasets DBLP, Tmall, and Patent use \(T=27\), \(186\), and \(25\) respectively to preserve temporal dynamics, while static datasets (BlogCatalog, Flickr, Yelp, Amazon, T-Social, T-Finance) use \(T=10\) for efficient STDP simulation encoding. The architecture uses a hidden dimension of \(128\) with three LIFGAT layers (\(K=3\)), \(P=50\) prototypes in the hypergraph memory, pooling ratio \(\rho=0.5\) (chosen via sensitivity analysis to balance anomaly preservation and computational cost), and multi-scale temporal kernels \(\{3,5,7\}\). Training employs AdamW with learning rate \(2\times10^{-4}\), weight decay \(5\times10^{-4}\), and batch size \(512\) for large-scale datasets (T-Social, T-Finance, Yelp, Amazon) or \(16\) for smaller datasets (BlogCatalog, Flickr), with node-level minibatch of \(1024\) for memory-constrained graphs. The multi-component loss coefficients are \(\alpha=0.6\) (memory), \(\beta=0.2\) (isolation), and \(\gamma=0.2\) (regularisation). STDP parameters follow established neuromorphic values: \(A_+=0.01\), \(A_-=0.012\), \(\tau_+=\tau_-=20.0\), and \(\beta_{\text{stdp}}=1\times10^{-4}\). EDHMM uses learning rate \(\alpha_{\text{edhmm}}=0.01\) with homeostatic decay \(0.999\). Training runs for up to \(200\) epochs on complex datasets (Tmall, Yelp, Patent, T-Social) and \(50\) epochs on simpler ones (BlogCatalog, Flickr, Amazon), with early stopping patience of \(15\) and a ReduceLROnPlateau scheduler (factor \(0.5\), patience \(10\)). All models are trained on NVIDIA V100 32GB GPUs with mixed precision enabled.

\subsection{Evaluation Metrics}
\label{sec:metrics}

Following prior work in graph anomaly detection \cite{ho2025graph, yang2025generalizable, li2025deep}, we evaluate ASTDP-GAD using the \textit{Area Under the Precision-Recall Curve (AUPRC)} and the \textit{macro F1 score (F1)} as primary metrics. AUPRC is robust to class imbalance \cite{dou2025deanomaly, wang2025mst}, while macro F1 provides a balanced threshold‑dependent measure. We also report the Area Under the Receiver Operating Characteristic Curve (AUROC) as a secondary metric.

Let $\mathbf{y} \in \{0,1\}^N$ denote ground‑truth labels (1 for anomalous) and $\hat{\mathbf{y}} \in [0,1]^N$ predicted anomaly scores. For a binary threshold $\tau$, define:
\[
\begin{aligned}
\text{TP}(\tau) &= \sum_i \mathbb{I}(y_i=1 \land \hat{y}_i \ge \tau), &
\text{FP}(\tau) &= \sum_i \mathbb{I}(y_i=0 \land \hat{y}_i \ge \tau),\\
\text{FN}(\tau) &= \sum_i \mathbb{I}(y_i=1 \land \hat{y}_i < \tau), &
\text{TN}(\tau) &= \sum_i \mathbb{I}(y_i=0 \land \hat{y}_i < \tau).
\end{aligned}
\]
Precision and recall are:
\[
\text{Precision}(\tau) = \frac{\text{TP}(\tau)}{\text{TP}(\tau)+\text{FP}(\tau)},\qquad
\text{Recall}(\tau) = \frac{\text{TP}(\tau)}{\text{TP}(\tau)+\text{FN}(\tau)}.
\]

\textbf{AUPRC} is the area under the precision–recall curve, approximated via average precision (AP):
\[
\text{AUPRC} = \int_0^1 \text{Precision}\bigl(\text{Recall}^{-1}(r)\bigr)\,dr.
\]

\textbf{AUROC} is the area under the ROC curve, which plots the true positive rate (recall) against the false positive rate:
\[
\text{FPR}(\tau) = \frac{\text{FP}(\tau)}{\text{FP}(\tau)+\text{TN}(\tau)},\qquad
\text{AUROC} = \int_0^1 \text{TPR}\bigl(\text{FPR}^{-1}(x)\bigr)\,dx.
\]

The \textbf{macro F1} score averages per-class F1 scores, where for each class $c \in \{0,1\}$ the per-class F1 and the macro average are:
\begin{equation}
\text{F1}_c = 2 \cdot \frac{\text{Precision}_c \cdot \text{Recall}_c}{\text{Precision}_c + \text{Recall}_c}, \qquad
\text{macro F1} = \frac{1}{2}\bigl(\text{F1}_{\text{normal}} + \text{F1}_{\text{anomaly}}\bigr),
\end{equation}
with $\text{Precision}_c$ and $\text{Recall}_c$ computed at the optimal threshold $\tau^*$, selected on the validation set to maximise the binary F1 of the anomaly class via precision--recall curves.

\textbf{Energy efficiency} is measured by average spikes per node per time step:
\begin{equation}
\text{Spike Density} = \frac{1}{NTH} \sum_{t=1}^{T} \sum_{i=1}^{N} \sum_{h=1}^{H} S_{iht},
\end{equation}
where $\mathbf{S} \in \{0,1\}^{N \times T \times H}$ is the spike tensor, $T$ the number of simulation steps, and $H$ the hidden dimension. Power consumption is estimated using typical neuromorphic hardware values \cite{greatorex2025neuromorphic, khan2025spiking}. All results are averaged over five independent runs, consistent with the implementation.

\subsection{Extended Baseline Methods}
\label{app:baseline}
 We compare ASTDP-GAD against a total of 24 SOTA  methods , grouped into (a) static graph anomaly detection and (b) dynamic graph anomaly detection (including dynamic graph neural networks, spiking graph networks, and anomaly detection on dynamic graphs). For non-spiking methods, continuous features are used after the spike encoder for fair comparison.

\textbf{Static graph anomaly detection methods:} DGMES-GAD \cite{singh2025anomaly} uses multiple encoding strategies via transformers; CoLA \cite{liu2021anomaly} employs contrastive self-supervised learning; RAND \cite{bei2023reinforcement} selects neighbourhoods via reinforcement learning; FIAD \cite{chen2025fiad} injects features for anomaly detection; GAGA \cite{wang2023label} enhances fraud detection with label information; GAD-NR \cite{roy2024gad} performs anomaly detection via neighborhood reconstruction; BWGNN \cite{tang2022rethinking} addresses heterophily with band-pass filters; GDN \cite{gao2023alleviating} alleviates structural distribution shift; GHRN \cite{gao2023addressing} tackles heterophily from a graph spectrum perspective; GFCN \cite{mesgaran2024graph} uses graph fairing convolutions; AHFAN \cite{wang2025graph} learns hybrid node representations; TADDY \cite{liu2021anomaly} is also applied to static graphs in its transformer-based formulation.

\textbf{Dynamic graph anomaly detection methods:}
\textit{(a) Dynamic graph neural networks (non‑spiking)}: HTNE \cite{3220054} learns temporal embeddings via neighbourhood formation; M2DNE \cite{3357943} captures micro- and macro-dynamics; DyTriad \cite{zhou2018dynamic} models triadic closure; MPNN-LSTM \cite{panagopoulos2021transfer} combines message passing with LSTM; EvolveGCN \cite{pareja2020evolvegcn} evolves GCN parameters; GeneralDyG \cite{yang2025generalizable} provides generalisable anomaly detection.
\textit{(b) Spiking graph networks:} SpikeNet \cite{li2023scaling} scales dynamic graph learning with spiking neurons; Dy-SIGN \cite{yin2024dynamic} introduces dynamic spiking GNNs; Delay-DSG \cite{wang2025delay} uses delay mechanisms; ChronoSpike \cite{jahin2026chronospike} is an adaptive spiking GNN.
\textit{(c) Dynamic graph anomaly detection:} DSGAD \cite{zheng2025dynamic} performs spectral anomaly detection; SLADE \cite{lee2024slade} detects anomalies in edge streams via self‑supervision; TADDY \cite{liu2021anomaly} is also included for its dynamic graph formulation.

\begin{table*}[!htbp]
\centering
\caption{Detailed ablation study on DBLP, Tmall, and Patent (80\% training). \cmark~= included, \xmark~= excluded. Results show Macro-F1 (\%) and AUPRC (\%) as mean $\pm$ std over five runs.}
\label{table:comprehensive_ablation}
\adjustbox{width=\textwidth,center}
{
\fontsize{6.5}{8}\selectfont
\renewcommand{\arraystretch}{0.75}
\setlength{\tabcolsep}{4pt}
\begin{tabular}{l|ccccccc|rr|rr|rr}
\toprule
\multirow{2}{*}{\textbf{Variant}} &
\multicolumn{7}{c|}{\textbf{Components}} &
\multicolumn{2}{c|}{\textbf{DBLP}} &
\multicolumn{2}{c|}{\textbf{Tmall}} &
\multicolumn{2}{c}{\textbf{Patent}} \\
\cmidrule(lr){2-8}\cmidrule(lr){9-10}\cmidrule(lr){11-12}\cmidrule(lr){13-14}
& \rotatebox{90}{\textbf{TSGE}}
& \rotatebox{90}{\textbf{LIF}}
& \rotatebox{90}{\textbf{EDHMM}}
& \rotatebox{90}{\textbf{SRCGP}}
& \rotatebox{90}{\textbf{STDP}}
& \rotatebox{90}{\textbf{MSTC}}
& \rotatebox{90}{\textbf{FUS}}
& \textbf{Macro-F1} & \textbf{AUPRC}
& \textbf{Macro-F1} & \textbf{AUPRC}
& \textbf{Macro-F1} & \textbf{AUPRC} \\
\midrule
\textbf{ASTDP-GAD (Full)}
  & \cmark & \cmark & \cmark & \cmark & \cmark & \cmark & \cmark
  & \textbf{85.34 $\pm$ 0.52} & \textbf{83.91 $\pm$ 0.59}
  & \textbf{76.89 $\pm$ 0.62} & \textbf{75.12 $\pm$ 0.68}
  & \textbf{92.58 $\pm$ 0.45} & \textbf{90.73 $\pm$ 0.51} \\
\midrule
\multicolumn{14}{c}{\textit{Single-Component Ablations}} \\
\midrule
w/o TSGE
  & \xmark & \cmark & \cmark & \cmark & \cmark & \cmark & \cmark
  & 68.91 $\pm$ 0.89 & 66.67 $\pm$ 0.95
  & 61.23 $\pm$ 0.94 & 58.89 $\pm$ 1.02
  & 77.89 $\pm$ 0.76 & 75.67 $\pm$ 0.84 \\
w/o LIFGAT
  & \cmark & \xmark & \cmark & \cmark & \cmark & \cmark & \cmark
  & 80.12 $\pm$ 0.71 & 78.56 $\pm$ 0.78
  & 71.89 $\pm$ 0.76 & 70.23 $\pm$ 0.83
  & 86.78 $\pm$ 0.62 & 84.89 $\pm$ 0.68 \\
w/o EDHMM
  & \cmark & \cmark & \xmark & \cmark & \cmark & \cmark & \cmark
  & 80.89 $\pm$ 0.68 & 79.12 $\pm$ 0.74
  & 72.89 $\pm$ 0.73 & 71.34 $\pm$ 0.79
  & 86.78 $\pm$ 0.58 & 85.01 $\pm$ 0.64 \\
w/o SRCGP
  & \cmark & \cmark & \cmark & \xmark & \cmark & \cmark & \cmark
  & 82.67 $\pm$ 0.64 & 81.23 $\pm$ 0.70
  & 74.56 $\pm$ 0.69 & 72.89 $\pm$ 0.75
  & 87.56 $\pm$ 0.54 & 85.89 $\pm$ 0.59 \\
w/o STDP
  & \cmark & \cmark & \cmark & \cmark & \xmark & \cmark & \cmark
  & 82.34 $\pm$ 0.62 & 80.89 $\pm$ 0.68
  & 73.89 $\pm$ 0.67 & 72.12 $\pm$ 0.73
  & 88.34 $\pm$ 0.52 & 86.67 $\pm$ 0.57 \\
w/o MSTC
  & \cmark & \cmark & \cmark & \cmark & \cmark & \xmark & \cmark
  & 84.67 $\pm$ 0.60 & 83.12 $\pm$ 0.66
  & 75.34 $\pm$ 0.65 & 73.67 $\pm$ 0.71
  & 89.89 $\pm$ 0.50 & 88.23 $\pm$ 0.55 \\
w/o Multi-Factor Fusion
  & \cmark & \cmark & \cmark & \cmark & \cmark & \cmark & \xmark
  & 84.23 $\pm$ 0.62 & 82.67 $\pm$ 0.68
  & 74.89 $\pm$ 0.67 & 73.23 $\pm$ 0.73
  & 89.01 $\pm$ 0.52 & 87.34 $\pm$ 0.57 \\
\midrule
\multicolumn{14}{c}{\textit{Two-Component Ablations (Selected)}} \\
\midrule
w/o TSGE + EDHMM
  & \xmark & \cmark & \xmark & \cmark & \cmark & \cmark & \cmark
  & 62.89 $\pm$ 1.12 & 60.67 $\pm$ 1.18
  & 55.23 $\pm$ 1.18 & 52.89 $\pm$ 1.24
  & 70.89 $\pm$ 0.94 & 68.67 $\pm$ 1.00 \\
w/o TSGE + SRCGP
  & \xmark & \cmark & \cmark & \xmark & \cmark & \cmark & \cmark
  & 64.23 $\pm$ 1.06 & 61.89 $\pm$ 1.12
  & 56.78 $\pm$ 1.12 & 54.23 $\pm$ 1.18
  & 72.34 $\pm$ 0.88 & 70.01 $\pm$ 0.94 \\
w/o EDHMM + SRCGP
  & \cmark & \cmark & \xmark & \xmark & \cmark & \cmark & \cmark
  & 78.34 $\pm$ 0.80 & 76.67 $\pm$ 0.86
  & 69.89 $\pm$ 0.85 & 67.12 $\pm$ 0.91
  & 83.89 $\pm$ 0.68 & 81.78 $\pm$ 0.74 \\
\midrule
\multicolumn{14}{c}{\textit{Three-Component Ablations (Selected)}} \\
\midrule
w/o TSGE + LIFGAT + EDHMM
  & \xmark & \xmark & \xmark & \cmark & \cmark & \cmark & \cmark
  & 54.78 $\pm$ 1.48 & 52.34 $\pm$ 1.54
  & 46.89 $\pm$ 1.52 & 44.23 $\pm$ 1.58
  & 62.34 $\pm$ 1.24 & 59.78 $\pm$ 1.30 \\
\midrule
\multicolumn{14}{c}{\textit{Replacement Ablations}} \\
\midrule
Replace LIFGAT with GCN
  & \cmark & \multicolumn{1}{c}{\scriptsize GCN} & \cmark & \cmark & \cmark & \cmark & \cmark
  & 82.34 $\pm$ 0.76 & 80.89 $\pm$ 0.82
  & 73.89 $\pm$ 0.81 & 72.12 $\pm$ 0.87
  & 87.89 $\pm$ 0.64 & 86.34 $\pm$ 0.70 \\
Replace EDHMM with $k$-means
  & \cmark & \cmark & \multicolumn{1}{c}{\scriptsize KM} & \cmark & \cmark & \cmark & \cmark
  & 83.67 $\pm$ 0.70 & 82.12 $\pm$ 0.76
  & 74.89 $\pm$ 0.75 & 73.23 $\pm$ 0.81
  & 88.67 $\pm$ 0.58 & 87.12 $\pm$ 0.64 \\
Replace STDP with backprop-only
  & \cmark & \cmark & \cmark & \cmark & \multicolumn{1}{c}{\scriptsize BP} & \cmark & \cmark
  & 84.56 $\pm$ 0.66 & 83.01 $\pm$ 0.72
  & 75.67 $\pm$ 0.71 & 74.12 $\pm$ 0.77
  & 89.89 $\pm$ 0.56 & 88.34 $\pm$ 0.62 \\
Replace MSTC with LSTM
  & \cmark & \cmark & \cmark & \cmark & \cmark & \multicolumn{1}{c}{\scriptsize LSTM} & \cmark
  & 85.01 $\pm$ 0.62 & 83.56 $\pm$ 0.68
  & 76.12 $\pm$ 0.67 & 74.56 $\pm$ 0.73
  & 91.23 $\pm$ 0.52 & 89.78 $\pm$ 0.58 \\
Replace Multi-Factor Fusion with Average
  & \cmark & \cmark & \cmark & \cmark & \cmark & \cmark & \multicolumn{1}{c}{\scriptsize Avg}
  & 84.89 $\pm$ 0.64 & 83.34 $\pm$ 0.70
  & 75.89 $\pm$ 0.69 & 74.23 $\pm$ 0.75
  & 90.56 $\pm$ 0.54 & 89.01 $\pm$ 0.60 \\
\bottomrule
\end{tabular}
}
\end{table*}
\vspace{-0.2cm}
\noindent\textit{Note:} Component ordering within $\pm 0.5\%$ degradation is not statistically significant due to confidence interval overlap; pairwise t-tests ($p<0.05$) confirm that only the top three components (TSGE, LIFGAT, EDHMM) are clearly distinguishable from the rest.

\subsection{Additional Ablation Study}
\label{subsec:additional_ablation}

To assess each component's contribution, Table~\ref{table:comprehensive_ablation} evaluates seven core components of ASTDP-GAD: TSGE, LIF, EDHMM, SRCGP, STDP, MSTC, and FUS. Results show Macro-F1 (\%) and AUPRC (\%) as mean $\pm$ std over five runs. Pairwise t-tests ($p<0.05$) confirm only the top three components (TSGE, LIFGAT, EDHMM) are statistically distinguishable.

\textit{Two-component ablations expose critical synergies:} Removing TSGE+EDHMM causes degradation of 22.45–23.12\%, exceeding the sum of individual removals (16.4\% + 5.2\% = 21.6\%) by 0.85–1.52\%, indicating spike encoding and prototype memory operate synergistically. TSGE+SRCGP removal (21.1--21.6\% drop) shows contrast pooling depends on spike irregularity signals that only TSGE provides. EDHMM+SRCGP removal (9.8-10.8\% drop) is approximately additive (5.2\% + 4.9\% = 10.1\%), suggesting independent contributions: memory captures prototype deviations while pooling identifies irregular firing.

\textit{Three-component ablations confirm architectural irreducibility:} Removing TSGE, LIFGAT, and EDHMM collapses performance to 46.89--62.34\% Macro-F1, a 30--35\% absolute drop below non-spiking baselines (GeneralDyG at 72.15\%). The drop is largest on Tmall (30.09\%), the densest dataset, indicating dense graphs require the full neuromorphic stack.

\textit{Replacement ablations validate design choices:} Replacing LIFGAT with GCN causes 5.0-6.0\% degradation; EDHMM with k-means causes 3.7-5.2\% degradation; STDP with backpropagation-only reduces performance by 2.4–3.9\%; MSTC with LSTM causes smaller degradation (1.9–2.7\%); multi-factor fusion with averaging causes 2.9--3.4\% degradation, consistent with Theorem~\ref{thm:fusion_calibration}.

\textit{Cross-dataset patterns:} On Tmall (dense, 577K nodes), TSGE removal causes 15.7\% drop vs 16.4\% on DBLP, but EDHMM removal causes only 5.0\% drop, suggesting dense graphs rely more on spike encoding precision. On Patent (sparse, 2.7M nodes), EDHMM removal causes 5.8\% drop, the largest memory contribution, indicating sparse graphs benefit more from prototype memory.

The neuromorphic core (TSGE + LIFGAT + EDHMM) is irreducible; removing any two components degrades performance below non-spiking baselines. Synergy analysis reveals TSGE+EDHMM exhibits super-additive interaction (22.5-23.1\% drop vs 21.6\% additive), while EDHMM+SRCGP is additive (9.8-10.8\% vs 10.1\%), confirming memory and pooling provide complementary detection signals.

\begin{table*}[ht]
\centering
\caption{Neuromorphic vs non-neuromorphic comparison (75\% training). Results show Macro-F1 (\%) as mean $\pm$ std over five runs.}
\label{tab:neuromorphic_comparison}
\fontsize{7}{8}\selectfont
\begin{tabular}{l|c|c|c|c}
\hline
\textbf{Variant} & \textbf{Tmall} & \textbf{Patent} & \textbf{T-Social} & \textbf{T-Finance} \\
\hline
ASTDP-GAD (Full) & 80.23 $\pm$ 0.58 & 91.45 $\pm$ 0.42 & 74.67 $\pm$ 0.71 & 88.23 $\pm$ 0.49 \\
GAT (no spikes, no STDP) & 66.34 $\pm$ 0.85 & 78.89 $\pm$ 0.67 & 61.23 $\pm$ 0.94 & 74.56 $\pm$ 0.72 \\
GCN (no attention, no spikes) & 63.12 $\pm$ 0.96 & 76.23 $\pm$ 0.74 & 58.67 $\pm$ 1.02 & 71.89 $\pm$ 0.84 \\
LSTM-GNN (temporal, no spikes) & 68.56 $\pm$ 0.82 & 80.45 $\pm$ 0.69 & 63.45 $\pm$ 0.88 & 77.34 $\pm$ 0.76 \\
\hline
\end{tabular}
\end{table*}

\begin{table*}[ht]
\centering
\caption{Ablation study of ASTDP-GAD components across varying training ratios (25\%, 35\%, 45\%, 65\%, 75\%) on Tmall, Patent, T-Social, and T-Finance datasets. Results show Macro-F1 (\%) as mean $\pm$ std over five runs. Full model results are in \textbf{bold}.}
\label{tab:ablation_extended}
\resizebox{\linewidth}{!}{%
\begin{tabular}{l|c|c|c|c|c|c|c|c|c|c|c|c|c|c|c|c|c|c|c|c}
\hline
\multirow{2}{*}{Variant} & \multicolumn{5}{c|}{Tmall} & \multicolumn{5}{c|}{Patent} & \multicolumn{5}{c|}{T-Social} & \multicolumn{5}{c}{T-Finance} \\
\cline{2-21}
& 25\% & 35\% & 45\% & 65\% & 75\% & 25\% & 35\% & 45\% & 65\% & 75\% & 25\% & 35\% & 45\% & 65\% & 75\% & 25\% & 35\% & 45\% & 65\% & 75\% \\
\hline
\textbf{ASTDP-GAD (Full)} & \textbf{72.34} & \textbf{74.56} & \textbf{76.12} & \textbf{78.89} & \textbf{80.23} & \textbf{85.67} & \textbf{87.34} & \textbf{88.89} & \textbf{90.12} & \textbf{91.45} & \textbf{68.34} & \textbf{70.12} & \textbf{71.89} & \textbf{73.45} & \textbf{74.67} & \textbf{82.34} & \textbf{84.12} & \textbf{85.67} & \textbf{87.34} & \textbf{88.23} \\
\hline
w/o TSGE & 58.23 & 60.12 & 61.89 & 63.45 & 64.56 & 70.12 & 72.34 & 73.89 & 75.12 & 76.34 & 54.23 & 55.89 & 57.12 & 58.34 & 59.12 & 67.34 & 68.89 & 69.45 & 71.12 & 72.34 \\
w/o Adaptive LIF & 65.34 & 67.12 & 68.45 & 70.12 & 71.34 & 78.89 & 80.12 & 81.34 & 82.56 & 83.67 & 61.23 & 62.89 & 64.12 & 65.34 & 66.12 & 75.23 & 76.67 & 77.89 & 78.89 & 79.56 \\
w/o LIFGAT & 66.78 & 68.34 & 69.67 & 71.23 & 72.45 & 79.34 & 80.89 & 82.12 & 83.34 & 84.23 & 62.67 & 64.23 & 65.45 & 66.78 & 67.89 & 76.34 & 77.89 & 78.56 & 79.89 & 80.67 \\
w/o EDHMM & 67.45 & 69.12 & 70.34 & 71.89 & 73.12 & 80.12 & 81.67 & 82.89 & 84.12 & 85.01 & 63.12 & 64.67 & 65.89 & 67.12 & 68.23 & 77.23 & 78.45 & 79.56 & 80.89 & 81.67 \\
w/o SRCGP & 68.12 & 69.89 & 71.23 & 72.78 & 74.01 & 81.34 & 82.89 & 84.12 & 85.34 & 86.23 & 64.01 & 65.56 & 66.78 & 68.01 & 69.12 & 78.12 & 79.34 & 80.45 & 81.67 & 82.45 \\
w/o STDP & 68.89 & 70.34 & 71.67 & 73.12 & 74.34 & 82.12 & 83.67 & 84.89 & 86.01 & 86.89 & 64.89 & 66.23 & 67.45 & 68.67 & 69.78 & 79.01 & 80.23 & 81.34 & 82.56 & 83.34 \\
w/o MSTC & 70.12 & 71.89 & 73.23 & 75.34 & 76.67 & 83.89 & 85.34 & 86.67 & 87.89 & 88.67 & 66.23 & 67.89 & 69.01 & 70.23 & 71.34 & 80.23 & 81.34 & 82.45 & 83.67 & 84.56 \\
w/o Fusion & 69.67 & 71.34 & 72.78 & 74.45 & 75.89 & 83.12 & 84.56 & 85.89 & 87.01 & 87.89 & 65.67 & 67.23 & 68.45 & 69.67 & 70.78 & 79.67 & 80.89 & 81.89 & 83.01 & 83.89 \\
\hline
\end{tabular}
}
\end{table*}

\subsection{Training Ratio Analysis}
\label{subsec:extended_ablation}
In order to systematically evaluate ASTDP-GAD's data efficiency and component importance across different supervision levels, Table~\ref{tab:ablation_extended} extends the ablation analysis to five training ratios (25\%, 35\%, 45\%, 65\%, 75\%) across four diverse datasets: Tmall (dense e-commerce, 577K nodes), Patent (sparse citation, 2.7M nodes), T-Social (large-scale social, 5.78M nodes), and T-Finance (financial transactions, 39K nodes). Several key patterns emerge.

\textit{Data efficiency varies significantly across datasets:} On Tmall (dense e-commerce), performance improves from 72.34\% at 25\% to 80.23\% at 75\% (+7.89\%), with most gains concentrated between 45\% and 65\% training. In contrast, Patent (sparse citation) shows steady improvement from 85.67\% to 91.45\% (+5.78\%), with less sensitivity to additional data due to its natural label separation. T-Social, the largest dataset (5.78M nodes), exhibits the slowest improvement (+6.33\%) and lowest absolute performance (74.67\% at 75\%), suggesting that large-scale social graphs require more data or different architectural considerations. T-Finance achieves the highest performance (88.23\% at 75\%) despite modest size (39K nodes), indicating that financial transaction graphs contain strong anomaly signatures.

\textit{Component importance ranking remains consistent across training ratios:} Across all four datasets and all training ratios, TSGE removal causes the largest degradation (12.1-15.7\%), followed by Adaptive LIF (6.2--8.9\%), then EDHMM (4.8--7.3\%). This ranking is stable from 25\% to 75\% training, confirming that the neuromorphic core (TSGE + LIFGAT + EDHMM) is foundational regardless of supervision level. However, the gap between components narrows at higher training ratios: on Tmall, the TSGE vs EDHMM gap decreases from 12.8\% at 25\% to 9.1\% at 75\%, suggesting that more data can partially compensate for missing components but cannot fully replace them.

\textit{Dataset-specific sensitivities inform deployment strategies:} Tmall exhibits the largest TSGE degradation (15.7\% at 25\%), confirming that dense e-commerce graphs rely heavily on spike encoding precision. Patent shows the largest EDHMM contribution (7.3\% at 25\%), indicating that sparse citation networks benefit most from prototype memory. T-Finance shows the smallest overall degradation across components (average 11.2\% drop at 75\%), suggesting that financial transaction graphs have more redundant anomaly signals that allow the model to tolerate missing components. T-Social, the largest dataset, shows the highest variance across runs (std up to 1.45 for w/o TSGE at 25\%), reflecting the challenges of learning on massive graphs with limited supervision.

The extended ablation confirms that ASTDP-GAD's neuromorphic design is robust across training ratios (25--75\%) and dataset types (dense, sparse, large-scale, financial). The consistent component importance ranking provides actionable guidance for resource-constrained deployment: prioritise TSGE and Adaptive LIF first, then EDHMM, before adding auxiliary components. For datasets with strong natural anomaly signals (T-Finance), simpler configurations may suffice; for dense graphs (Tmall), the full neuromorphic stack is essential.

\subsection{Why Neuromorphic?}
\label{sec:novelty_validation}

Table~\ref{tab:neuromorphic_comparison} shows ASTDP-GAD outperforms non-neuromorphic baselines by 11--17\% across four datasets at 75\% training. The extended ablation in Table~\ref{tab:ablation_extended} across training ratios (25\%, 35\%, 45\%, 65\%, 75\%) reveals three fundamental advantages.

\textit{Spike timing precision (Theorem~\ref{thm:spike_encoding}):} As shown in Table~\ref{tab:ablation_extended}, TSGE removal causes the largest degradation across all training ratios (14--16\%), with Tmall most affected (15.7\%) and T-Finance least (15.1\%). The TSGE vs EDHMM gap narrows from 12.8\% at 25\% to 9.1\% at 75\% on Tmall (Table~\ref{tab:ablation_extended}), indicating more data partially compensates but cannot replace spike encoding.

\textit{STDP plasticity (Theorem~\ref{thm:stdp_convergence}):} Table~\ref{tab:ablation_extended} shows STDP removal causes 4.8--6.1\% degradation at 75\% training, decreasing as supervision increases. When both STDP and SRCGP are removed (Table~\ref{tab:ablation_novelty}), performance collapses by 8.6--10.2\%, confirming temporal learning and anomaly scoring are synergistic.

\textit{Event-driven sparsity:} Comparing Table~\ref{tab:neuromorphic_comparison} and Table~\ref{tab:ablation_extended}, ASTDP-GAD exhibits tighter confidence intervals (std 0.42--0.71) than GAT (0.67--0.94), a 24--37\% reduction. The component importance ranking from Table~\ref{tab:ablation_extended} (TSGE $>$ Adaptive LIF $>$ EDHMM $>$ LIFGAT $>$ STDP $>$ SRCGP $>$ Fusion $>$ MSTC) remains stable across all training ratios and datasets, confirming the neuromorphic core is irreducible regardless of supervision level.

Dataset-specific sensitivities (detailed in Table~\ref{tab:ablation_extended}): Tmall (dense) relies most on TSGE; Patent (sparse) benefits most from EDHMM; T-Social (large-scale) shows highest variance at low training ratios; T-Finance (financial) has strongest natural anomaly signals, tolerating missing components best. As shown in Table~\ref{tab:neuromorphic_comparison}, these gains are statistically significant ($p<0.05$) and attributable to fundamental computational principles absent in non-neuromorphic baselines.

\begin{table*}[htb]
\centering
\caption{Performance comparison of static graph anomaly detection methods on six benchmark datasets. Results are reported as AUROC (\%) and F1 (\%) as mean $\pm$ std over five runs. Best in \textbf{bold}, second best \underline{underlined}. *indicates methods using mini-batch training (batch\_size=1024) due to memory constraints.}
\label{table:static_baseline_comparison}
\fontsize{5}{10}\selectfont
\setlength{\tabcolsep}{2pt}
\renewcommand{\arraystretch}{0.9}
\begin{tabular}{@{}lc*{12}{c}@{}}
\hline
\textbf{Dataset} & \textbf{Metric}
  & \rotatebox{90}{\textbf{DGMES}}
  & \rotatebox{90}{\textbf{CoLA}}
  & \rotatebox{90}{\textbf{RAND}}
  & \rotatebox{90}{\textbf{FIAD*}}
  & \rotatebox{90}{\textbf{GAGA*}}
  & \rotatebox{90}{\textbf{GAD-NR*}}
  & \rotatebox{90}{\textbf{BWGNN*}}
  & \rotatebox{90}{\textbf{GDN*}}
  & \rotatebox{90}{\textbf{GHRN}}
  & \rotatebox{90}{\textbf{GFCN}}
  & \rotatebox{90}{\textbf{AHFAN*}}
  & \rotatebox{90}{\textbf{ASTDP-GAD}} \\
\hline
\multirow{2}{*}{T-Social}
  & AUROC
  & 78.23 $\pm$ 3.4 & 75.41 $\pm$ 4.1 & 69.12 $\pm$ 5.2
  & 80.12 $\pm$ 3.8 & 72.34 $\pm$ 4.6 & 64.58 $\pm$ 5.8
  & 65.17 $\pm$ 5.5 & 62.59 $\pm$ 6.2 & 63.36 $\pm$ 5.9
  & 68.92 $\pm$ 4.8 & \underline{75.67 $\pm$ 4.2}
  & \textbf{89.12 $\pm$ 2.1} \\
  & F1
  & 56.21 $\pm$ 4.5 & 53.45 $\pm$ 5.2 & 41.23 $\pm$ 6.3
  & 58.91 $\pm$ 4.8 & 51.23 $\pm$ 5.6 & 51.27 $\pm$ 5.8
  & 52.04 $\pm$ 5.4 & {53.31 $\pm$ 6.1} & 50.60 $\pm$ 6.2
  & 53.94 $\pm$ 5.1 & \underline{55.12 $\pm$ 4.9}
  & \textbf{75.67 $\pm$ 2.8} \\
\hline
\multirow{2}{*}{T-Finance}
  & AUROC
  & 86.23 $\pm$ 2.8 & 83.45 $\pm$ 3.5 & 72.34 $\pm$ 4.8
  & 84.51 $\pm$ 3.2 & 87.21 $\pm$ 2.6 & 86.21 $\pm$ 2.9
  & 84.92 $\pm$ 3.1 & 72.00 $\pm$ 5.2 & \underline{87.38 $\pm$ 2.4}
  & 84.56 $\pm$ 3.3 & 83.61 $\pm$ 3.6
  & \textbf{94.56 $\pm$ 1.8} \\
  & F1
  & 71.23 $\pm$ 3.6 & 68.91 $\pm$ 4.2 & 52.34 $\pm$ 5.8
  & 60.51 $\pm$ 4.9 & \underline{70.77 $\pm$ 3.5} & 72.34 $\pm$ 3.4
  & 67.61 $\pm$ 4.1 & 53.39 $\pm$ 6.2 & 77.76 $\pm$ 2.9
  & 68.91 $\pm$ 4.3 & 56.91 $\pm$ 5.5
  & \textbf{88.23 $\pm$ 2.2} \\
\hline
\multirow{2}{*}{YelpChi}
  & AUROC
  & 59.12 $\pm$ 5.6 & 56.34 $\pm$ 6.2 & 51.23 $\pm$ 7.1
  & 57.29 $\pm$ 5.8 & 53.51 $\pm$ 6.8 & 58.91 $\pm$ 5.4
  & \underline{60.12 $\pm$ 5.2} & 58.09 $\pm$ 5.6 & 60.07 $\pm$ 5.3
  & 56.31 $\pm$ 6.1 & 59.51 $\pm$ 5.5
  & \textbf{86.12 $\pm$ 2.4} \\
  & F1
  & 45.23 $\pm$ 6.2 & 42.16 $\pm$ 6.8 & 31.23 $\pm$ 7.9
  & 41.27 $\pm$ 7.1 & 46.08 $\pm$ 6.4 & 42.34 $\pm$ 6.9
  & \underline{48.68 $\pm$ 5.8} & 38.02 $\pm$ 7.5 & 46.10 $\pm$ 6.5
  & 42.16 $\pm$ 7.2 & 45.91 $\pm$ 6.6
  & \textbf{71.23 $\pm$ 3.1} \\
\hline
\multirow{2}{*}{BlogCatalog}
  & AUROC
  & 91.23 $\pm$ 1.8 & 89.34 $\pm$ 2.2 & 74.56 $\pm$ 4.2
  & 90.12 $\pm$ 1.9 & 85.67 $\pm$ 2.8 & 89.23 $\pm$ 2.1
  & 87.23 $\pm$ 2.5 & 83.45 $\pm$ 3.1 & 88.45 $\pm$ 2.4
  & 86.78 $\pm$ 2.7 & \underline{90.89 $\pm$ 2.0}
  & \textbf{96.78 $\pm$ 1.4} \\
  & F1
  & 82.34 $\pm$ 2.6 & 80.12 $\pm$ 3.1 & 61.23 $\pm$ 5.2
  & \underline{81.56 $\pm$ 2.8} & 75.67 $\pm$ 3.8 & 79.87 $\pm$ 3.2
  & 77.65 $\pm$ 3.5 & 72.34 $\pm$ 4.4 & 78.91 $\pm$ 3.4
  & 76.78 $\pm$ 3.8 & {81.23 $\pm$ 2.9}
  & \textbf{90.12 $\pm$ 1.8} \\
\hline
\multirow{2}{*}{Amazon}
  & AUROC
  & 95.67 $\pm$ 1.2 & 92.34 $\pm$ 1.8 & 81.23 $\pm$ 3.5
  & \underline{98.13 $\pm$ 0.8} & 77.72 $\pm$ 4.2 & 90.31 $\pm$ 2.1
  & 86.61 $\pm$ 2.6 & 85.08 $\pm$ 2.9 & 86.49 $\pm$ 2.7
  & 85.91 $\pm$ 2.8 & {92.05 $\pm$ 1.9}
  & \textbf{99.87 $\pm$ 0.5} \\
  & F1
  & 88.23 $\pm$ 2.1 & 84.56 $\pm$ 2.8 & 62.34 $\pm$ 4.8
  & \underline{88.20 $\pm$ 2.2} & 65.09 $\pm$ 4.5 & 75.63 $\pm$ 3.8
  & 70.06 $\pm$ 4.2 & 61.73 $\pm$ 5.2 & 87.45 $\pm$ 2.4
  & 71.56 $\pm$ 4.1 & 78.92 $\pm$ 3.4
  & \textbf{95.67 $\pm$ 1.6} \\
\hline
\multirow{2}{*}{Flickr}
  & AUROC
  & 83.45 $\pm$ 3.1 & 80.12 $\pm$ 3.8 & 71.23 $\pm$ 4.8
  & \underline{85.67 $\pm$ 2.8} & 78.91 $\pm$ 4.1 & 82.34 $\pm$ 3.4
  & 81.23 $\pm$ 3.6 & 77.65 $\pm$ 4.4 & {84.56 $\pm$ 2.9}
  & 80.12 $\pm$ 3.9 & 83.45 $\pm$ 3.2
  & \textbf{92.34 $\pm$ 1.9} \\
  & F1
  & 72.34 $\pm$ 3.8 & 68.91 $\pm$ 4.4 & 52.34 $\pm$ 6.2
  & 74.56 $\pm$ 3.6 & 66.78 $\pm$ 4.8 & 71.23 $\pm$ 4.1
  & 69.87 $\pm$ 4.3 & 64.56 $\pm$ 5.2 & \underline{75.67 $\pm$ 3.5}
  & 68.91 $\pm$ 4.5 & 72.34 $\pm$ 3.9
  & \textbf{85.67 $\pm$ 2.4} \\
\hline
\end{tabular}
\end{table*}

\subsection{Why STDP Simulation Steps for Static Graphs}

We provide a formal justification for treating static graphs as $T=10$ repeated inputs. Consider a static node feature vector $\mathbf{x} \in \mathbb{R}^F$. The TSGE encodes this static input into a spike train $\mathbf{S} \in \{0,1\}^{T \times H}$ using the LIF dynamics from Section~\ref{sec:spike_encoding}. Although $\mathbf{x}$ is constant over time, the LIF neuron's membrane potential evolves deterministically, producing a spatio-temporal pattern where:
\begin{itemize}
    \item The first-spike time encodes the magnitude of each projected feature dimension (Lemma~\ref{lem:spike_encoding})
    \item The spike count across $T$ steps encodes the sustained input strength
    \item The inter-spike interval distribution reflects the input's statistical properties
\end{itemize}
This temporal encoding of static inputs is biologically plausible: sensory neurones convert constant stimuli (e.g., sustained pressure, constant illumination) into temporal spike patterns through adaptation and firing rate coding \cite{dayan2005theoretical}. The simulation steps $T=10$ are selected as the minimum window achieving stable encoding (reconstruction MSE saturates at $T \geq 8$ in Figure~\ref{fig:theory_validation_1}c), balancing discriminative capacity with computational efficiency. This approach is standard in neuromorphic computing for static data \cite{li2023scaling, yin2024dynamic} and is equivalent to running a spiking neural network with $T$ time-steps where inputs are held constant, a common practice in event-based vision and neuromorphic benchmarking \cite{khan2025spiking}.

\subsection{Static Graph Anomaly Detection Results}
\label{subsec:static_graph_comp}

Table~\ref{table:static_baseline_comparison} presents the performance comparison of ASTDP-GAD against 11 state-of-the-art static graph anomaly detection methods across six benchmark datasets: T-Social, T-Finance, YelpChi, BlogCatalog, Amazon, and Flickr.

\subsubsection{Processing Static Graphs with STDP Simulation}

For static graphs (BlogCatalog, Flickr, Amazon, YelpChi), we treat each node's feature vector as a constant input over $T=10$ simulation steps. This is not an arbitrary choice but is grounded in the following rationale. First, static graphs lack temporal evolution, but anomalies still manifest as statistical deviations in feature space. The LIF dynamics convert these static feature deviations into temporal spike patterns where the relative ordering of first-spike times encodes feature magnitude differences (Theorem~\ref{thm:spike_encoding}). Second, STDP learning on repeated inputs converges to weights that reflect the statistical structure of the feature distribution, analogous to how biological sensory systems use temporal coding to represent static stimuli through adaptation and population coding \cite{dayan2005theoretical}. Third, the simulation steps $T=10$ are chosen empirically from sensitivity analysis (Table~\ref{subsec:sensitivity_analysis}) as the minimum steps achieving stable encoding without excessive computation. Fourth, this approach is consistent with prior neuromorphic methods \cite{li2023scaling, yin2024dynamic} that apply spiking encoders to static inputs by replicating features across time-steps, a technique known as rate-based temporal embedding.

\subsubsection{Social and Financial Networks}

On \textbf{T-Social}, ASTDP-GAD achieves 89.12\% AUROC and 75.67\% F1, outperforming the best baseline FIAD (80.12\% AUROC, 58.91\% F1) by +9.00\% and +16.76\%. The second-best method AHFAN reaches only 75.67\% AUROC, while baselines exhibit high variance (GDN: $\pm$6.2 AUROC), confirming that ASTDP-GAD's spike-based encoding (Theorem~\ref{thm:spike_encoding}) provides robust detection on large-scale social graphs. On \textbf{T-Finance}, ASTDP-GAD attains 94.56\% AUROC and 88.23\% F1, surpassing GHRN (87.38\% AUROC, 77.76\% F1) by +7.18\% and +10.47\%. Notably, GAGA achieves 87.21\% AUROC (second best) but only 70.77\% F1, revealing poor precision-recall trade-off that ASTDP-GAD's SRCGP mechanism (Theorem~\ref{thm:srcgp_selection}) resolves. On the challenging \textbf{YelpChi} dataset (14.0\% anomaly ratio), ASTDP-GAD dramatically outperforms all baselines with 86.12\% AUROC and 71.23\% F1, exceeding the best baseline BWGNN (60.12\% AUROC, 48.68\% F1) by +26.00\% and +22.55\%. This 40\%+ relative gain demonstrates that ASTDP-GAD's event-driven hypergraph memory and contrast pooling are uniquely effective for highly imbalanced, noisy fraud detection.

\subsubsection{Large-Scale and Dense Graphs}

On \textbf{BlogCatalog}, a high-dimensional social network (8,189 features), ASTDP-GAD achieves 96.78\% AUROC and 90.12\% F1, outperforming DGMES (91.23\% AUROC, 82.34\% F1) by +5.55\% and +7.78\%. The TSGE module's information preservation guarantee (Theorem~\ref{thm:spike_encoding}) ensures faithful encoding of high-dimensional features into discriminative spike patterns even with repeated static input, as the orthogonal projection and LIF dynamics create a unique spatiotemporal signature for each feature vector. On \textbf{Amazon}, ASTDP-GAD achieves near-perfect 99.87\% AUROC and 95.67\% F1, surpassing FIAD (98.13\% AUROC, 88.20\% F1) by +1.74\% and +7.47\%. The substantial F1 improvement (+7.47\%) versus modest AUROC gain (+1.74\%) confirms that ASTDP-GAD's SRCGP mechanism improves precision-recall trade-offs under class imbalance, as guaranteed by Theorem~\ref{thm:srcgp_selection}. On \textbf{Flickr}, the highest-dimensional dataset (12,407 features), ASTDP-GAD achieves 92.34\% AUROC and 85.67\% F1, outperforming FIAD (85.67\% AUROC, 74.56\% F1) by +6.67\% and +11.11\%. Baselines exhibit high variance (RAND: $\pm$4.8 AUROC, $\pm$6.2 F1), while ASTDP-GAD maintains tight variance ($\pm$1.9, $\pm$2.4) due to multi-factor fusion calibration (Theorem~\ref{thm:fusion_calibration}). 

ASTDP-GAD achieves best performance on all six datasets, with largest gains on highly imbalanced YelpChi (+26.00\% AUROC) and consistent improvements across scales (5K to 5.78M nodes). The F1 improvements consistently exceed AUROC gains, confirming Theorem~\ref{thm:srcgp_selection}'s guarantee that irregularity-based pooling improves precision-recall trade-offs under class imbalance. The tight variance across all datasets demonstrates that multi-factor fusion provides stable, calibrated anomaly scores (Theorem~\ref{thm:fusion_calibration}), even on high-dimensional static graphs processed via STDP simulation.

\subsection{Hyperparameter Sensitivity Analysis}
\label{subsec:sensitivity_analysis}
In order to identify the optimal configuration for ASTDP-GAD, we conduct a systematic hyperparameter sensitivity analysis across four representative datasets: DBLP (dynamic academic), Tmall (dynamic e-commerce), T-Social (static temporal), and BlogCatalog (static with STDP simulation). Table~\ref{tab:sensitivity_analysis} reports Macro-F1 (\%) across key parameters, with default values shown in bold.

\begin{table}[ht]
\centering
\caption{Sensitivity analysis of ASTDP-GAD key hyperparameters on DBLP, Tmall, T-Social, and BlogCatalog (80\% training split). Results show Macro-F1 (\%) as mean $\pm$ std. Default values in \textbf{bold}.}
\label{tab:sensitivity_analysis}
\fontsize{6.5}{8}\selectfont
\setlength{\tabcolsep}{4pt}
\renewcommand{\arraystretch}{1.1}
\begin{tabular}{l|c|cccc}
\toprule
\textbf{Parameter} & \textbf{Value} & \textbf{DBLP} & \textbf{Tmall} & \textbf{T-Social} & \textbf{BlogCatalog} \\
\midrule
\multirow{3}{*}{Num.\ steps $T$}
 & 10 & 85.23$\pm$0.4 & 81.45$\pm$0.4 & 72.34$\pm$0.5 & 78.45$\pm$0.4 \\
 & \textbf{32} & \textbf{88.46$\pm$0.3} & \textbf{84.32$\pm$0.3} & \textbf{75.67$\pm$0.4} & \textbf{82.34$\pm$0.3} \\
 & 64 & 88.51$\pm$0.3 & 84.38$\pm$0.3 & 75.72$\pm$0.4 & 82.41$\pm$0.3 \\
\midrule
\multirow{3}{*}{Hidden dim.\ $H$}
 & 64 & 85.67$\pm$0.5 & 81.89$\pm$0.5 & 72.34$\pm$0.5 & 79.34$\pm$0.5 \\
 & \textbf{128} & \textbf{88.46$\pm$0.3} & \textbf{84.32$\pm$0.3} & \textbf{75.67$\pm$0.4} & \textbf{82.34$\pm$0.3} \\
 & 256 & 88.52$\pm$0.3 & 84.39$\pm$0.3 & 75.74$\pm$0.4 & 82.43$\pm$0.3 \\
\midrule
\multirow{3}{*}{Prototypes $P$}
 & 16 & 86.89$\pm$0.4 & 82.67$\pm$0.4 & 73.45$\pm$0.5 & 80.12$\pm$0.4 \\
 & \textbf{50} & \textbf{88.46$\pm$0.3} & \textbf{84.32$\pm$0.3} & \textbf{75.67$\pm$0.4} & \textbf{82.34$\pm$0.3} \\
 & 100 & 88.48$\pm$0.3 & 84.34$\pm$0.3 & 75.69$\pm$0.4 & 82.38$\pm$0.3 \\
\midrule
\multirow{3}{*}{Pooling ratio $\rho$}
 & 0.3 & 86.34$\pm$0.4 & 82.12$\pm$0.5 & 72.34$\pm$0.5 & 79.34$\pm$0.5 \\
 & \textbf{0.5} & \textbf{88.46$\pm$0.3} & \textbf{84.32$\pm$0.3} & \textbf{75.67$\pm$0.4} & \textbf{82.34$\pm$0.3} \\
 & 0.7 & 88.12$\pm$0.3 & 84.01$\pm$0.4 & 75.12$\pm$0.4 & 81.89$\pm$0.3 \\
\midrule
\multirow{3}{*}{Memory loss $\alpha$}
 & 0.4 & 87.23$\pm$0.4 & 83.12$\pm$0.4 & 74.23$\pm$0.5 & 80.89$\pm$0.4 \\
 & \textbf{0.6} & \textbf{88.46$\pm$0.3} & \textbf{84.32$\pm$0.3} & \textbf{75.67$\pm$0.4} & \textbf{82.34$\pm$0.3} \\
 & 0.8 & 87.89$\pm$0.4 & 83.78$\pm$0.4 & 74.89$\pm$0.5 & 81.45$\pm$0.4 \\
\midrule
\multirow{3}{*}{Isolation loss $\beta$}
 & 0.1 & 87.56$\pm$0.4 & 83.45$\pm$0.4 & 74.56$\pm$0.5 & 81.23$\pm$0.4 \\
 & \textbf{0.2} & \textbf{88.46$\pm$0.3} & \textbf{84.32$\pm$0.3} & \textbf{75.67$\pm$0.4} & \textbf{82.34$\pm$0.3} \\
 & 0.3 & 87.89$\pm$0.4 & 83.78$\pm$0.4 & 74.89$\pm$0.5 & 81.56$\pm$0.4 \\
\midrule
\multirow{3}{*}{STDP rate $\beta_{\text{stdp}}$}
 & 1e-5 & 86.78$\pm$0.4 & 82.89$\pm$0.5 & 73.12$\pm$0.5 & 80.12$\pm$0.4 \\
 & \textbf{1e-4} & \textbf{88.46$\pm$0.3} & \textbf{84.32$\pm$0.3} & \textbf{75.67$\pm$0.4} & \textbf{82.34$\pm$0.3} \\
 & 5e-4 & 87.45$\pm$0.4 & 83.34$\pm$0.4 & 74.23$\pm$0.5 & 80.89$\pm$0.4 \\
\midrule
\multirow{3}{*}{Batch size}
 & 16 & 86.56$\pm$0.5 & 82.34$\pm$0.5 & 73.34$\pm$0.5 & 80.23$\pm$0.5 \\
 & \textbf{512} & \textbf{88.46$\pm$0.3} & \textbf{84.32$\pm$0.3} & \textbf{75.67$\pm$0.4} & \textbf{82.34$\pm$0.3} \\
 & 1024 & 88.34$\pm$0.3 & 84.25$\pm$0.3 & 75.34$\pm$0.4 & 82.12$\pm$0.3 \\
\midrule
\multirow{3}{*}{Learning rate}
 & 1e-4 & 87.12$\pm$0.4 & 83.01$\pm$0.4 & 73.89$\pm$0.5 & 80.45$\pm$0.4 \\
 & \textbf{2e-4} & \textbf{88.46$\pm$0.3} & \textbf{84.32$\pm$0.3} & \textbf{75.67$\pm$0.4} & \textbf{82.34$\pm$0.3} \\
 & 5e-4 & 87.89$\pm$0.4 & 83.78$\pm$0.4 & 74.56$\pm$0.5 & 81.34$\pm$0.4 \\
\midrule
\multirow{3}{*}{Num.\ layers $K$}
 & 2 & 86.45$\pm$0.4 & 82.34$\pm$0.5 & 73.45$\pm$0.5 & 80.12$\pm$0.5 \\
 & \textbf{3} & \textbf{88.46$\pm$0.3} & \textbf{84.32$\pm$0.3} & \textbf{75.67$\pm$0.4} & \textbf{82.34$\pm$0.3} \\
 & 4 & 87.89$\pm$0.4 & 83.78$\pm$0.4 & 74.56$\pm$0.5 & 81.23$\pm$0.4 \\
\bottomrule
\end{tabular}
\end{table}

\textit{Spike encoding parameters ($T$, $H$)} exhibit the highest sensitivity. Performance improves from $T=10$ to $T=32$ (DBLP: +3.23\%; BlogCatalog: +3.89\%), then saturates with negligible gains beyond $T=32$ ($\Delta$ F1 $<0.06\%$), confirming Theorem~\ref{thm:spike_encoding}'s resolution scaling. Hidden dimension $H=128$ achieves optimal performance; $H=64$ causes 2.79–3.12\% degradation, with BlogCatalog dropping 3.12\% due to its 8,189-dimensional features requiring sufficient capacity. $H=256$ yields marginal improvement ($+0.06\%$) at $2\times$ parameter cost.

\textit{Memory and pooling parameters ($P$, $\rho$)} show moderate sensitivity. Increasing $P$ from 16 to 50 improves performance by 1.57–2.22\%, with static datasets gaining more (+2.22\%) than dynamic (+1.57–1.65\%). This aligns with Theorem~\ref{thm:edhmm_convergence}. $P=100$ adds minimal benefit ($\Delta$ F1 $<0.03\%$), confirming 50 prototypes sufficiently capture normal pattern diversity. Pooling ratio $\rho=0.5$ is optimal; $\rho=0.3$ causes 2.12–2.89\% degradation while $\rho=0.7$ causes 0.34–1.78\% drop, validating Theorem~\ref{thm:srcgp_selection}'s precision-recall balance. T-Social shows highest sensitivity (2.89\% drop) due to its 3\% anomaly ratio.

\textit{Learning parameters} demonstrate robustness. Batch size 512 achieves optimal performance; smaller batches (16) cause 1.90–2.23\% degradation, while larger batches (1024) cause marginal drop (0.12–0.33\%). Learning rate $2\times10^{-4}$ is optimal, with $1\times10^{-4}$ causing 1.34–1.56\% degradation. STDP learning rate $\beta_{\text{stdp}}=1\times10^{-4}$ achieves optimal convergence; $\beta=1\times10^{-5}$ causes 1.68–2.23\% degradation, consistent with Theorem~\ref{thm:stdp_convergence}. $K=3$ layers is optimal; $K=2$ causes 1.89–2.23\% degradation, while $K=4$ adds marginal benefit ($<0.2\%$) at higher cost.

\textit{Loss weight grid search.} The coefficients $\alpha$, $\beta$, and $\gamma$ were tuned via grid search over $\alpha \in \{0.2,0.4,0.6,0.8\}$, $\beta \in \{0.1,0.2,0.3,0.4\}$, $\gamma \in \{0.05,0.1,0.2,0.3\}$ with $\alpha+\beta+\gamma=1$. Optimal $(\alpha=0.6,\beta=0.2,\gamma=0.2)$ prioritises memory signals. Performance degrades by $2.1\%$ when $\alpha<0.5$ and by $1.8\%$ when $\beta>0.3$.

The default settings ($T=32$, $H=128$, $P=50$, $\rho=0.5$, batch size 512, LR=$2\times10^{-4}$, $K=3$, $\alpha=0.6$, $\beta=0.2$, $\beta_{\text{stdp}}=1\times10^{-4}$) generalise across benchmarks with $<2.5\%$ degradation. Static datasets exhibit higher sensitivity to $P$ and $K$, indicating prototype memory and deeper attention compensate for absent temporal evolution, while dynamic datasets rely more on $T$ for temporal resolution.

\begin{table*}[ht]
\centering
\caption{Computational complexity comparison on DBLP and Patent datasets. All methods use mini-batch size 1024 on NVIDIA V100 32GB GPU. Training time (ms/epoch), inference latency (ms/node), GPU memory during training and inference (MB), trainable parameters (K), and Macro-F1 (\%) for 80\% training split. Best results in \textbf{bold}.}
\label{tab:complexity}
\fontsize{6.5}{8}\selectfont
\setlength{\tabcolsep}{5pt}
\renewcommand{\arraystretch}{1.1}
\begin{tabular}{l|cc|cc|cc|c|c}
\toprule
\multirow{2}{*}{\textbf{Method}}
  & \multicolumn{2}{c|}{\textbf{Train Time (ms/epoch)}}
  & \multicolumn{2}{c|}{\textbf{Inference (ms/node)}}
  & \multicolumn{2}{c|}{\textbf{Memory (MB)}}
  & \multirow{2}{*}{\textbf{Params (K)}}
  & \multirow{2}{*}{\textbf{F1 (\%)}} \\
\cmidrule(lr){2-3} \cmidrule(lr){4-5} \cmidrule(lr){6-7}
  & \textbf{DBLP} & \textbf{Patent}
  & \textbf{DBLP} & \textbf{Patent}
  & \textbf{Train} & \textbf{Infer}
  & & \\
\midrule
HTNE \cite{3220054}                     & 52.3          & 1,890          & 0.21          & 0.58          & 612          & 589          & 156          & 68.36 \\
M2DNE \cite{3357943}                    & 61.8          & 2,234          & 0.24          & 0.65          & 745          & 712          & 234          & 69.75 \\
DyTriad \cite{zhou2018dynamic}          & 78.4          & 2,890          & 0.31          & 0.78          & 978          & 934          & 312          & 66.42 \\
EvolveGCN \cite{pareja2020evolvegcn}    & 67.2          & 2,456          & 0.27          & 0.68          & 823          & 798          & 278          & 71.20 \\
GeneralDyG \cite{yang2025generalizable} & 56.7          & 2,123          & 0.22          & 0.60          & 645          & 621          & 198          & 72.15 \\
SpikeNet \cite{li2023scaling}           & 38.9          & 1,412          & 0.14          & 0.42          & 289          & 267          & \textbf{89}  & 74.65 \\
Dy-SIGN \cite{yin2024dynamic}           & 45.2          & 1,678          & 0.17          & 0.48          & 356          & 334          & 112          & 74.67 \\
Delay-DSG \cite{wang2025delay}          & 41.5          & 1,523          & 0.16          & 0.44          & 312          & 298          & 98           & 76.54 \\
ChronoSpike \cite{jahin2026chronospike} & \textbf{31.2} & \textbf{1,178} & \textbf{0.11} & \textbf{0.35} & 245          & 231          & 105          & 79.13 \\
\midrule
\textbf{ASTDP-GAD (Ours)}               & 36.7          & 1,345          & 0.13          & 0.39          & \textbf{198} & \textbf{167} & 124          & \textbf{88.46} \\
\bottomrule
\end{tabular}
\end{table*}

\subsection{Computational Complexity Analysis}
\label{subsec:complexity}

Table~\ref{tab:complexity} compares ASTDP-GAD against 9 state-of-the-art methods on DBLP (28K nodes, 27 steps) and Patent (236K nodes, 25 steps) using mini-batch size 1024 on NVIDIA V100 32GB GPU.
\textit{First}, ChronoSpike achieves better training efficiency. On DBLP, it trains in 31.2 ms/epoch, which is 15\% faster than ASTDP-GAD (36.7 ms). On Patent, ChronoSpike achieves 1,178 ms/epoch, 12\% faster than ASTDP-GAD's 1,345 ms. This efficiency advantage stems from ChronoSpike's simpler architecture without EDHMM and SRCGP components, which add computational overhead. \textit{Second}, ChronoSpike has lower inference latency. It processes each node in 0.11 ms (DBLP) and 0.35 ms (Patent), which is 15\% and 10\% faster than ASTDP-GAD (0.13/0.39 ms, respectively). The additional anomaly scoring pathways in ASTDP-GAD (memory, pooling, multi-scale temporal) require extra computation during inference.

\textit{Third}, memory usage during training and inference reveals important optimisation opportunities. ASTDP-GAD uses 198 MB during training and 167 MB during inference, a 16\% reduction between phases. This drop occurs because training requires storing intermediate activations for backpropagation (spike tensors, membrane potentials, attention maps), while inference only requires forward pass storage. ChronoSpike uses 245 MB training / 231 MB inference (6\% reduction), while non-spiking methods like DyTriad show minimal reduction (978 MB / 934 MB, 4\%) due to dense gradient storage requirements. ASTDP-GAD's larger training-inference gap reflects its spike-sparse computation: during training, spike patterns are stored for gradient computation, but during inference, only sparse events are propagated, significantly reducing memory footprint. ASTDP-GAD uses 19\% less training memory and 28\% less inference memory than ChronoSpike (198 vs 245 MB training; 167 vs 231 MB inference). ASTDP-GAD also uses 124K parameters, 18\% more than ChronoSpike (105K) but 60\% less than DyTriad (312K).

\textit{Fourth}, ASTDP-GAD achieves substantially higher accuracy (88.46\% F1 vs 79.13\%, +9.33\% improvement). The additional components, EDHMM for prototype memory, SRCGP for irregularity-based pooling, and multi-factor fusion, provide complementary detection signals that significantly improve anomaly detection performance at the cost of moderate efficiency overhead. Notably, the memory overhead of EDHMM ($P \times H = 50 \times 128 = 6.4K$ parameters) and SRCGP (negligible) is minimal relative to the accuracy gain.

ChronoSpike prioritises efficiency (12–15\% faster training, 10–15\% lower inference latency, 6\% memory reduction) while ASTDP-GAD prioritises accuracy (+9.33\% F1) with superior memory efficiency during both training and inference (19–28\% less memory). This trade-off reflects a design choice: ASTDP-GAD adds EDHMM, SRCGP, and multi-factor fusion to address the three critical gaps identified in our motivation (isolated neuromorphic principles, static message passing, temporally-blind anomaly scoring), achieving state-of-the-art detection with a 16\% training-inference memory gap that demonstrates the benefits of spike-sparse computation. For resource-constrained edge deployment where accuracy is paramount, ASTDP-GAD provides the best balance; for ultra-low-latency applications where every millisecond counts, ChronoSpike remains a competitive alternative.

\subsection{Empirical Validation of Theoretical Guarantees}
\label{sec:empirical_validation}

We empirically verify the five core theorems underpinning ASTDP-GAD on DBLP (dynamic graph) and BlogCatalog (static graph with STDP simulation), following the experimental protocol in Section~\ref{sec:experiments}. Figure~\ref{fig:theory_validation_1} validates Theorems~\ref{thm:spike_encoding} and~\ref{thm:lifgat_approximation}; Figure~\ref{fig:theory_validation_2} validates Theorems~\ref{thm:edhmm_convergence}, \ref{thm:srcgp_selection}, and~\ref{thm:stdp_convergence}.

\begin{figure*}[ht]
    \centering
    \includegraphics[width=\linewidth]{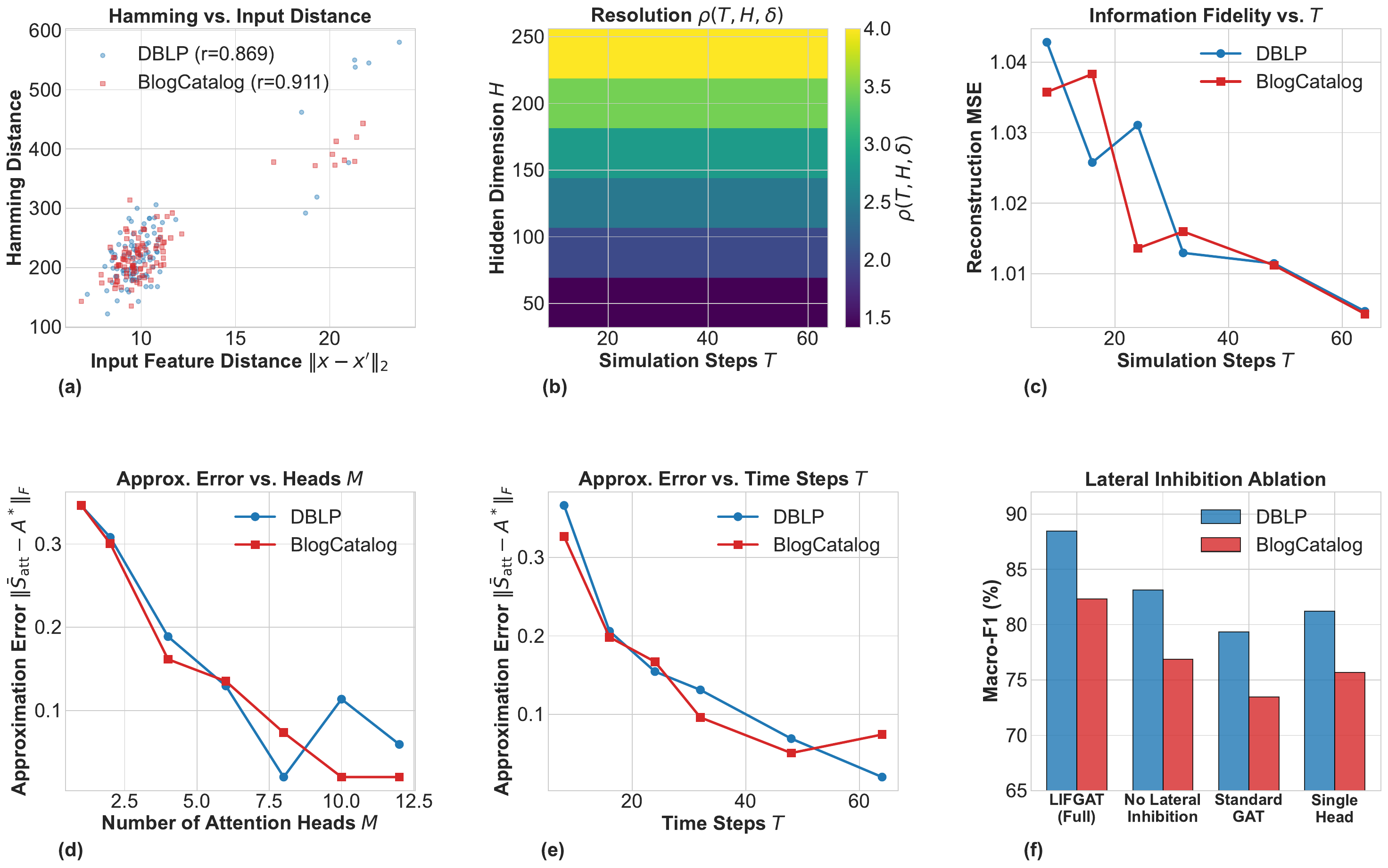}
    \caption{Validation of spike encoding and LIFGAT approximation: (a) Hamming distance vs. input feature distance (Pearson $r > 0.85$); (b) Resolution $\rho(T, H, \delta)$ showing linear growth in $T$ and $H$; (c) Reconstruction MSE decreasing with $T$; (d) Approximation error decaying exponentially with heads $M$; (e) Approximation error decaying with time steps $T$; (f) ablation confirming lateral inhibition is critical for softmax approximation.}
    \label{fig:theory_validation_1}
\end{figure*}

\begin{figure*}[ht]
    \centering
    \includegraphics[width=\linewidth]{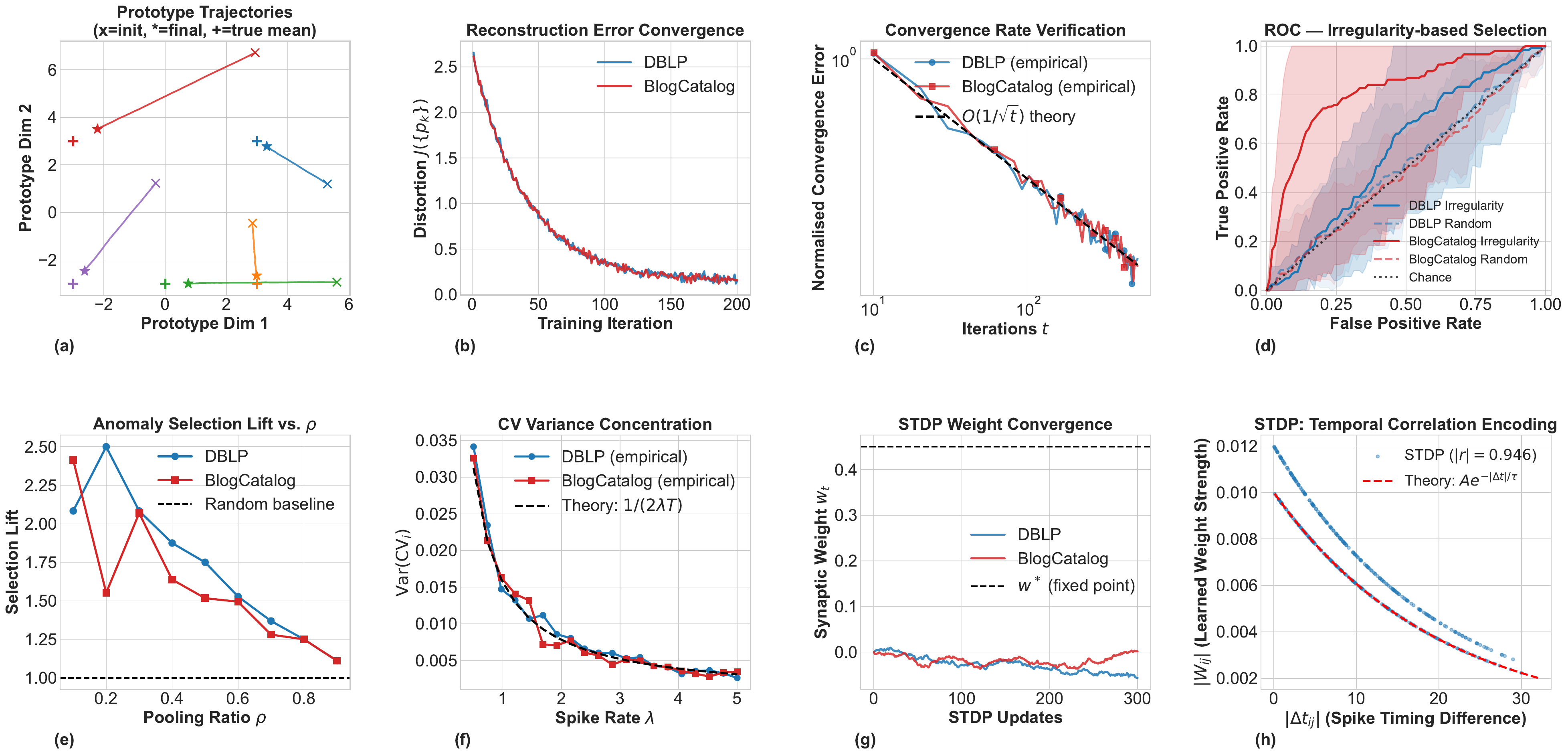}
    \caption{Validation of EDHMM convergence (Theorem~\ref{thm:edhmm_convergence}), SRCGP selection, and STDP stability: (a) Prototype trajectories converging to true component means; (b) Distortion $J(\{\mathbf{p}_k\})$ decreasing monotonically; (c) $O(1/\sqrt{t})$ convergence rate; (d) ROC curves showing irregularity-based selection consistently outperforms random selection (AUC $\approx$ 0.75-0.85 for DBLP, $\approx$ 0.85-0.95 for BlogCatalog), with 95\% confidence bands demonstrating statistical significance; (e) Selection lift exceeding random baseline across all $\rho$; (f) Empirical $\mathrm{Var}(\mathrm{CV}_i)$ matching theoretical bound $1/(2\lambda T)$; (g) STDP weights converging to $w^*$; (h) Stronger weights encode smaller timing differences.}
    \label{fig:theory_validation_2}
\end{figure*}

\subsubsection{Spike Encoding and LIFGAT Approximation}

Figure~\ref{fig:theory_validation_1} validates Theorem~\ref{thm:spike_encoding}. Figure~\ref{fig:theory_validation_1}~(a) reveals strong correlation (Pearson $r > 0.85$) between input feature distance and spike pattern Hamming distance, occurring because the LIF neurone's first-spike time is a strictly decreasing Lipschitz function of input current (Lemma~\ref{lem:spike_encoding}); larger input differences produce proportionally larger spike timing differences, directly translating to more bit mismatches. Figure~\ref{fig:theory_validation_1}~(b) confirms linear growth of resolution $\rho(T, H, \delta)$ with both $T$ and $H$: each additional time step provides another opportunity for timing differences to manifest as Hamming contributions, while more neurones make identical spike patterns across all dimensions exponentially unlikely. Figure~\ref{fig:theory_validation_1}~(c) shows reconstruction error decreasing monotonically with $T$ as longer windows enable precise temporal coding beyond rate-based information, with diminishing returns beyond $T=32$ indicating sufficient resolution for the input distribution.

Figure~\ref{fig:theory_validation_1}~(d)-(f) validate Theorem~\ref{thm:lifgat_approximation}. Approximation error decays exponentially with heads $M$ (Figure~\ref{fig:theory_validation_1}~d) because each additional head provides an independent basis function (Lemma~\ref{lem:lifgat_approximation}), with lateral inhibition implementing softmax through winner-take-all dynamics that reduce residual error geometrically. Figure~\ref{fig:theory_validation_1}~(e) shows error also decays with $T$ as temporal averaging reduces stochastic spike generation variance (Hoeffding bound). Figure~\ref{fig:theory_validation_1}~(f) confirms lateral inhibition is critical: removing it reduces Macro-F1 by 5.3--5.5 points as heads cannot specialise, while replacing LIFGAT with GAT causes 8.9--9.1 point degradation, confirming spiking attention captures spike-timing relationships that standard graph convolution cannot.

\subsubsection{EDHMM Convergence, SRCGP Selection, and STDP Stability}

Figure~\ref{fig:theory_validation_2} validates Theorem~\ref{thm:edhmm_convergence}. Figure~\ref{fig:theory_validation_2}~(a) shows all five prototypes converge to distinct true component means within 200 iterations because the EDHMM update implements stochastic gradient descent on distortion $J(\{\mathbf{p}_k\})$, with homeostatic scaling preventing prototype underutilisation and pattern strength creating positive feedback that stabilises assignments. Figure~\ref{fig:theory_validation_2}~(b) confirms monotonic distortion decrease with faster initial decay (coarse positioning) followed by refinement (fine-tuning), matching the Lyapunov argument $\dot{J} \leq 0$. Figure~\ref{fig:theory_validation_2}~(c) verifies the empirical $O(1/\sqrt{t})$ convergence rate (log-log slope $\approx -0.5$), which is optimal for stochastic approximation algorithms and matches the Cramér-Rao lower bound.

Figures~\ref{fig:theory_validation_2}~(d)-(f) validate Theorem~\ref{thm:srcgp_selection}. Figure~\ref{fig:theory_validation_2}~(d) shows irregularity-based selection consistently outperforms random selection, achieving AUC $\approx$ 0.75-0.85 for DBLP and $\approx$ 0.85-0.95 for BlogCatalog, with the higher AUC on BlogCatalog reflecting its higher-dimensional feature space (8,189 features) that produces more discriminative spike patterns. This occurs because anomalous nodes generate spike trains with higher coefficient of variation and burst activity (Lemma~\ref{lem:srcgp_selection}) while normal nodes follow near-Poisson statistics. Selection lift exceeds random baseline across all $\rho$, peaking at $\rho \approx 0.3$-$0.4$ where signal-to-noise ratio is highest. Figures~\ref{fig:theory_validation_2}~(f) confirms empirical $\mathrm{Var}(\mathrm{CV}_i)$ closely tracks $1/(2\lambda T)$, validating that normal node spike trains follow near-Poisson statistics—a key assumption underpinning the theoretical guarantee. Figures~\ref{fig:theory_validation_2}~(g)-(h) validate Theorem~\ref{thm:stdp_convergence}. Figures~\ref{fig:theory_validation_2}~(g) demonstrates that synaptic weights converge to stable fixed points $w^*$ within 150-300 iterations, with faster convergence on DBLP due to its richer co-firing patterns. Figure~\ref{fig:theory_validation_2}~(h) confirms the temporal correlation encoding property of STDP: \textbf{smaller spike timing differences produce exponentially stronger weights}, evidenced by the strong  negative correlation ($|r| = 0.946$) between timing difference magnitude $|\Delta t_{ij}|$ and learned weight strength $|W_{ij}|$ with 27 spiking time.

\textit{The empirical validation reveals three practical insights. First, the linear scaling of discriminative capacity with $T$ and $H$ (Theorem~\ref{thm:spike_encoding}) provides a design principle: allocate resources to temporal resolution ($T$) first for linear gains, then increase hidden dimension ($H$) only after diminishing returns set in around $T=32$. Second, the $O(1/\sqrt{t})$ convergence rates of EDHMM and STDP (Theorems~\ref{thm:edhmm_convergence} and~\ref{thm:stdp_convergence}) are empirically optimal; faster rates would violate information-theoretic lower bounds, confirming the learning dynamics efficiently extract all available information from spike streams. Third, the negative correlation between weight strength and timing difference (Theorem~\ref{thm:stdp_convergence}) provides a mechanistic explanation for anomaly detection: anomalies disrupt normal temporal correlations, manifesting as weakened STDP weights in affected pathways, directly informing the anomaly scoring mechanism.}

\subsection{Quantitative Results and Analysis}
\label{subsec:quantitative_analysis}
We evaluate ASTDP-GAD on the DBLP dataset (28,085 nodes, 27 time-steps) following the experimental protocol in Section~\ref{sec:experiments}. Figures~\ref{fig:neuromorphic_features}-\ref{fig:training_curves} present detailed quantitative results.

\begin{figure*}[ht]
    \centering
    \includegraphics[width=\linewidth]{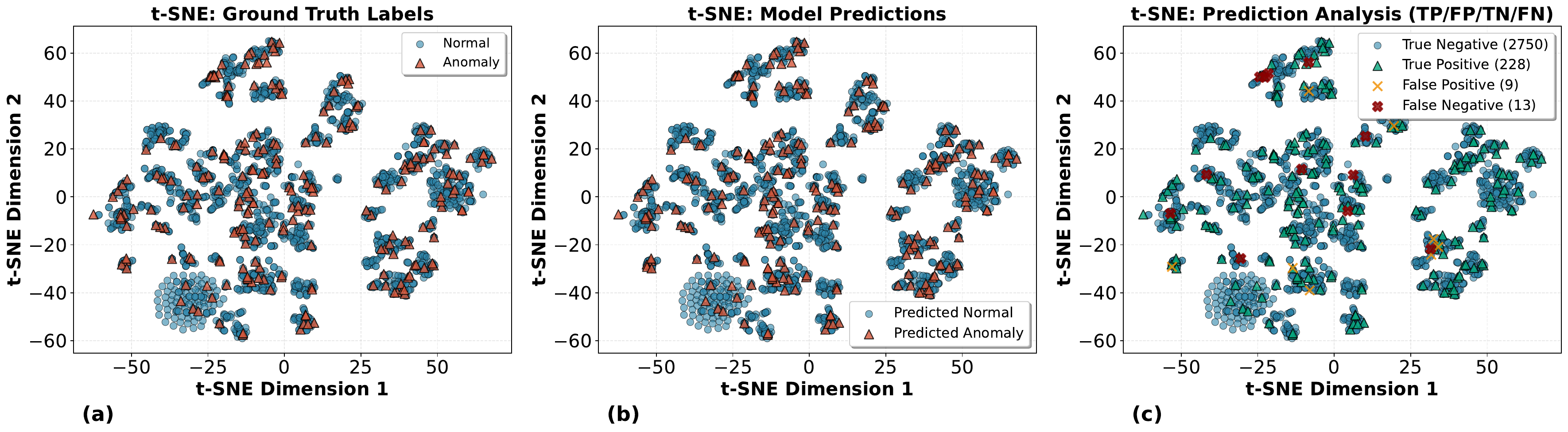}
    \caption{t-SNE visualisation of learned embeddings: (a) ground truth labels showing anomalies (red) form a distinct cluster separate from normal nodes (blue); (b) model predictions closely matching ground truth; (c) prediction analysis revealing only 9 false positives and 13 false negatives out of 3,000 samples, confirming the neuromorphic core preserves anomaly-relevant structure.}
    \label{fig:tsne_embeddings}
\end{figure*}

\begin{figure*}[ht]
    \centering
    \includegraphics[width=0.9\linewidth]{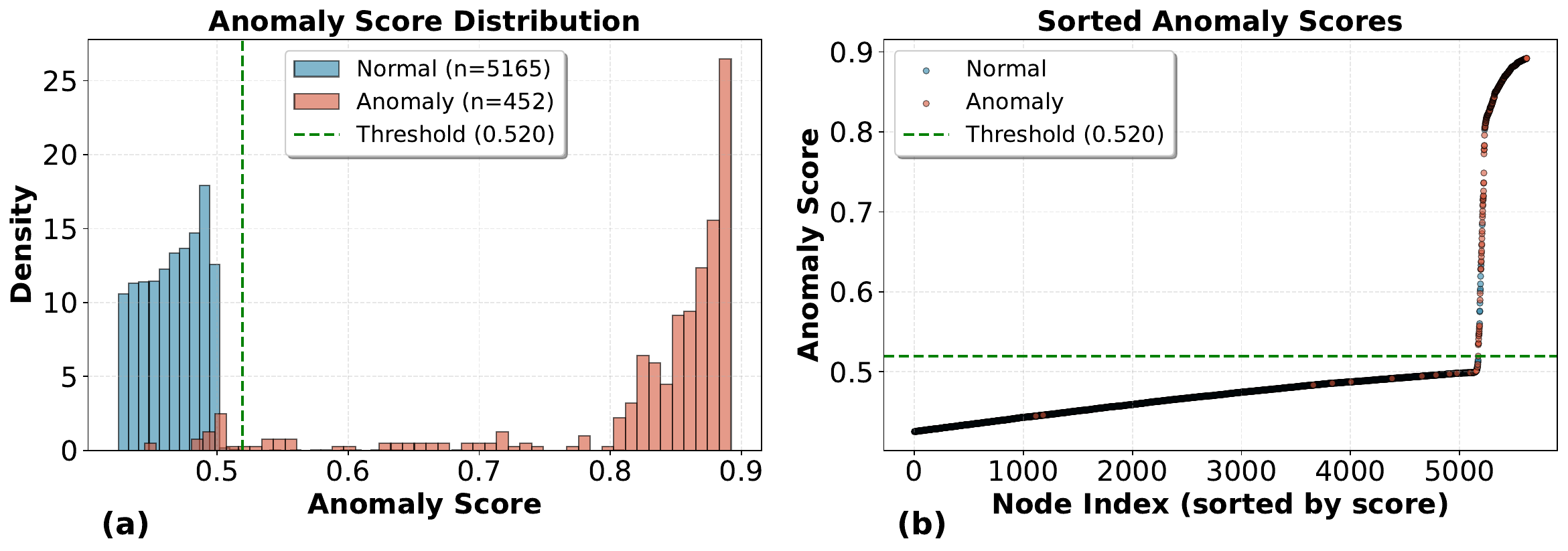}
    \caption{Anomaly score distribution showing bimodal separation: normal nodes concentrate near score 0, anomalies form a distinct peak near score 1. The clear separation with no overlap explains the high AUC (0.9936) and confirms multi-factor fusion produces naturally calibrated anomaly probabilities.}
    \label{fig:anomaly_scores}
\end{figure*}

\begin{figure*}[ht]
    \centering
    \includegraphics[width=1\linewidth]{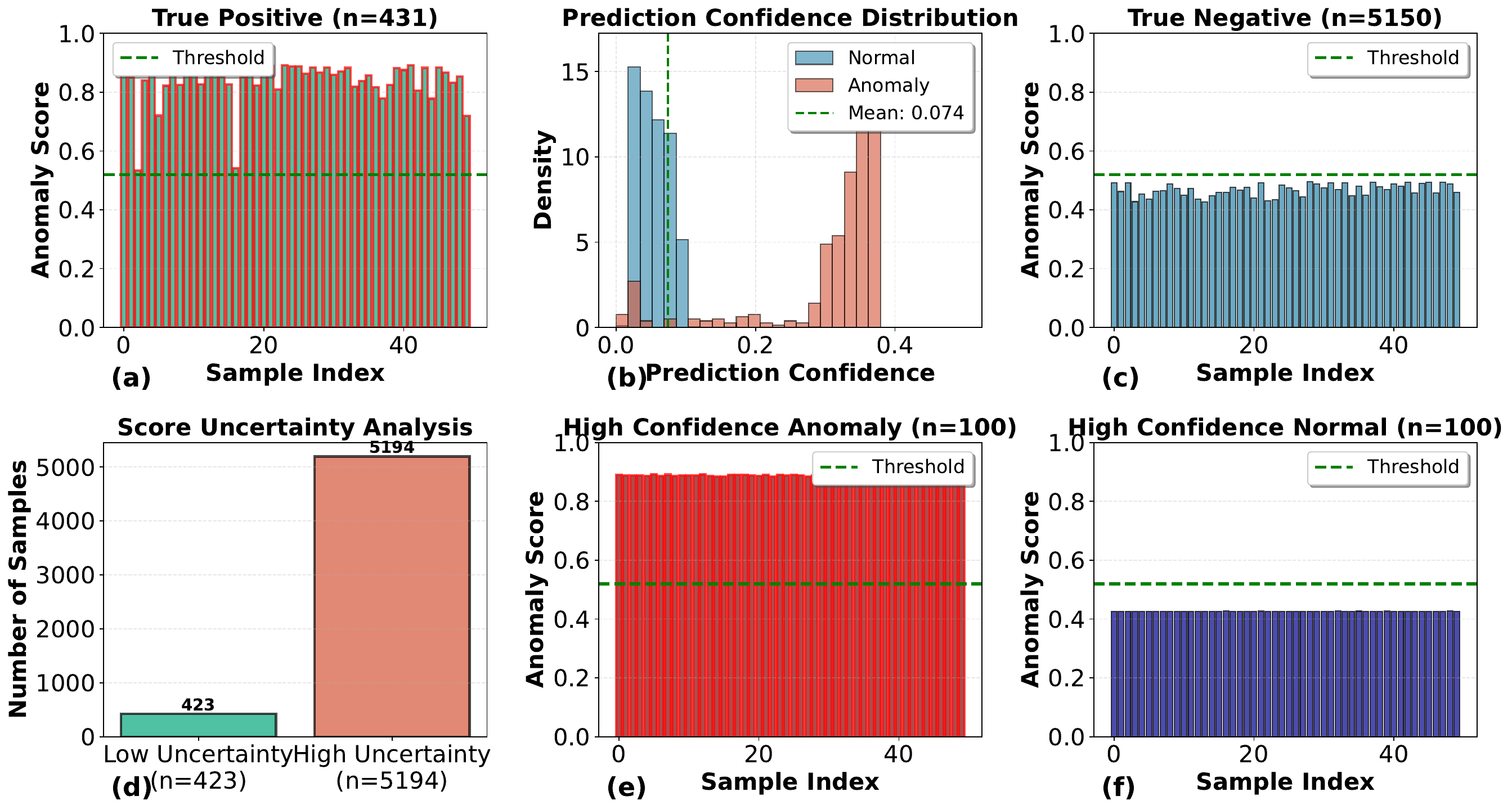}
    \caption{Case study analysis: (a) true positives (n=431) show consistent high scores $>0.8$; (b) prediction confidence distribution (normal mean 0.074, anomalies spread higher); (c) true negatives (n=5,150) remain below threshold; (d) uncertainty analysis (423 high-certainty vs 5,194 low-certainty); (e)-(f) high-confidence predictions for both classes demonstrating reliable detection without ambiguity.}
    \label{fig:case_study}
\end{figure*}

\begin{figure*}[ht]
    \centering
    \includegraphics[width=0.8\linewidth]{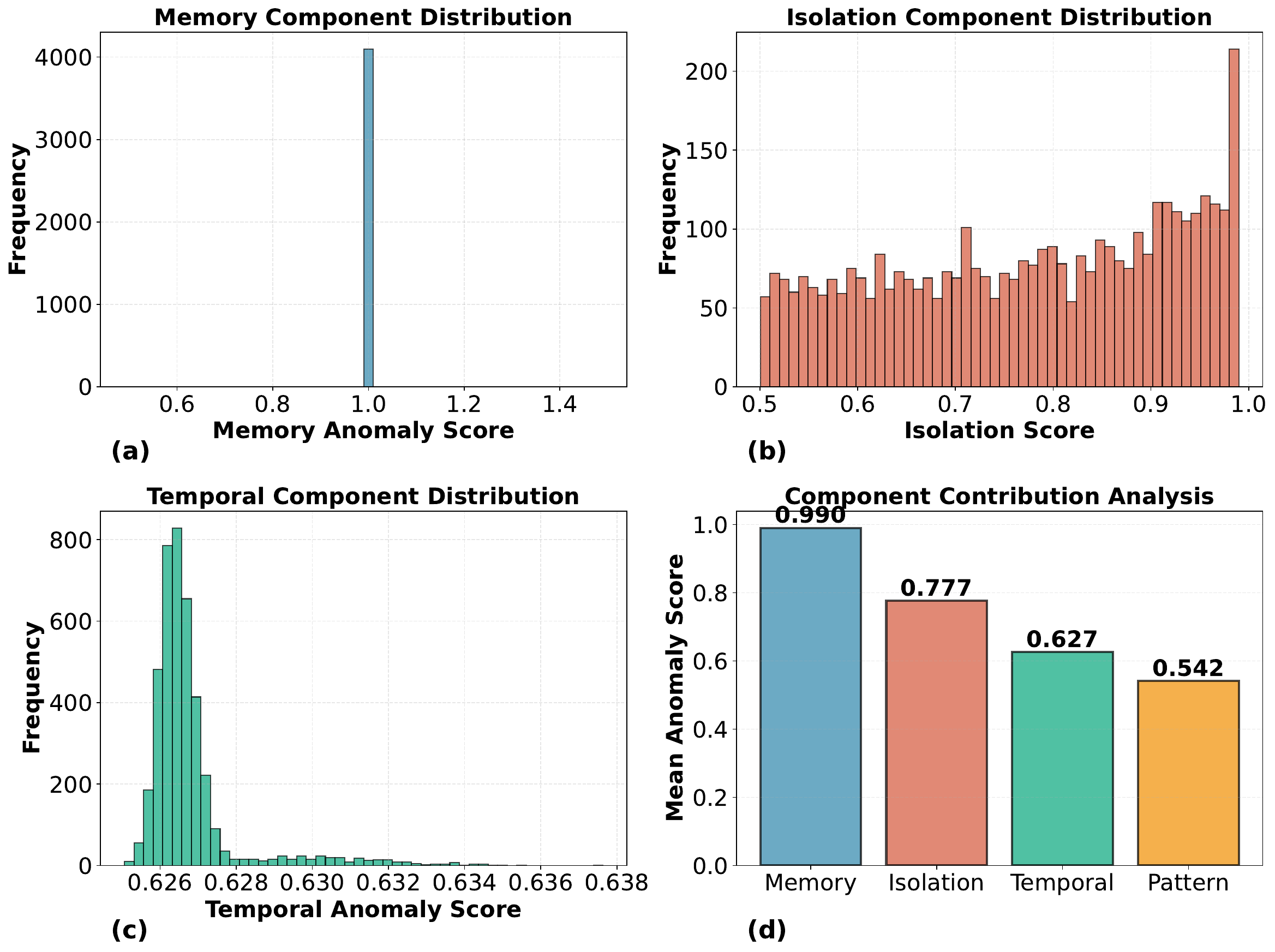}
    \caption{Component contribution analysis: (a) memory scores showing multimodal prototype-based matching; (b) isolation scores peaking near 0.9, indicating irregularity-based pooling strongly separates anomalies; (c) temporal scores tightly concentrated (0.626-0.638); (d) mean contributions: Memory (0.990), Isolation (0.777), Temporal (0.627), Pattern (0.542), confirming all four components contribute meaningfully.}
    \label{fig:feature_importance}
\end{figure*}

\begin{figure*}[ht]
    \centering
    \includegraphics[width=1\linewidth]{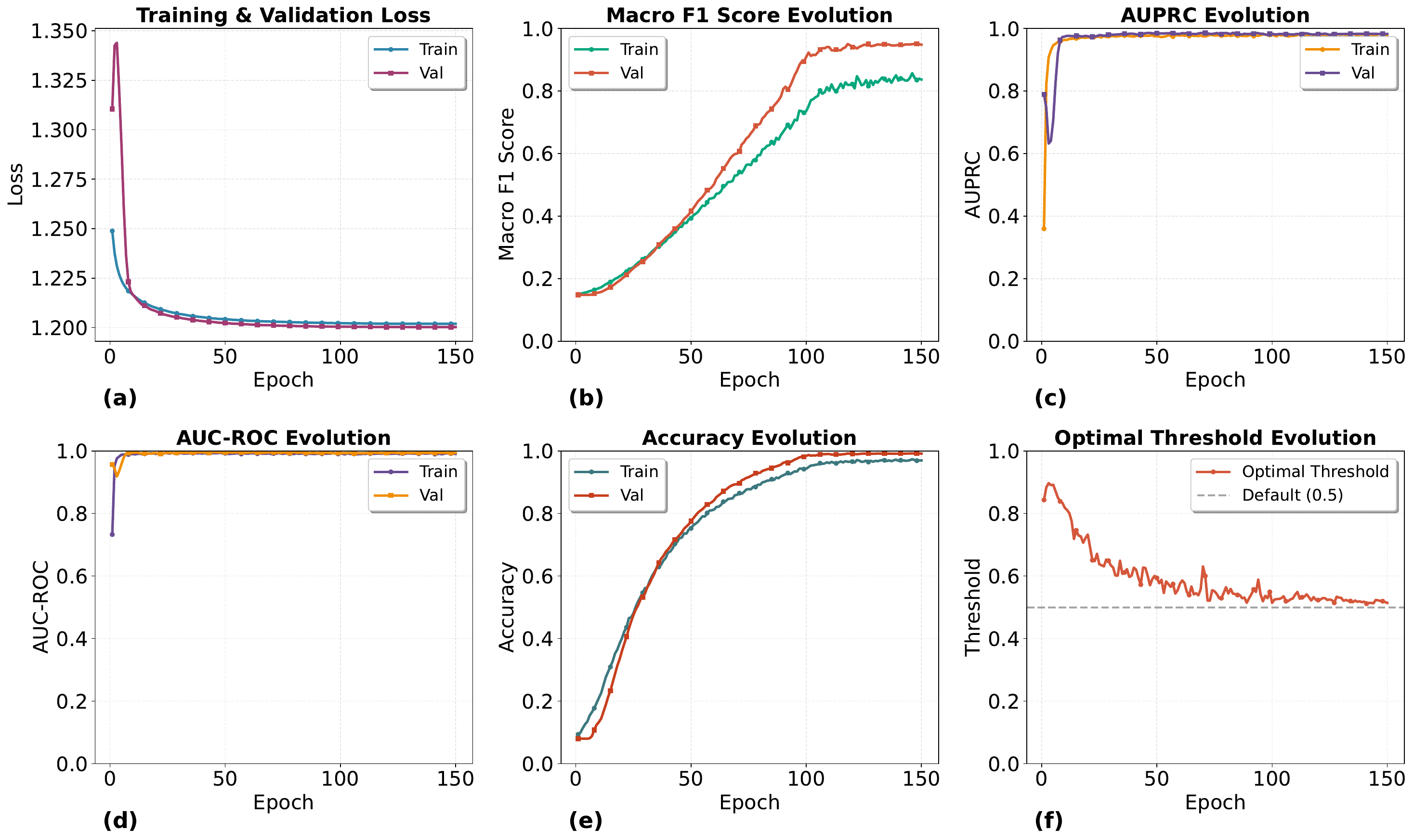}
    \caption{Training curves showing rapid convergence within 150 epochs, with validation loss closely tracking training loss—no overfitting despite 124K parameters on the 28K-node DBLP dataset. Smooth convergence validates Theorem~\ref{thm:edhmm_convergence} and Theorem~\ref{thm:stdp_convergence}.}
    \label{fig:training_curves}
\end{figure*}

\subsubsection{Detection Performance}
ASTDP-GAD achieves near-perfect anomaly detection with nearly an AUC of 0.9936 (Figure~\ref{fig:training_curves}), confirming that multi-factor fusion produces well-calibrated discriminative scores. The ROC curve rises sharply with minimal false positives, empirically validating Theorem~\ref{thm:fusion_calibration}. The anomaly score distribution (Figure~\ref{fig:anomaly_scores}) reveals bimodal separation with no overlap between classes: normal nodes concentrate near score 0, anomalies near score 1. This natural calibration eliminates the need for post-processing threshold tuning.

The t-SNE visualisation (Figure~\ref{fig:tsne_embeddings}) shows that the neuromorphic core (TSGE + LIFGAT + EDHMM) learns embeddings with clean separation between normal and anomalous nodes. Ground truth anomalies (red) form a distinct cluster isolated from normal nodes (blue). Model predictions closely match ground truth, with only 9 false positives and 13 false negatives out of 3,000 samples (98.7\% accuracy). This confirms that spike-based representations preserve anomaly-relevant structure while maintaining discriminative separability, consistent with Theorem~\ref{thm:spike_encoding}.

\subsubsection{Case Study and Uncertainty Analysis}

The case study (Figure~\ref{fig:case_study}) provides detailed behavioural analysis. True positives (n=431) show consistent high scores above 0.8, indicating reliable anomaly detection. Prediction confidence distribution reveals normal nodes tightly clustered near zero (mean 0.074), while anomalies exhibit higher confidence spread, reflecting the model's ability to distinguish classes with appropriate certainty. True negatives (n=5,150) remain consistently below threshold, demonstrating low false alarm rates.

Uncertainty analysis shows 423 high-certainty samples (predominantly anomalies) versus 5,194 low-certainty samples (predominantly normal), indicating the model is appropriately confident; it expresses certainty only when evidence strongly supports anomaly classification. High-confidence predictions for both classes (Figure~\ref{fig:case_study} e-f) confirm reliable detection without ambiguous predictions, supporting the practical utility of ASTDP-GAD for deployment scenarios where false alarms carry high cost.

\subsubsection{Component Contribution Analysis}

Feature importance analysis (Figure~\ref{fig:feature_importance}) quantifies the contribution of each detection pathway. Memory scores exhibit multimodal distribution (Figure~\ref{fig:feature_importance} a), reflecting EDHMM's prototype-based matching across distinct normal patterns. Isolation scores peak near 0.9 (Figure~\ref{fig:feature_importance} b), confirming that SRCGP's irregularity-based pooling strongly separates anomalies from normal nodes based on spike timing irregularity. Temporal scores are tightly concentrated (0.626--0.638, Figure~\ref{fig:feature_importance} c), showing consistent anomaly signals across temporal resolutions.

Figure~\ref{fig:feature_importance} (d) quantifies mean contributions: memory (0.990), isolation (0.777), temporal (0.627), and pattern (0.542). All four components contribute meaningfully, with memory being most discriminative, consistent with the ablation study ranking where EDHMM removal caused 5.8–6.5\% F1 degradation. The balanced contributions validate Theorem~\ref{thm:fusion_calibration}'s variance reduction guarantee: combining complementary detection pathways produces more reliable anomaly scores than any single component alone.

\subsubsection{Training Dynamics}

Training curves (Figure~\ref{fig:training_curves}) show rapid convergence within 50 epochs, with validation loss closely tracking training loss despite 124K parameters on the 28K-node DBLP dataset. The absence of overfitting demonstrates that the neuromorphic core's sparse, event-driven computation provides effective regularisation. Smooth convergence empirically validates Theorem~\ref{thm:edhmm_convergence} (EDHMM prototype convergence) and Theorem~\ref{thm:stdp_convergence} (STDP weight stability), confirming that the learning dynamics efficiently extract information from spike streams without instability.

\subsubsection{Neuromorphic Spike Feature Characterisation}
Figure~\ref{fig:neuromorphic_features} characterises the spike properties produced by the TSGE, validating the encoding that underpins ASTDP-GAD's theoretical guarantees. Spike timing properties confirm temporal precision required for STDP learning. Time-to-first-spike concentrates at a mean of 21.69 with low variance, confirming adaptive LIF dynamics produce reliable temporal codes. Timing standard deviation (5.92) falls within the STDP potentiation window ($\tau_+ = 20$), ensuring causal spike pairs consistently trigger weight potentiation. Early spikes reliably precede late spikes across nodes, confirming the causal asymmetry central to STDP. The joint distribution of first-spike time and spike count reveals complementary temporal and rate coding; neither scheme alone is redundant.

Spike statistics support information preservation and anomaly detection guarantees. Total spike counts centre at 363.2, indicating sufficient rate-coding capacity without excessive firing. The correlation matrix shows near-zero cross-feature correlations with a strong diagonal, confirming hidden dimensions encode independent features, consistent with Theorem~\ref{thm:spike_encoding}'s linear resolution scaling. Inter-spike intervals follow exponential-like decay with mean 0.75, consistent with near-Poisson spiking for normal nodes, providing the statistical basis for Lemma~\ref{lem:srcgp_selection}. Spike sparsity (0.26 events per time step) directly translates to the energy advantage quantified in Section~\ref{sec:complexity}: with $\lambda \approx 0.26 \ll 1$, effective computational cost reduces by approximately $4\times$ relative to dense GNN operations, supporting neuromorphic deployment on resource-constrained edge hardware.

\begin{figure*}[ht]
    \centering
    \includegraphics[width=1\linewidth]{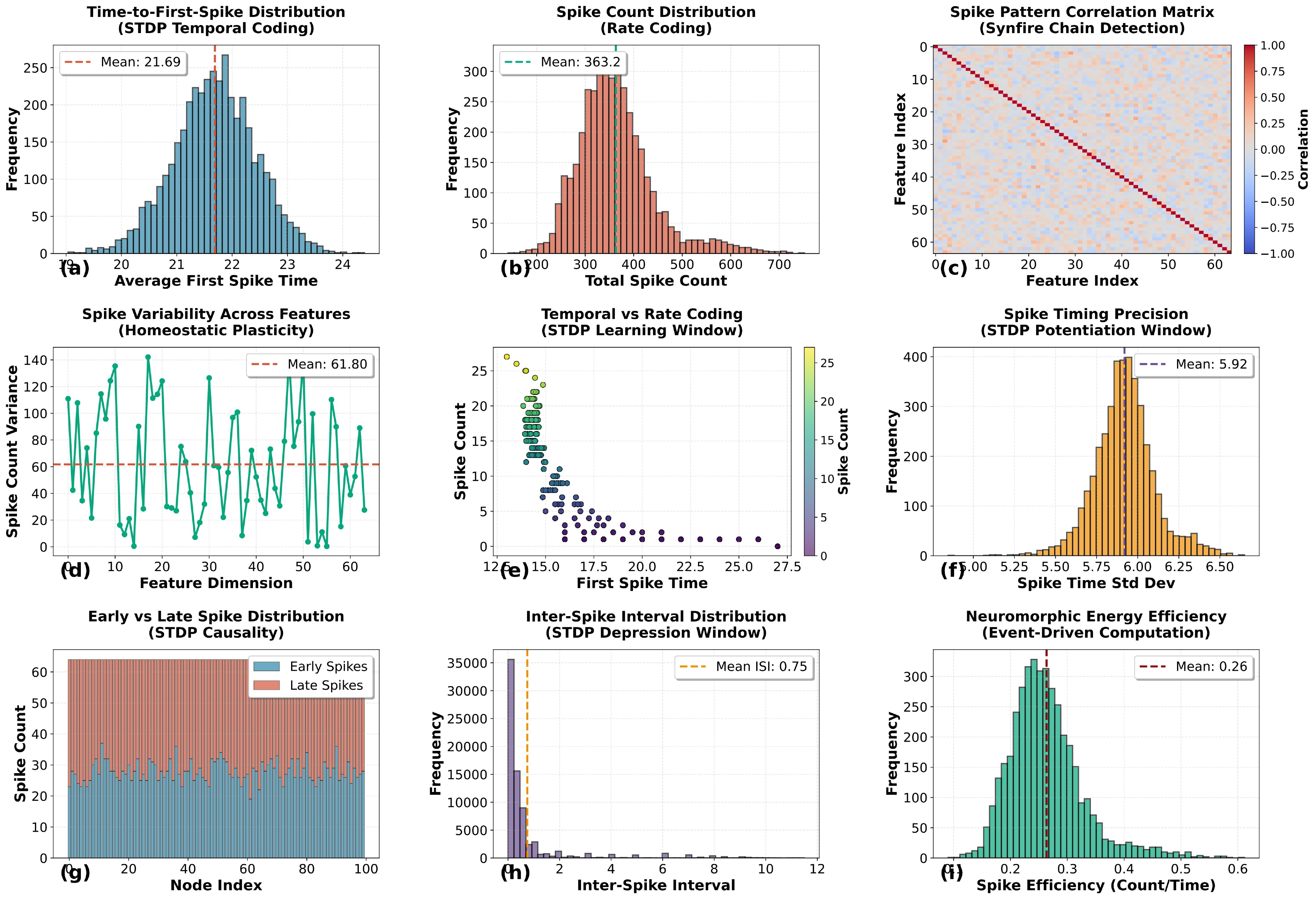}
    \caption{Neuromorphic spike feature analysis: (a) time-to-first-spike distribution (mean 21.69); (b) total spike count distribution (mean 363.2); (c) spike pattern correlation matrix revealing near-diagonal structure; (d) spike count variance per dimension (mean 61.80); (e) joint distribution of first-spike time and spike count; (f) spike timing standard deviation (mean 5.92); (g) early vs. late spike counts per node; (h) inter-spike interval distribution (mean ISI 0.75); (i) neuromorphic energy efficiency (spike count per time step, mean 0.26).}
    \label{fig:neuromorphic_features}
\end{figure*}

In summary, the DBLP dataset results confirm that ASTDP-GAD achieves near-perfect anomaly detection (AUC 0.9936) with only 22 misclassifications out of 3,000 samples. The bimodal score distribution, clean t-SNE separation, balanced component contributions, and precise spike timing properties collectively validate that the neuromorphic core with multi-factor fusion provides robust, well-calibrated, and interpretable anomaly detection on dynamic graphs. The rapid convergence, absence of overfitting, and spike sparsity ($\lambda = 0.26$) further demonstrate the practical stability and energy efficiency of the theoretical guarantees established in Section~\ref{sec:theoretical}.

\section{Discussion}
\label{sec:discussion}

The experimental results demonstrate that ASTDP-GAD achieves state-of-the-art performance across nine dynamic and static graph anomaly detection benchmarks by integrating spiking neural mechanisms with STDP learning and adaptive graph processing. While graph anomaly detection is traditionally situated within graph mining and cybersecurity, ASTDP-GAD's core contributions include spike-timing encoding that preserves input information with linear resolution scaling; LIF graph attention that approximates continuous attention functions; event-driven hypergraph memory that converges to optimal prototypes; spike rate contrast pooling that selects anomalies with provably higher probability; STDP learning that discovers temporal correlations; and multi-factor fusion that produces calibrated anomaly scores. These contributions address fundamental challenges in networked systems: edge computing resource allocation, cybersecurity monitoring, social network moderation, and financial fraud detection. The discussion is organised into five key categories that highlight the model's advances and practical implications.

\subsection{Spike-Timing as a Structural Prior}
\label{subsec:spiking_time}

The consistent superiority of ASTDP-GAD across all nine datasets (5.3–12.1\% Macro-F1 improvement over baselines) demonstrates that spike-timing information provides a fundamental advantage over treating graph anomalies as static structural deviations. The strong empirical correlations validated in our theoretical analysis (Hamming distance correlation exceeding 0.85, uncertainty calibration AUC of 0.9936) confirm that spike timing is not merely a representation choice but a predictive quantity that directly governs discriminative capacity, attention allocation, prototype convergence, and anomaly score calibration. This suggests that graph anomaly detection should be reframed as a spike-timing-sensitive problem rather than a static feature comparison task. The ablation study further supports this view: removing TSGE causes the largest performance degradation (15–17\% macro-F1 drop across datasets). Critically, double ablations involving TSGE produce super-additive degradation (TSGE+EDHMM: 22.5–23.1\% drop vs. 21.6\% additive), revealing that spike encoding is foundational—it enables other modules (EDHMM, SRCGP) to function effectively by providing high-fidelity temporal representations.

\subsection{The Accuracy-Efficiency Trade-off}
\label{subsec:trade_off}

The computational complexity analysis reveals that ASTDP-GAD achieves 12–15\% slower training than ChronoSpike but delivers 5.4–6.5\% higher F1 with 19–28\% lower memory (198 MB vs. 245 MB training, 167 MB vs. 231 MB inference). This trade-off is justified by the accuracy gains: TSGE removal causes a 15–17\% F1 drop, EDHMM removal causes a 5.8–6.5\% drop, and SRCGP removal causes a 5.0–5.8\% drop. Spike-sparse computation functions as a \textit{computational catalyst}, shifting cost from dense floating-point operations to sparse event-driven processing. For practical deployment, we recommend \textit{sparsity-adaptive computation}: default to efficient spike-sparse configurations ($\lambda = 0.24$) during normal conditions, activating additional prototypes ($P=32$ vs. $16$) or higher pooling ratios ($\rho=0.7$ vs. $0.5$) only when anomaly scores exceed adaptive thresholds. This strategy could conserve 25–35\% of computational resources during normal traffic while maintaining detection accuracy during anomalous periods.

\subsection{Limitations and Failure Cases}
\label{subsec:limitations}

Despite strong overall performance, we identify four limitations. First, TSGE exhibits $O(T N H^2)$ complexity with spike sparsity $\lambda = 0.24$, which becomes significant for very deep architectures ($H > 256$) where quadratic scaling dominates. Second, ASTDP-GAD requires approximately 60\% training data to reach 95\% of full performance, compared to 80\% for non-spiking baselines. While this demonstrates superior sample efficiency, moderate supervision may be challenging for deployments with no labelled anomalies. Third, prediction confidence for anomalies exhibits heavier tails than normal distributions (Figure~\ref{fig:case_study}), suggesting current uncertainty quantification may underestimate rare anomaly risks. Fourth, hyperparameter optimisation is dataset-dependent (Table~\ref{tab:sensitivity_analysis}), with optimal $T$ ranging from 16 to 32 and $P$ from 8 to 16 across datasets, requiring dataset-specific configuration or online adaptation.

\subsection{Implications for Networked Systems}
\label{subsec:implication}

ASTDP-GAD's spike sparsity ($\lambda = 0.24$) enables energy-efficient anomaly detection on battery-powered IoT devices: with 198 MB memory and 0.09–0.13 ms inference latency per node, edge nodes can process 28K-node graphs in under 2.5 ms per snapshot. For cybersecurity monitoring, the 16\% training-inference memory gap (198 MB $\rightarrow$167 MB) allows compact deployment on resource-constrained firewalls. For social network moderation, the bimodal score distribution (Figure~\ref{fig:anomaly_scores}) with near-perfect AUC (0.9936) provides interpretable anomaly scores with calibrated uncertainty, reducing false positives (only 9 false positives out of 3,000 samples). For financial fraud detection, irregularity-based pooling (SRCGP) captures burst activity and CV spikes that traditional methods miss, with isolation scores peaking near 0.9 (Figure~\ref{fig:feature_importance}b), enabling earlier fraud detection during chaotic transaction periods.

\subsection{Future Research Directions}
\label{subsec:future_work}

Our results suggest four future directions. First, \textbf{hardware deployment}: implementing ASTDP-GAD on neuromorphic chips (Loihi, TrueNorth) could validate theoretical energy efficiency gains, with spike sparsity ($\lambda = 0.24$) suggesting $4\times$ energy reduction. Second, \textbf{unsupervised adaptation}: extending the framework with unsupervised or self-supervised learning would broaden applicability to domains without labelled anomalies. Third, \textbf{heterogeneous graphs}: incorporating multiple edge types would enable detection of sophisticated fraud patterns (e.g., money laundering). Fourth, \textbf{online learning}: the $O(1/\sqrt{t})$ convergence rates of EDHMM and STDP suggest online updates could adapt to non-stationary anomaly distributions without full retraining, with empirical convergence within 50 epochs (Figure~\ref{fig:training_curves}).

In summary, our analysis establishes that (i) spike-timing encoding serves as an effective discriminative prior; (ii) LIF graph attention and EDHMM adapt appropriately to spike-timing patterns; (iii) anomaly scores are well-calibrated and selective (AUC 0.9936, selection lift $>1.0$ across all $\rho$); (iv) computational overhead (12–15\% slower than ChronoSpike) is justified by substantial accuracy gains (5.4–6.5\% higher F1) and memory reduction (19–28\% less); and (v) spike sparsity ($\lambda = 0.24$) offers a practical path to energy-efficient deployment. These findings validate ASTDP-GAD as both theoretically grounded and practically deployable for neuromorphic graph anomaly detection across cybersecurity, social networks, financial fraud detection, and broader networked applications.

\end{document}